\documentclass[english]{article}
\usepackage[T1]{fontenc}
\usepackage{textcomp}
\usepackage[utf8]{inputenc}
\usepackage{xcolor}
\usepackage{babel}
\usepackage{array}
\usepackage{float}
\usepackage{wrapfig}
\usepackage{booktabs}
\usepackage{units}
\usepackage{url}
\usepackage{bm}
\usepackage{multirow}
\usepackage{amsmath}
\usepackage{amsthm}
\usepackage{amssymb}
\usepackage{graphicx}
\usepackage{rotating}
\usepackage[authoryear,round]{natbib}
\PassOptionsToPackage{normalem}{ulem}
\usepackage{ulem}
\usepackage[pdfusetitle,
 bookmarks=true,bookmarksnumbered=false,bookmarksopen=false,
 breaklinks=false,pdfborder={0 0 1},backref=false,colorlinks=false]
 {hyperref}

\makeatletter

\let\SF@@footnote\footnote
\def\footnote{\ifx\protect\@typeset@protect
    \expandafter\SF@@footnote
  \else
    \expandafter\SF@gobble@opt
  \fi
}
\expandafter\def\csname SF@gobble@opt \endcsname{\@ifnextchar[
  \SF@gobble@twobracket
  \@gobble
}
\edef\SF@gobble@opt{\noexpand\protect
  \expandafter\noexpand\csname SF@gobble@opt \endcsname}
\def\SF@gobble@twobracket[#1]#2{}
\providecommand{\tabularnewline}{\\}
\floatstyle{ruled}
\newfloat{algorithm}{tbp}{loa}
\providecommand{\algorithmname}{Algorithm}
\floatname{algorithm}{\protect\algorithmname}

\theoremstyle{plain}
\newtheorem{lem}{\protect\lemmaname}
\theoremstyle{definition}
\newtheorem{defn}{\protect\definitionname}
\theoremstyle{plain}
\newtheorem{thm}{\protect\theoremname}
\theoremstyle{plain}
\newtheorem*{lem*}{\protect\lemmaname}
\theoremstyle{plain}
\newtheorem*{thm*}{\protect\theoremname}

\usepackage[accepted]{tmlr}





\usepackage{amsmath,amsfonts,bm}









\def\eqref#1{equation~\ref{#1}}









\def\1{\bm{1}}










\DeclareMathAlphabet{\mathsfit}{\encodingdefault}{\sfdefault}{m}{sl}
\SetMathAlphabet{\mathsfit}{bold}{\encodingdefault}{\sfdefault}{bx}{n}













\usepackage[utf8]{inputenc} 
\usepackage[T1]{fontenc}    
\usepackage{url}
\usepackage{amsfonts}
\usepackage{nicefrac}
\usepackage{xcolor}
\usepackage{verbatim}
\usepackage{pifont}
\usepackage{microtype}      
\usepackage{listings}

%
\title{Reinforcement Learning for Causal Discovery\\without Acyclicity Constraints}

\author{\name Bao Duong \email b.duong@deakin.edu.au \\
      \addr Applied Artificial Intelligence Institute (A$^2$I$^2$), Deakin University
      \AND
      \name Hung Le \email thai.le@deakin.edu.au \\
	  \addr Applied Artificial Intelligence Institute (A$^2$I$^2$), Deakin University
      \AND
      \name Biwei Huang \email bih007@ucsd.edu\\
      \addr University of California, San Diego
      \AND
      \name Thin Nguyen \email thin.nguyen@deakin.edu.au \\
	  \addr Applied Artificial Intelligence Institute (A$^2$I$^2$), Deakin University
}

\usepackage{amsmath}
\usepackage{amsthm}
\usepackage{algorithm}
\let\classAND\AND
\let\AND\relax
\usepackage{algorithmic}

\let\AND\classAND
\AtBeginEnvironment{algorithmic}{\let\AND\algoAND}

\definecolor{deepred}{rgb}{0.6,0,0}
\definecolor{codegreen}{rgb}{0,0.6,0}
\definecolor{codegray}{rgb}{0.5,0.5,0.5}
\definecolor{codepurple}{rgb}{0.58,0,0.82}
\definecolor{backcolour}{rgb}{0.95,0.95,0.92}
\definecolor{darkpastelgreen}{rgb}{0.01, 0.75, 0.24}

\lstdefinestyle{mystyle}{
    backgroundcolor=\color{backcolour},   
    commentstyle=\color{codegreen},
    keywordstyle=\color{magenta},
    numberstyle=\tiny\color{codegray},
    stringstyle=\color{codepurple},
    basicstyle=\ttfamily\scriptsize,
	emph={Vec2DAG},          
	emphstyle=\color{deepred},    
    breakatwhitespace=false,         
    breaklines=true,                 
    captionpos=b,                    
    keepspaces=true,                 
    numbers=left,                    
    numbersep=5pt,                  
    showspaces=false,                
    showstringspaces=false,
    showtabs=false,                  
    tabsize=4
}

\lstset{style=mystyle}
\newcommand{\cmark}{{\color{darkpastelgreen}\ding{51}}}
\newcommand{\xmark}{{\color{deepred}\ding{55}}}
\newcommand{\indep}{\perp \!\!\! \perp}

\usepackage{etoc}
\etocsettocdepth{subsection}


\makeatother

\providecommand{\definitionname}{Definition}
\providecommand{\lemmaname}{Lemma}
\providecommand{\theoremname}{Theorem}

\begin{document}
\maketitle

\global\long\def\ours{\mathbf{\mathrm{\mathbf{ALIAS}}}}%

\global\long\def\vecdag{\mathbf{\mathrm{\mathbf{Vec2DAG}}}}%

\global\long\def\pa{\mathrm{pa}}%

\global\long\def\de{\mathrm{de}}%

\global\long\def\an{\mathrm{an}}%

\begin{abstract}
Recently, reinforcement learning (RL) has proved a promising alternative
for conventional local heuristics in score-based approaches to learning
directed acyclic causal graphs (DAGs) from observational data. However,
the intricate acyclicity constraint still challenges the efficient
exploration of the vast space of DAGs in existing methods. In this
study, we introduce $\ours$ (reinforced d\uline{A}g \uline{L}earning
w\uline{I}thout \uline{A}cyclicity con\uline{S}traints), a novel approach
to causal discovery powered by the RL machinery. Our method features
an efficient policy for generating DAGs in just a single step with
an optimal quadratic complexity, fueled by a novel parametrization
of DAGs that directly translates a continuous space to the space of
all DAGs, bypassing the need for explicitly enforcing acyclicity constraints.
This approach enables us to navigate the search space more effectively
by utilizing policy gradient methods and established scoring functions.
In addition, we provide compelling empirical evidence for the strong
performance of $\ours$ in comparison with state-of-the-arts in causal
discovery over increasingly difficult experiment conditions on both
synthetic and real datasets. Our implementation is provided at \url{https://github.com/baosws/ALIAS}.

\end{abstract}

\section{Introduction\label{sec:Introduction}}

The knowledge of causal relationships is crucial to understanding
the nature in many scientific sectors \citep{Sachs_etall_05Causal,Hunermund_Bareinboim_2023Causal,Cao_etal_19Causal}.
This is especially relevant in intricate situations where randomized
experiments are impractical, and therefore, over the last decades,
it has motivated the development of causal discovery methods that
aim to infer cause-effect relationships from purely passive data.
Causal discovery is typically formulated as finding the directed acyclic
graph (DAG) representing the causal model that most likely generated
the observed data. Among the broad literature, score-based methods
are one of the most well-recognized approaches, which assigns each
possible DAG $\mathcal{G}$ a ``score'' $\mathcal{S}\left(\mathcal{D},\mathcal{G}\right)$
quantifying how much it can explain the observed data $\mathcal{D}$,
and then optimize the score over the space of DAGs: 
\begin{equation}
\mathcal{G}^{\star}=\underset{\mathcal{\mathcal{G}\in\text{DAGs}}}{\arg\max}\ \mathcal{S}\left(\mathcal{D},\mathcal{G}\right).\label{eq:score-based}
\end{equation}
 Solving this optimization problem is generally NP-hard \citep{Chickering_1996Learning},
due to the \textit{huge combinatorial search space} that grows super-exponentially
with the number of variables \citep{Robinson_77Counting} and the
\textit{intricate acyclicity constraint} that is difficult to characterize
and maintain efficiently because of its combinatorial nature. Most
methods therefore resort to local heuristics, such as GES \citep{Chickering_02Optimal}
which gradually adds edges into a graph one-by-one while laboriously
maintaining acyclicity. With the introduction of soft DAG characterizations
\citep{Zheng_etal_18DAGs,Yu_etal_2019Dag,Zhang_etal_2022Truncated,Bello_etal_22dagma},
the combinatorial optimization problem above is relaxed to a continuous
optimization problem, allowing for exploring graphs more effectively,
as multiple edges can be added or removed simultaneously in an update.
Alternatively, interventional causal discovery methods exploit available
interventional data to identify the causal graph \citep{Hauser_12Characterization,Brouillard_etal_20Differentiable,Lippe_etal_22Efficient}.
However, our focus in this study is the challenging observational
causal discovery setting where no interventional data is accessible.

Recently, reinforcement learning (RL) has emerged into score-based
causal discovery \citep{Zhu_etal_2020Causal,Wang_etal_2021Ordering,Yang_etal_2023Reinforcement,Yang_etal_2023Causal}
as the improved search strategy, thanks to its exploration and exploitation
abilities. However, existing RL-based methods handle acyclicity either
by fusing the soft DAG regularization from \citet{Zheng_etal_18DAGs}
into the reward \citep{Zhu_etal_2020Causal}, which wastes time for
exploring non-DAGs but still does not prohibit all cycles \citep{Wang_etal_2021Ordering},
or designing \textit{autoregressive} \textit{policies} \citep{Wang_etal_2021Ordering,Deleu_etal_2022Bayesian,Yang_etal_2023Reinforcement,Yang_etal_2023Causal,Deleu_etal_24Joint}
that hinder parallel DAG generation and necessitates learning the
transition policies over a multitude of discrete state-action combinations.

\begin{table}
\caption{\textbf{Positioning $\protect\ours$ among the score-based causal
discovery literature.}\label{tab:Conceptual-comparison}}

\begin{centering}
\resizebox{\columnwidth}{!}{%
\begin{tabular}{ccccccccc}
\toprule 
\textbf{Search} & \multirow{2}{*}{\textbf{Method (year)}} & \textbf{Search} & \textbf{Generation} & \textbf{Generation} & \multirow{2}{*}{\textbf{Constraint$^{\ddagger}$}} & \textbf{Acyclicity} & \textbf{Nonlinear} & \textbf{Differentiable score}\tabularnewline
\textbf{type$^{\dagger}$} &  & \textbf{space} & \textbf{steps} & \textbf{complexity} &  & \textbf{assurance$^{\flat}$} & \textbf{data} & \textbf{not required}\tabularnewline
\midrule
\multirow{6}{*}{\begin{turn}{90}
Local
\end{turn}} & GES (2002) \citep{Chickering_02Optimal} & \textbf{\color{darkpastelgreen}DAGs} & \multirow{6}{*}{-} & \multirow{6}{*}{-} & Hard & \cmark & \cmark & \cmark\tabularnewline
 & NOTEARS (2020) \citep{Zheng_etal_18DAGs,Zheng_etal_20Learning} & Graphs &  &  & Soft & \xmark & \cmark & \xmark\tabularnewline
 & NOCURL (2021) \citep{Yu_etal_2021Dags} & \textbf{\color{darkpastelgreen}DAGs} &  &  & \textbf{\color{darkpastelgreen}None} & \cmark & \xmark & \xmark\tabularnewline
 & DAGMA (2022) \citep{Bello_etal_22dagma} & Graphs &  &  & Soft & \xmark & \cmark & \xmark\tabularnewline
 & BaDAG (2023) \citep{Annadani_etal_2023Bayesdag} & \textbf{\color{darkpastelgreen}DAGs} &  &  & \textbf{\color{darkpastelgreen}None} & \cmark & \cmark & \xmark\tabularnewline
 & COSMO (2024) \citep{Massidda_etal_2023Constraint} & Graphs &  &  & \textbf{\color{darkpastelgreen}None} & \xmark & \cmark & \xmark\tabularnewline
\midrule
\multirow{7}{*}{\begin{turn}{90}
\textbf{\color{darkpastelgreen}Global}
\end{turn}} & RL-BIC (2020) \citep{Zhu_etal_2020Causal} & Graphs & \textbf{\color{darkpastelgreen}Single} & \textbf{\color{darkpastelgreen}Quadratic} & Soft & \xmark & \cmark & \cmark\tabularnewline
 & BCD-Nets (2021) \citep{Cundy_etal_2021Bcd} & \textbf{\color{darkpastelgreen}DAGs} & \textbf{\color{darkpastelgreen}Single} & Cubic & \textbf{\color{darkpastelgreen}None} & \cmark & \xmark & \xmark\tabularnewline
 & CORL (2021) \citep{Wang_etal_2021Ordering} & Orderings & Multiple (Autoregressive) & Cubic & Hard & \cmark & \cmark & \cmark\tabularnewline
 & DAG-GFN (2022) \citep{Deleu_etal_2022Bayesian} & \textbf{\color{darkpastelgreen}DAGs} & Multiple (Autoregressive) & Cubic & Hard & \cmark & \xmark & \cmark\tabularnewline
 & GARL (2023) \citep{Yang_etal_2023Causal} & Orderings & Multiple (Autoregressive) & Cubic & Hard & \cmark & \cmark & \cmark\tabularnewline
 & RCL-OG (2023) \citep{Yang_etal_2023Reinforcement} & Orderings & Multiple (Autoregressive) & Cubic & Hard & \cmark & \cmark & \cmark\tabularnewline
\cmidrule{2-9}
 & $\ours$ (Ours) & \textbf{\color{darkpastelgreen}DAGs} & \textbf{\color{darkpastelgreen}Single} & \textbf{\color{darkpastelgreen}Quadratic} & \textbf{\color{darkpastelgreen}None} & \cmark & \cmark & \cmark\tabularnewline
\midrule
 & \multicolumn{8}{l}{{\footnotesize$^{\dagger}$Local methods start with an initial graph
and update it every iteration, while global methods typically concern
with DAG generation parameters.}}\tabularnewline
 & \multicolumn{8}{l}{{\footnotesize$^{\ddagger}$Methods with Hard constraints explicitly
identify and discard the actions that lead to cycles, while Soft constraints
refer to the use of DAG regularizers.}}\tabularnewline
 & \multicolumn{8}{l}{{\footnotesize$^{\flat}$Methods that guarantee acyclicity only in
an annealing limit are considered as do not ensure acyclicity.}}\tabularnewline
\bottomrule
\end{tabular}}
\par\end{centering}
\vspace{-5mm}
\end{table}

In this study, we address the aforementioned limitations of score-based
causal discovery methods with a novel RL approach, named $\ours$
(reinforced d\textbf{\uline{A}}g Learning w\textbf{\uline{I}}thout
\textbf{\uline{A}}cyclicity con\textbf{\uline{S}}traints). Our approach
employs a generative policy that is capable of generating DAGs in
a single-step fashion without any acyclicity regularization or explicit
acyclicity maintenance. This enables us to effectively explore and
exploit the full DAG space with arbitrary score functions, rather
than the restricted ordering space. Specifically, we make the following
contributions in this study:

\begin{figure}[t]
\centering{}\resizebox{1\columnwidth}{!}{%
\begin{tabular}{>{\centering}p{0.5\textwidth}>{\centering}p{0.5\textwidth}}
\includegraphics[width=1\linewidth]{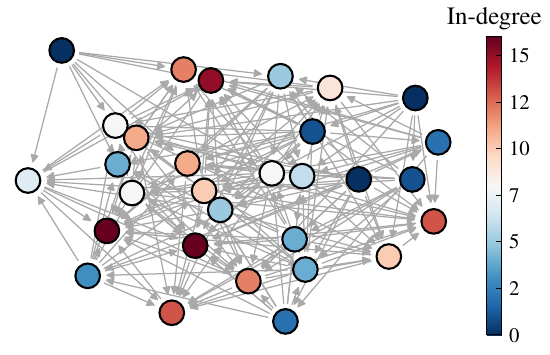} & \includegraphics[width=1\linewidth]{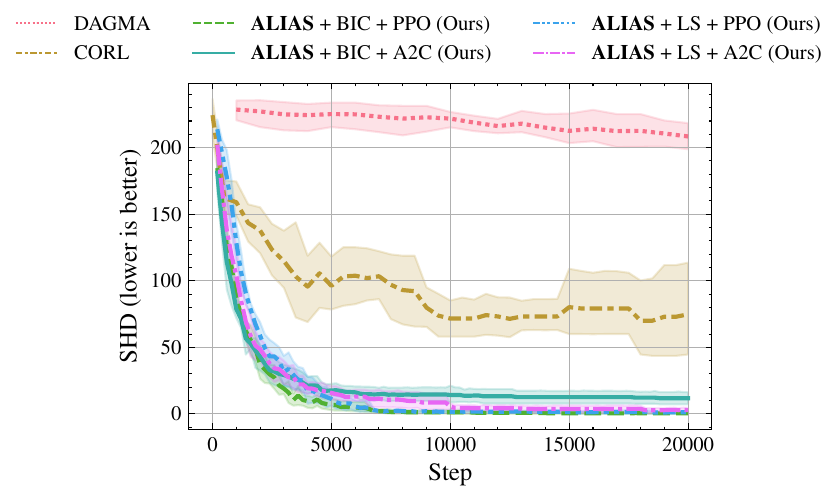}\tabularnewline
(a) & (b)\tabularnewline
\includegraphics[width=0.9\linewidth]{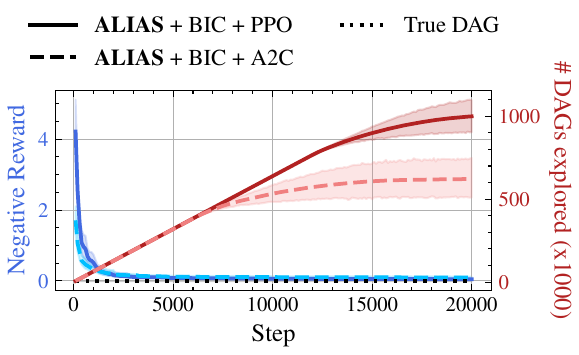} & \includegraphics[width=0.9\linewidth]{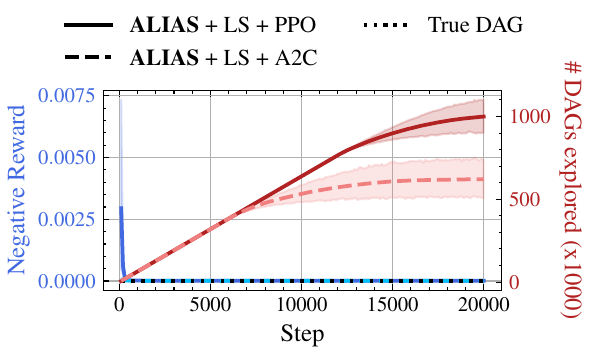}\tabularnewline
(c) & (d)\tabularnewline
\end{tabular}}\caption{Using merely observational data, the proposed $\protect\ours$ method
\textbf{correctly identifies all edges} of (a) a very complex causal
dataset with extremely dense connections (linear-Gaussian data with
Erd\H{o}s-R\'{e}nyi graph of 30 nodes and expected in-degree of
8). (b) We evaluate DAG learning performance in Structural Hamming
Distance (SHD, lower is better) on 5 of such datasets with respect
to the first $20\,000$ training steps of four $\protect\ours$ variants,
by combining scoring functions Bayesian Information Criterion (BIC)
\& Least Squares (LS) with RL methods PPO \citep{Schulman_etal_2017Proximal}
\& A2C \citep{Mnih_etal_2016Asynchronous}, in comparison with the
best baselines in this setting, namely CORL \citep{Wang_etal_2021Ordering}
and DAGMA \citep{Bello_etal_22dagma} (as evaluated in Section~\ref{subsec:Linear-Data}).
The best method in this scenario is our $\protect\ours+\text{BIC}+\text{PPO}$
variant with zero SHD at the end of the learning process. (c) \& (d)
For both scores, our method's rewards always approach those of the
ground truth DAGs very sharply, which is made possible largely thanks
to our efficient DAG parameterization, as well as the continuous exploitation
and exploration of the RL algorithms, especially PPO.\label{fig:learning-curve}}
\end{figure}

\vspace{-2mm}

\begin{enumerate}
\item At the core of $\ours$, taking inspirations from NoCurl \citep{Yu_etal_2021Dags}
and subsequent works \citep{Massidda_etal_2023Constraint,Annadani_etal_2023Bayesdag},
we design $\vecdag$, a \textit{surjective map} from a continuous
domain into the space of all DAGs. We prove that given a fixed number
of nodes, this function can translate an unconstrained real-valued
vector into a binary matrix that represents a valid DAG, and vice
versa--there always exists a vector mapped to every possible DAGs.\vspace{-2mm}
\item Thanks to $\vecdag$, we are able to devise a policy outputting actions
in the continuous domain that are directly associated with high-reward
DAGs. The policy is one-step, unconstrained, and costs only a quadratic
number of parallel operations w.r.t. the number of nodes, allowing
our agent to explore the DAG space very effectively with arbitrary
RL method and scoring function. To our knowledge, $\ours$ is the
first score-based causal discovery method based on RL that can explore
the exact space of DAGs with an efficient one-step generation, rendering
it an efficient realization of Eq.~(\ref{eq:score-based}).\vspace{-2mm}
\item We demonstrate the effectiveness of the proposed $\ours$ method in
comparison with various state-of-the-arts on a systematic set of numerical
evaluations on both synthetic and real-world datasets. Empirical evidence
shows that our method consistently surpasses all state-of-the-art
baselines under multiple evaluation metrics on varying degrees of
nonlinearity, dimensionality, graph density, and model misspecification.
For example, our method can achieve an $\text{SHD}=0.2\pm{\color{red}{\color{black}0.2}}$
on very dense graphs with 30 nodes and 8 parents per node on average,
and on large graphs with 200 nodes and 400 edges on average, $\ours$
can still obtain a very low SHD of $2.0\pm0.9$.
\end{enumerate}
We summarize the advantages of $\ours$ compared with the state-of-the-arts
in causal discovery in Table~\ref{tab:Conceptual-comparison}. In
addition, Figure~\ref{fig:learning-curve} shows a snapshot of $\ours$'s
strong performance in a case of highly complex structures, in which
our method can achieve absolute accuracy, while the best baselines
in this setting still struggle.

\section{Related Work\label{sec:Related-Work}}

\paragraph{Constraint-based\protect\footnote{Note that the term ``constraint'' here largely refers to statistical
constraints, such as conditional independence, while ``constraint''
in our method refers to the acyclicity enforcement.} methods}

like PC, FCI \citep{Spirtes_Glymour_91Algorithm,Spirtes_etal_00Causation}
and RFCI \citep{Colombo_etal_12Learning} form a prominent class of
causal discovery approaches. They first exploit conditional independence
relationships statistically exhibited in data via a series of hypothesis
tests to recover the skeleton, which is the undirected version of
the DAG, and then orient the remaining edges using probabilistic graphical
rules. However, their performance heavily relies on the quality of
the conditional independence tests, which can deteriorate rapidly
with the number of conditioning variables \citep{Ramdas_etal_2015Decreasing},
rendering them unsuitable for large or dense graphs.

\paragraph{Score-based methods}

is another major class of DAG learners, where each DAG is assigned
a properly defined score based on its compatibility with observed
data, then the DAG learning problem becomes the optimization problem
for the DAG yielding the best score.  Score-based methods can be
further categorized based on the search approach as follows.

\subparagraph{Combinatorial greedy search methods}

such as GES \citep{Chickering_02Optimal} and FGES \citep{Ramsey_17Million}
resort to greedy heuristics to reduce the search space and enforce
acyclicity by adding one edge at a time after explicitly checking
that it would not introduce any cycle, yet this comes at the cost
of the sub-optimality of the result.

\subparagraph{Continuous optimization methods}

improve upon combinatorial optimization methods in scalability by
the ingenious smooth acyclicity constraint, introduced and made popular
with NOTEARS \citep{Zheng_etal_18DAGs}, which turns said combinatorial
optimization into a continuous optimization problem. This enables
bypassing the adversary between combinatorial enumeration and scalability
to allow for exploring the DAG space much more effectively, where
multiple edges can be added or removed in an update. Following developments,
e.g., \citet{Yu_etal_2019Dag,Lee_etal_2019Scaling,Zheng_etal_20Learning,Ng_etal_2020Role,Yu_etal_2021Dags,Zhang_etal_2022Truncated,Wei_etal_2020Dags}
contribute to extending and improving the soft DAG characterization
in scalability and convergence. Notably, unconstrained DAG parameterizations
are also proposed by \citet{Yu_etal_2021Dags} and \citet{Massidda_etal_2023Constraint},
which simplify the optimization problem from a constrained to an unconstrained
problem. However, continuous optimization methods restrict the choices
of the score to be differentiable functions, which exclude many well-studied
scores such as BIC, BDe, MDL, or independence-based scores \citep{Buhlmann_etall_14Cam}..

\subparagraph{Reinforcement learning methods}

have emerged in recent years as the promising replacement for the
greedy search heuristics discussed so far, thanks to its search ability
via exploration and exploitation. As the pioneer in this line of work,
\citet{Zhu_etal_2020Causal} introduced the first RL agent that is
trained to generate high-reward graphs. To handle acyclicity, they
incorporate the soft DAG constraint from \citet{Zheng_etal_18DAGs}
into the reward function to penalize cyclic graphs. Unfortunately,
this may not discard all cycles in the solution, but also increase
computational cost drastically due to the unnecessary reward calculations
for non-DAGs. To mitigate this issue, subsequent studies \citep{Wang_etal_2021Ordering,Yang_etal_2023Causal,Yang_etal_2023Reinforcement}
turn to finding the best-scoring causal ordering instead and subsequently
apply variable selection onto the result to obtain a DAG, which naturally
relieves our concerns with cycles. More particularly, CORL \citep{Wang_etal_2021Ordering}
is the first RL method operating on the ordering space, which defines
states as incomplete permutations and actions as the element to be
added next. GARL \citep{Yang_etal_2023Causal} is proposed to enhance
ordering generation by exploiting prior structural knowledge with
the help of graph attention networks. Meanwhile, RCL-OG \citep{Yang_etal_2023Reinforcement}
introduces a notion of order graph that drastically reduces the state
space size from $\mathcal{O}\left(d!\right)$ to only $\mathcal{O}\left(2^{d}\right)$.
It is also worth noting that the emerging generative flow networks
(GFlowNet, \citealp{Bengio_etal_2021Flow,Bengio_etal_2023Gflownet})
offer another technique for learning (distributions of) DAGs \citep{Deleu_etal_2022Bayesian,Deleu_etal_24Joint},
in which the generation of DAGs is also viewed as a sequential generation
problem, where edges are added one-by-one with explicit exclusions
of edges introducing cycles, and the transition probabilities are
learned via flow matching objectives. However, the generation of these
orderings and DAGs are usually formulated as a Markov decision process,
in which elements are iteratively added to the structure in a multiple-step
fashion, which prevents efficient concurrent DAG generations and requires
learning the transition functions, which is computationally involved
given the multitude of discrete state-action combinations.

\textbf{}

\section{Background\label{sec:Preliminaries}}

\subsection{Functional Causal Model}

Let $\mathbf{X}=\left(X_{1},\ldots,X_{d}\right)^{\top}$ be the $d$-dimensional
random (column) vector representing the variables of interest, $\mathbf{x}^{\left(k\right)}=\left(x_{1}^{\left(k\right)},\ldots,x_{d}^{\left(k\right)}\right)^{\top}\in\mathbb{R}^{d}$
denotes the $k$-th observation of $\mathbf{X}$, and $\mathcal{D}=\left\{ \mathbf{x}^{\left(k\right)}\right\} _{k=1}^{n}$
indicates the observational dataset containing $n$ i.i.d. samples
of $\mathbf{X}$. Assuming \textit{causal sufficiency}, that is, there
are no unobserved endogenous variables, the causal structure among
said variables can be described by a DAG $\mathcal{G}=\left(\mathcal{V},\mathcal{E}\right)$
where each vertex $i\in\mathcal{V}=\left\{ 1,\ldots,d\right\} $ corresponds
to a random variable $X_{i}$, and each edge $\left(j\rightarrow i\right)\in\mathcal{E}\subset\mathcal{V}\times\mathcal{V}$
implies that $X_{j}$ is a direct cause of $X_{i}$. We also denote
the set of all direct causes of a variable as its \textit{parents},
i.e., $\pa_{i}=\left\{ j\in\mathcal{V}\mid\left(j\rightarrow i\right)\in\mathcal{E}\right\} $.
The DAG $\mathcal{G}$ is also represented algebraically with a binary
\textit{adjacency matrix} $\mathbf{A}\in\left\{ 0,1\right\} ^{d\times d}$
where the $\left(i,j\right)$-th entry is 1 iff $\left(i\rightarrow j\right)\in\mathcal{E}$.
Then, the space of all (adjacency matrices of) DAGs of $d$ nodes
is denoted by $\mathbb{D}_{d}\subset\left\{ 0,1\right\} ^{d\times d}$.
We follow the Functional Causal Model framework \citep[FCM,][]{Pearl_09Causality}
to assume the data generation process as $X_{i}:=f_{i}\left(\mathbf{X}_{\mathrm{pa}_{i}},E_{i}\right),\ \forall i=1,\ldots,d$,
where the noises $E_{i}$ are mutually independent. In addition, we
also consider \textit{causal minimality} \citep{Peters_etal_2014Causal}
to ensure each function $f_{i}$ is non-constant to any of its arguments.
For any joint distribution over $\mathbf{E}=\left(E_{1},\ldots,E_{d}\right)$,
the functions $\left\{ f_{i}\right\} _{i}^{d}$ induce a joint distribution
over $\mathbf{X}$. The goal of causal discovery is then to recover
the acyclic graph $\mathcal{G}$ from empirical samples of $P\left(\mathbf{X}\right)$.

\subsection{DAG Scoring\label{subsec:DAG-Scoring}}

Among multiple DAG scoring functions well-developed in the literature
\citep{Schwarz_1978Estimating,Heckerman_etal_1995Learning,Rissanen_1978Modeling},
here we focus on the popular Bayesian Information Criterion (BIC)
\citep{Schwarz_1978Estimating}, which is adopted in many works \citep{Chickering_02Optimal,Zhu_etal_2020Causal,Wang_etal_2021Ordering,Yang_etal_2023Reinforcement}
for its flexibility, computational straightforwardness, and consistency.

More particularly, BIC is a parametric score that assumes a model
family for the causal model parameters, e.g., linear-Gaussian, which
comes with a set of parameters $\psi$ containing the parameters of
the causal mechanisms and noise distribution. This score is used to
approximate the likelihood of data given the model after marginalizing
out the model parameters using Laplace's approximation \citep{Schwarz_1978Estimating}:
\begin{equation}
\ln p\left(\mathcal{D}\mid\mathcal{G}\right)=\ln\int p\left(\mathcal{D}\mid\psi,\mathcal{G}\right)p\left(\psi\mid\mathcal{G}\right)\mathrm{d}\psi\approx\frac{\mathcal{S}_{\text{BIC}}}{2}.
\end{equation}
Given a DAG $\mathcal{G}$, the BIC score is defined generally as
follows:
\begin{equation}
\mathcal{S}_{\text{BIC}}\left(\mathcal{D},\mathcal{G}\right)=2\ln p\left(\mathcal{D}\mid\hat{\psi},\mathcal{G}\right)-\left|\mathcal{G}\right|\ln n,\label{eq:bic}
\end{equation}

\noindent where $\hat{\psi}$ is the maximum-likelihood estimator
of $p\left(\mathcal{D}\mid\psi,\mathcal{G}\right)$, $\left|\mathcal{G}\right|$
is the number of edges in $\mathcal{G}$, and $n$ is the number of
samples in $\mathcal{D}$.

The BIC is consistent in the sense that if a causal model is identifiable,
asymptotically, the true DAG has the highest score among all other
DAGs \citep{Haughton_88Choice}. Meanwhile, for limited samples, it
prevents overfitting by penalizing edges that do not improve the log-likelihood
significantly. More formally:
\begin{lem}
\label{lem:BIC-consistency}Let $\mathcal{G}^{\ast}$ be the ground
truth DAG of an identifiable SCM satisfying causal minimality \citep{Peters_etal_2014Causal}
(i.e., there are no redundant edges) inducing the dataset $\mathcal{D}$,
and let $n$ be the sample size of $\mathcal{D}$. Then, in the limit
of large $n$, $\mathcal{S}_{\text{BIC}}\left(\mathcal{D},\mathcal{G}^{\ast}\right)>\mathcal{S}_{\text{BIC}}\left(\mathcal{D},\mathcal{G}\right)$
for any $\mathcal{G}\neq\mathcal{G}^{\ast}$.
\end{lem}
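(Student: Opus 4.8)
The plan is to exploit the two-term structure of the score, writing $\mathcal{S}_{\text{BIC}}(\mathcal{D},\mathcal{G}) = 2\,\ell_n(\mathcal{G}) - |\mathcal{G}|\ln n$, where $\ell_n(\mathcal{G}) := \ln p(\mathcal{D}\mid\hat\psi,\mathcal{G})$ is the maximized log-likelihood under the model family associated with $\mathcal{G}$. Since $\mathbb{D}_d$ is finite, it suffices to show that for each fixed $\mathcal{G}\neq\mathcal{G}^{\ast}$ the gap $\mathcal{S}_{\text{BIC}}(\mathcal{D},\mathcal{G}^{\ast})-\mathcal{S}_{\text{BIC}}(\mathcal{D},\mathcal{G})$ diverges to $+\infty$ almost surely as $n\to\infty$, and then take a union bound over the finitely many alternatives. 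I would split the alternatives $\mathcal{G}$ into two classes according to whether the data-generating distribution $P(\mathbf{X})$ lies in the model family indexed by $\mathcal{G}$ --- call these the \emph{representable} and the \emph{non-representable} DAGs --- and treat the two leading terms asymmetrically in each class.

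For a non-representable $\mathcal{G}$, the first step is a law-of-large-numbers argument: $\tfrac{1}{n}\ell_n(\mathcal{G})$ converges to $\E_P[\ln p(\mathbf{x}\mid\psi^{\ast}_{\mathcal{G}},\mathcal{G})]$, where $\psi^{\ast}_{\mathcal{G}}$ is the pseudo-true (KL-projection) parameter, while $\tfrac{1}{n}\ell_n(\mathcal{G}^{\ast})$ converges to the true expected log density since $\mathcal{G}^{\ast}$ fits $P$ exactly. Because $\mathcal{G}$ cannot represent $P$, the divergence $\KL(P\,\|\,P_{\psi^{\ast}_{\mathcal{G}},\mathcal{G}})$ is strictly positive, so the two limits differ by a positive constant. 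Hence $2\,\ell_n(\mathcal{G}^{\ast})-2\,\ell_n(\mathcal{G}) = \Theta(n)$, which swamps the $O(\ln n)$ difference in the edge-penalty terms, giving the desired divergence.

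For a representable $\mathcal{G}\neq\mathcal{G}^{\ast}$ the leading log-likelihood terms cancel to first order, and the argument shifts to the penalty. Here I would first invoke the structural hypotheses: identifiability together with causal minimality forces any representable competitor to have strictly more edges than $\mathcal{G}^{\ast}$. Concretely, deleting redundant edges from $\mathcal{G}$ yields a causally minimal DAG that still represents $P$, which by identifiability must be $\mathcal{G}^{\ast}$ itself; since $\mathcal{G}\neq\mathcal{G}^{\ast}$ this forces $\mathcal{G}^{\ast}\subsetneq\mathcal{G}$ (in particular $|\mathcal{G}|>|\mathcal{G}^{\ast}|$, and the two models are nested), so the penalty gap $(|\mathcal{G}|-|\mathcal{G}^{\ast}|)\ln n$ grows like $\Theta(\ln n)$ and is positive. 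It then remains to show that the log-likelihood fluctuation is of strictly smaller order, namely $\ell_n(\mathcal{G})-\ell_n(\mathcal{G}^{\ast}) = o(\ln n)$ almost surely; this is standard nested-model asymptotics (a Wilks-type $\chi^2$ fluctuation, tightened to $O(\ln\ln n)$ by the law of the iterated logarithm), so the positive penalty gap dominates.

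The main obstacle is the analytic control in the representable case: establishing $\ell_n(\mathcal{G})-\ell_n(\mathcal{G}^{\ast})=o(\ln n)$ rigorously requires regularity of the parametric family (consistency and asymptotic normality of the MLE, positive-definite Fisher information, and a curved-exponential-family structure), and the cleanest route is to invoke the general BIC-consistency result of \citet{Haughton_88Choice} for exponential families rather than reprove these fluctuation bounds from scratch. The remaining work is bookkeeping: verifying that the adopted model class (e.g.\ linear-Gaussian or an additive-noise family) meets these regularity conditions and is genuinely identifiable, so that the representable/non-representable dichotomy above is exhaustive, after which the finite union bound over $\mathbb{D}_d$ completes the argument.
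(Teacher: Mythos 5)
Your proposal is correct and follows essentially the same route as the paper's proof: split the alternatives according to whether the competitor's model family can represent the data-generating distribution (which, by minimality and identifiability, happens exactly when $\mathcal{G}^{\ast}\subsetneq\mathcal{G}$), let the $\Theta(n)$ KL-divergence gap dominate in the non-representable case, and let the $(|\mathcal{G}|-|\mathcal{G}^{\ast}|)\ln n$ penalty dominate in the nested case. If anything, you are more careful than the paper at the one delicate step --- the paper asserts the likelihood difference for a supergraph ``vanishes,'' whereas you correctly identify it as a Wilks-type fluctuation that must be shown to be $o(\ln n)$ and defer to \citet{Haughton_88Choice} for the requisite regularity.
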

The proof can be found in Appendix~\ref{subsec:Proof-of-BIC-consistency}.

As an example, for additive noise models (ANM) $X_{i}:=f_{i}\left(\mathbf{X}_{\mathrm{pa}_{i}}\right)+E_{i},\ \forall i=1,\ldots,d$
with Gaussian noises $E_{i}\sim\mathcal{N}\left(0;\sigma_{i}^{2}\right)$,
the BIC-based score can be specified as 
\[
\mathcal{S}_{\text{BIC-NV}}\left(\mathcal{D},\mathcal{G}\right)=-(n\sum_{i=1}^{d}\ln\frac{\mathrm{SSR}_{i}}{n}+\left|\mathcal{G}\right|\ln n),
\]
where $\mathrm{SSR}_{i}=\sum_{k=1}^{n}(\hat{x}_{i}^{\left(k\right)}-x_{i}^{\left(k\right)})^{2}$
is the sum of squared residuals after regressing $X_{i}$ on its parents
in $\mathcal{G}$, and we adopt the convention that $\left|\mathcal{G}\right|$
is the number of edges in $\mathcal{G}$. Additionally assuming equal
noise variances gives us with 
\[
\mathcal{S}_{\mathrm{BIC-EV}}\left(\mathcal{D},\mathcal{G}\right)=-(nd\ln\frac{\sum_{i=1}^{d}\mathrm{SSR}_{i}}{nd}+\left|\mathcal{G}\right|\ln n).
\]
The derivations of BIC scores are presented in Appendix~\ref{subsec:Derivation-of-BIC}.
A simpler yet widely adopted alternative is the least squares (LS)
\citep{Zheng_etal_18DAGs,Lachapelle_etal_2020Gradient,Yu_etal_2021Dags,Bello_etal_22dagma,Massidda_etal_2023Constraint}.
With an additional $l_{0}$ regularization, we define the LS score
as $\mathcal{S}_{\text{LS}}\left(\mathcal{D},\mathcal{G}\right)=-(\sum_{i=1}^{d}\mathrm{SSR}_{i}+\lambda_{0}\left|\mathcal{G}\right|)$,
where $\lambda_{0}\geq0$ is a hyper-parameter for penalizing dense
graphs. In our empirical studies, following common practices \citep{Zhu_etal_2020Causal,Wang_etal_2021Ordering,Yang_etal_2023Causal,Yang_etal_2023Reinforcement},
linear regression is used for linear data and Gaussian process regression
is adopted for nonlinear data. That being said, any valid regression
technique can be seamlessly integrated into our method.

\subsection{DAG Representations}

Typically, to search over the space of DAGs, modern causal discovery
methods either optimize over directed graphs with differentiable DAG
regularizers \citep{Zheng_etal_18DAGs,Yu_etal_2019Dag,Lee_etal_2019Scaling,Zheng_etal_20Learning,Ng_etal_2020Role,Wei_etal_2020Dags,Zhu_etal_2020Causal,Zhang_etal_2022Truncated},
or search over causal orderings and then apply variable selection
to suppress redundant edges \citep{Cundy_etal_2021Bcd,Charpentier_etal_2022Differentiable,Chen_etal_19Causal,Wang_etal_2021Ordering,Rolland_etal_22Score,Sanchez_etal_22Diffusion,Yang_etal_2023Causal}.
The former approach does not guarantee the acyclicity of the returned
graph with absolute certainty, while the latter approach faces challenges
in efficiently generating permutations. For instance, in \citet{Cundy_etal_2021Bcd}
the permutation matrix representing the causal ordering is parametrized
by the Sinkhorn operator \citep{Sinkhorn_1964Relationship} followed
by the Hungarian algorithm \citep{Kuhn_1955Hungarian} with a considerable
cost of $\mathcal{O}\left(d^{3}\right)$. Other examples include ordering-based
RL methods \citep{Wang_etal_2021Ordering,Yang_etal_2023Reinforcement}
that cost at least $\mathcal{O}\left(d^{2}\right)$ just to generate
a single ordering element, thus totaling an $\mathcal{O}\left(d^{3}\right)$
complexity for generating a DAG.

Our work takes inspiration from \citet{Yu_etal_2021Dags}, where a
novel unconstrained characterization of \textit{weighted} adjacency
matrices of DAGs is proposed. Particularly, a ``node potential''
vector $\mathbf{p}\in\mathbb{R}^{d}$ is introduced to model an \textit{implicit}
causal ordering, where $i$ precedes $j$ if $p_{j}>p_{i}$. Hence,
the weight matrix
\begin{equation}
\mathbf{A}=\mathbf{W}\odot\mathrm{ReLU}\left(\mathrm{grad}\left(\mathbf{p}\right)\right),\label{eq:nocurl}
\end{equation}

\noindent where $\mathbf{W}\in\mathbb{R}^{d\times d}$ and $\mathrm{grad}\left(\mathbf{p}\right)_{ij}:=p_{j}-p_{i}$
is the gradient flow operator \citep{Lim_20Hodge}, can be shown to
correspond to a valid DAG \citep[Theorem 2.1,][]{Yu_etal_2021Dags}.
However, this weight matrix is only applicable for linear models,
and the representation is only used as a refinement for the result
returned by a constrained optimization problem. An alternative to
this characterization is recently introduced by \citet{Massidda_etal_2023Constraint},
where $\mathbf{A}=\mathbf{W}\odot\mathrm{sigmoid}\left(\mathrm{grad}\left(\mathbf{p}\right)/\tau\right)$,
yet this approach only ensures acyclicity at the limit of the annealing
temperature $\tau\rightarrow0^{+}$, which is usually not exactly
achieved in practice. Additionally, the equivalent DAG formulation
of \citet{Annadani_etal_2023Bayesdag} uses the node potential $\mathbf{p}$
to represent a smooth \textit{explicit} permutation matrix: $\sigma\left(\mathbf{p}\right):=\lim_{\tau\rightarrow0^{+}}\mathrm{Sinkhorn}(\mathbf{p}\cdot\left[1\ldots d\right]/\tau)$,
again necessitating a temperature scheduler and the expensive Sinkhorn
operator, which reportedly requires 300 iterations to converge and
an $\mathcal{O}\left(d^{3}\right)$ complexity of the Hungarian algorithm,
for generating a single DAG.

\section{$\protect\vecdag$: Unconstrained Parametrization of DAGs\label{sec:Unconstrained-Parametrization}}

\subsection{The $\protect\vecdag$ Operator\label{subsec:Vec2DAG}}

Extending from the formulation in Eq.~(\ref{eq:nocurl}), we design
a deterministic translation from an unconstrained continuous space
to the space of general \textit{binary} adjacency matrices of all
DAGs, not restricted to linear models. To be more specific, in addition
to the node potential vector $\mathbf{p}\in\mathbb{R}^{d}$, we introduce
a strictly upper-triangular ``edge potential'' matrix $\mathbf{E}\in\mathbb{R}^{d\times d}$,
which can be described using $\frac{d\cdot\left(d-1\right)}{2}$ parameters.
We then combine them with $\mathbf{p}$ to create a unified representation
vector $\mathbf{z}\in\mathbb{S}_{d}=\mathbb{R}^{d\cdot\left(d+1\right)/2}$,
which is the parameter space of all $d$-node DAGs in our method.
Furthermore, we denote by $\mathbf{p}\left(\mathbf{z}\right)$ and
$\mathbf{E}\left(\mathbf{z}\right)$ the node and edge potential components
constituting $\mathbf{z}$, respectively. Specifically, $\mathbf{p}(\mathbf{z})$
represents the node potential vector formed by the first $d$ elements
of $\mathbf{z}$, while $\mathbf{E}(\mathbf{z})$ is the edge potential
matrix, with the elements above the main diagonal derived from the
last $\frac{d\cdot(d-1)}{2}$ elements of $\mathbf{z}$ (see our code
in Figure~\ref{fig:DAG-transformation}). Then, our unconstrained
DAG parametrization $\vecdag_{d}$ for $d$ nodes can be defined as
follows.
\begin{defn}
For all $d\in\mathbb{N}^{+}$ and $\mathbf{z}\in\mathbb{S}_{d}$:

\begin{equation}
\vecdag_{d}\left(\mathbf{z}\right):=H\left(\mathbf{E}\left(\mathbf{z}\right)+\mathbf{E}\left(\mathbf{z}\right)^{\top}\right)\odot H\left(\mathrm{grad}\left(\mathbf{p}\left(\mathbf{z}\right)\right)\right),\label{eq:vec2dag}
\end{equation}

\noindent where $H\left(x\right):=\begin{cases}
1 & \text{if }x>0,\\
0 & \text{otherwise.}
\end{cases}$ is known as the Heaviside step function and $\odot$ is the Hadamard
(element-wise) product operator. 
\end{defn}
The intuition behind Vec2DAG is that the first term in Eq.~(\ref{eq:vec2dag})
defines a symmetric binary adjacency matrix, determining whether two
nodes are connected. The directions of these connections are then
dictated by the second term in Eq.~(\ref{eq:vec2dag}), resulting
in a binary matrix that represents a directed graph. Additionally,
this directed graph is guaranteed to be acyclic due to the use of
the gradient flow operator.

\noindent The procedure to sample a DAG is then denoted as $\mathbf{z}\sim P\left(\mathbf{z}\right),\ \mathbf{A}=\vecdag_{d}\left(\mathbf{z}\right)$,
which can be implemented in a few lines of code, as illustrated in
Figure~\ref{fig:DAG-transformation} of the Appendix. The validity
of our parametrization is justified by the following theorem.
\begin{thm}
\label{thm:surjective}For all $d\in\mathbb{N}^{+}$, let $\vecdag_{d}:\mathbb{S}_{d}\rightarrow\left\{ 0,1\right\} ^{d\times d}$
be defined as in Eq.~(\ref{eq:vec2dag}). Then, $\mathrm{Im}\left(\vecdag_{d}\right)=\mathbb{D}_{d}$,
where $\mathrm{Im}\left(\cdot\right)$ is the Image operator, and
$\mathbb{D}_{d}$ is the space of all $d$-node DAGs.
\end{thm}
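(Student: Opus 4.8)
The plan is to establish the two inclusions $\mathrm{Im}(\vecdag_{d})\subseteq\mathbb{D}_{d}$ and $\mathbb{D}_{d}\subseteq\mathrm{Im}(\vecdag_{d})$ separately, after first recording a convenient rewriting of Eq.~(\ref{eq:vec2dag}). Because $\mathbf{E}(\mathbf{z})$ is strictly upper triangular, the matrix $\mathbf{E}(\mathbf{z})+\mathbf{E}(\mathbf{z})^{\top}$ is symmetric with zero diagonal, so the first factor $S:=H(\mathbf{E}(\mathbf{z})+\mathbf{E}(\mathbf{z})^{\top})$ is a symmetric binary ``skeleton'' matrix with $S_{ii}=0$, while the second factor satisfies $H(\mathrm{grad}(\mathbf{p}(\mathbf{z})))_{ij}=H(p_{j}-p_{i})$, which equals $1$ exactly when $p_{j}>p_{i}$. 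Hence entrywise $A_{ij}=S_{ij}\cdot\mathbf{1}[p_{j}>p_{i}]$: the output lays down an undirected skeleton and orients each present edge toward increasing node potential. Since every entry is a product of values in $\{0,1\}$, the output is binary, so it remains only to verify acyclicity and surjectivity.

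For the inclusion $\mathrm{Im}(\vecdag_{d})\subseteq\mathbb{D}_{d}$, I would argue acyclicity directly from the potential. If $A_{ij}=1$ then $p_{j}>p_{i}$, so the node potential strictly increases along every directed edge, and therefore along every directed path. A directed cycle would force $p_{i}>p_{i}$ for some node, a contradiction; hence the output graph has no directed cycle and is a valid DAG.

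For the reverse inclusion, given an arbitrary target $\mathbf{A}\in\mathbb{D}_{d}$ I would exhibit an explicit preimage. Since $\mathbf{A}$ is acyclic it admits a topological ordering, i.e.\ a bijection $\sigma:\mathcal{V}\to\{1,\dots,d\}$ with $\sigma(i)<\sigma(j)$ whenever $A_{ij}=1$. I set the node potentials to $p_{i}:=\sigma(i)$ (so all potentials are distinct) and set each free upper-triangular entry $E_{ij}$ with $i<j$ to $+1$ if $A_{ij}=1$ or $A_{ji}=1$, and to $-1$ otherwise, collecting these into $\mathbf{z}$. The remaining step is to verify $\vecdag_{d}(\mathbf{z})=\mathbf{A}$ through the rewriting $A_{ij}=S_{ij}\cdot\mathbf{1}[p_{j}>p_{i}]$ by a three-way case analysis: if $A_{ij}=1$, the skeleton bit is on and $\sigma(i)<\sigma(j)$ supplies the orientation bit, yielding $1$; if $A_{ji}=1$, the skeleton bit is on but $\sigma(j)<\sigma(i)$ suppresses the orientation, yielding $0$; and if neither edge is present, the skeleton bit is off, yielding $0$.

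The step I expect to be the crux is the orientation-consistency observation in the surjective direction, namely that a single linear order obtained from one topological sort correctly orients the entire skeleton simultaneously. This works precisely because a DAG contains no $2$-cycles, so between any pair of nodes at most one directed edge exists and the chosen order never conflicts with an edge's direction; this is exactly what lets the symmetric skeleton factor and the antisymmetric orientation factor decouple. Everything else reduces to the definition of $H$ and the distinctness of the constructed potentials.
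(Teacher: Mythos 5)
Your proof is correct and follows essentially the same route as the paper: the forward inclusion via the strictly-increasing-potential / no-cycle contradiction, and the reverse inclusion via an explicit preimage whose edge potentials are $\pm$ a positive constant on connected/unconnected pairs, with the only cosmetic difference being your choice of node potential (topological rank $\sigma(i)$ versus the paper's ancestor count $\left|\an_{i}\right|$, both monotone along edges). The paper's construction additionally squeezes the preimage into an arbitrary $\left(-\epsilon,\epsilon\right)$ box, which it reuses later for the proximity lemma, but that is not needed for surjectivity and your construction could be rescaled to match by the scaling-invariance property.
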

The proof can be found in Appendix~\ref{subsec:Proof-of-Theorem}.
Our formulation directly represents a DAG by a real-valued vector,
which is in stark contrary to existing unconstrained methods that
only aim for a DAG sampler \citep{Cundy_etal_2021Bcd,Wang_etal_2021Ordering,Charpentier_etal_2022Differentiable,Deleu_etal_2022Bayesian,Annadani_etal_2023Bayesdag,Yang_etal_2023Causal,Yang_etal_2023Reinforcement}.
More notably, this approach requires \textbf{no temperature annealing}
like in \citet{Massidda_etal_2023Constraint,Annadani_etal_2023Bayesdag},
and can generate a valid DAG in a \textbf{single step} since sampling
$\mathbf{z}$ can be done instantly in an unconstrained manner. In
addition, this merely costs \textbf{$\bm{\mathcal{O}\left(d^{2}\right)}$
parallelizable operations} compared with the $\mathcal{O}\left(d^{3}\right)$
cost of sequentially generating permutations using the Sinkhorn operator
\citep{Cundy_etal_2021Bcd,Charpentier_etal_2022Differentiable,Annadani_etal_2023Bayesdag}
and multiple-step RL methods \citep{Wang_etal_2021Ordering,Yang_etal_2023Causal,Yang_etal_2023Reinforcement}.
Moreover, our generation technique is one-step, and thus \textbf{does
not require learning any transition function}, which vastly reduces
the computational burden compared with RL methods based on sequential
decisions.

\subsection{Properties of $\protect\vecdag$}

In this section, we show that our parameterization $\vecdag$ has
some important additional properties that set it apart from past formulations.
\begin{lem}
(Scaling and Translation Invariance). \label{lemma:invariance}For
all $d\in\mathbb{N}^{+}$, let $\vecdag_{d}:\mathbb{S}_{d}\rightarrow\left\{ 0,1\right\} ^{d\times d}$
be defined as in Eq.~(\ref{eq:vec2dag}). Then, for all $\mathbf{z}\in\mathbb{S}_{d}$,
$\alpha>0$, and ${\bf \bm{\beta}}\in\mathbb{\mathbb{S}}_{d}$ such
that $\left|\mathbf{p}\left(\bm{\beta}\right)_{i}\right|<\nicefrac{1}{2}\min_{j}\left|\mathbf{p}\left(\mathbf{z}\right)_{i}-\mathbf{p}\left(\mathbf{z}\right)_{j}\right|$
and $\left|\mathbf{E}\left(\bm{{\bf \beta}}\right)_{ij}\right|<\left|\mathbf{E}\left(\mathbf{z}\right)_{ij}\right|$
$\forall i,j$, we have $\mathbb{\vecdag}_{d}\left(\mathbf{z}\right)=\vecdag_{d}\left(\alpha\cdot(\mathbf{z}+{\bf \bm{\beta}})\right)$.
\end{lem}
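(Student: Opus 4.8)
The plan is to verify the claimed identity entrywise, exploiting two elementary facts. First, both the node-potential extractor $\mathbf{p}(\cdot)$ and the edge-potential extractor $\mathbf{E}(\cdot)$ are \emph{linear} in $\mathbf{z}$, so that $\mathbf{E}(\alpha(\mathbf{z}+\bm{\beta})) = \alpha(\mathbf{E}(\mathbf{z}) + \mathbf{E}(\bm{\beta}))$ and $\mathbf{p}(\alpha(\mathbf{z}+\bm{\beta})) = \alpha(\mathbf{p}(\mathbf{z}) + \mathbf{p}(\bm{\beta}))$. Second, the Heaviside step is invariant under multiplication by a positive scalar, i.e. $H(\alpha x) = H(x)$ for every $\alpha > 0$. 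Since $\mathrm{grad}$ is itself linear, these let me immediately peel off $\alpha$: the positive factor passes through $\mathrm{grad}$ and is absorbed by $H$. It therefore suffices to show, entrywise, that $H(\mathbf{E}(\mathbf{z}+\bm{\beta}) + \mathbf{E}(\mathbf{z}+\bm{\beta})^{\top}) = H(\mathbf{E}(\mathbf{z}) + \mathbf{E}(\mathbf{z})^{\top})$ and $H(\mathrm{grad}(\mathbf{p}(\mathbf{z}+\bm{\beta}))) = H(\mathrm{grad}(\mathbf{p}(\mathbf{z})))$, after which the two Hadamard factors in Eq.~(\ref{eq:vec2dag}) match and the result follows.

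For the edge factor I would use that $\mathbf{E}(\mathbf{z})$ is strictly upper triangular, so that the $(i,j)$ entry of $\mathbf{E}(\mathbf{z}) + \mathbf{E}(\mathbf{z})^{\top}$ with $i<j$ equals $\mathbf{E}(\mathbf{z})_{ij}$ (and its symmetric partner for $i>j$), while the diagonal is zero. The hypothesis $|\mathbf{E}(\bm{\beta})_{ij}| < |\mathbf{E}(\mathbf{z})_{ij}|$ then forces $\mathbf{E}(\mathbf{z})_{ij} + \mathbf{E}(\bm{\beta})_{ij}$ to carry the same sign as $\mathbf{E}(\mathbf{z})_{ij}$ (in particular $\mathbf{E}(\mathbf{z})_{ij}\neq 0$, so no new zero is created), whence the two Heaviside values coincide.

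For the node-potential factor the relevant quantity at entry $(i,j)$ is $\mathrm{grad}(\mathbf{p}(\mathbf{z}))_{ij} = \mathbf{p}(\mathbf{z})_j - \mathbf{p}(\mathbf{z})_i$, perturbed by $\delta_{ij} := \mathbf{p}(\bm{\beta})_j - \mathbf{p}(\bm{\beta})_i$. The plan is to bound $|\delta_{ij}| \le |\mathbf{p}(\bm{\beta})_i| + |\mathbf{p}(\bm{\beta})_j|$ by the triangle inequality and then apply the node-potential hypothesis (reading $\min_j$ as $\min_{j\neq i}$) twice: each of $|\mathbf{p}(\bm{\beta})_i|$ and $|\mathbf{p}(\bm{\beta})_j|$ is strictly below $\tfrac{1}{2}|\mathbf{p}(\mathbf{z})_i - \mathbf{p}(\mathbf{z})_j|$. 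Summing yields $|\delta_{ij}| < |\mathbf{p}(\mathbf{z})_j - \mathbf{p}(\mathbf{z})_i|$, so the perturbation cannot flip the sign of the gradient entry and the two Heaviside values agree; the diagonal entries stay $0$ for both, since $\mathrm{grad}$ vanishes there.

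I expect the only real subtlety, rather than an obstacle, to lie in the node-potential step: reading $\min_j$ as $\min_{j \neq i}$ (otherwise it is identically zero) and observing that the $\tfrac{1}{2}$ factor is precisely what makes the two separate bounds on $|\mathbf{p}(\bm{\beta})_i|$ and $|\mathbf{p}(\bm{\beta})_j|$ add up to the full gap $|\mathbf{p}(\mathbf{z})_i - \mathbf{p}(\mathbf{z})_j|$. It is also worth flagging that both hypotheses implicitly require the node potentials of $\mathbf{z}$ to be pairwise distinct and its edge potentials to be nonzero, since otherwise the strict bounds admit no valid $\bm{\beta}$; on that generic set the argument is exact and, unlike the annealing-based formulations, needs no limiting temperature.
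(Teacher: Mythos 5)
Your proposal is correct and follows essentially the same route as the paper's proof: first absorbing the positive scalar $\alpha$ into the Heaviside step, then checking entrywise that the half-gap condition on $\mathbf{p}(\bm{\beta})$ and the magnitude condition on $\mathbf{E}(\bm{\beta})$ preserve the sign of each Heaviside argument. Your explicit triangle-inequality bookkeeping and your remarks on the implicit genericity requirements (pairwise distinct node potentials, nonzero upper-triangular edge potentials, and reading $\min_j$ as $\min_{j\neq i}$) are slightly more careful than the paper's write-up but do not change the argument.
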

This insight is proven in Appendix~\ref{subsec:Proof-of-Invariance}.
Intuitively, this indicates that scaling the potential by any positive
constant $\alpha$ results in the same DAG ($\text{Vec2DAG}(\mathbf{z})=\text{Vec2DAG}(\alpha\cdot\mathbf{z})$),
and translating the potential by an amount $\beta$ (which can be
large, provided it does not change the ordering of $\mathbf{p}$ or
the element-wise positivity of $\mathbf{E}$) also results in the
same DAG ($\text{Vec2DAG}(\mathbf{z})=\text{Vec2DAG}(\mathbf{z}+\bm{{\bf \beta}})$).
In other words, any DAG can be diversely constructed by infinitely
many representations, suggesting a dense parameter space where representations
of different DAGs are close to each other. This leads us to the next
point, which shows an upper bound of the distance between an arbitrary
representation with a representation of any DAG.
\begin{lem}
(Proximity between DAGs).\label{lemma:proximity} Let $\mathbf{z}\in\mathbb{S}_{d}$.
Then, for any DAG $\mathbf{A}\in\mathbb{D}_{d}$ and $\epsilon>0$,
there exists $\mathbf{z}_{\mathbf{A}}$ in the unit ball $B\left(\infty;\left\Vert \mathbf{z}\right\Vert _{\infty}+\epsilon\right)$
around $\mathbf{z}$ such that $\vecdag_{d}\left(\mathbf{z}_{\mathbf{A}}\right)=\mathbf{A}$.
\end{lem}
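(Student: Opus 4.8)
The plan is to combine the surjectivity established in Theorem~\ref{thm:surjective} with the scaling invariance of Lemma~\ref{lemma:invariance}. The underlying geometric picture is that \emph{every} DAG admits representations arbitrarily close to the origin (shrink any representation toward $\mathbf{0}$ by scaling), and the origin lies within $L^\infty$-distance $\left\Vert\mathbf{z}\right\Vert_\infty$ of the given point $\mathbf{z}$; hence a representation of any prescribed target DAG can always be found inside the ball of radius $\left\Vert\mathbf{z}\right\Vert_\infty+\epsilon$ around $\mathbf{z}$.

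First I would invoke Theorem~\ref{thm:surjective}: since $\mathrm{Im}(\vecdag_d)=\mathbb{D}_d$ and $\mathbf{A}\in\mathbb{D}_d$, there exists some $\mathbf{z}_0\in\mathbb{S}_d$ with $\vecdag_d(\mathbf{z}_0)=\mathbf{A}$. Next I would apply the scaling part of Lemma~\ref{lemma:invariance} in its $\bm{\beta}=\mathbf{0}$ form, which is unconditional because the Heaviside function is positively homogeneous of degree zero, $H(\alpha x)=H(x)$ for every $\alpha>0$, and $\mathrm{grad}$ is linear; reading these facts directly off Eq.~(\ref{eq:vec2dag}) gives $\vecdag_d(\alpha\mathbf{z}_0)=\vecdag_d(\mathbf{z}_0)=\mathbf{A}$ for all $\alpha>0$.

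I would then drive $\alpha$ toward zero to make the representation small. If $\mathbf{z}_0=\mathbf{0}$ (the empty DAG), take $\mathbf{z}_{\mathbf{A}}=\mathbf{0}$ directly; otherwise choose any $\alpha\in\bigl(0,\epsilon/\left\Vert\mathbf{z}_0\right\Vert_\infty\bigr)$ and set $\mathbf{z}_{\mathbf{A}}=\alpha\mathbf{z}_0$, so that $\left\Vert\mathbf{z}_{\mathbf{A}}\right\Vert_\infty=\alpha\left\Vert\mathbf{z}_0\right\Vert_\infty<\epsilon$. A single triangle inequality then closes the argument:
\[
\left\Vert\mathbf{z}_{\mathbf{A}}-\mathbf{z}\right\Vert_\infty\le\left\Vert\mathbf{z}_{\mathbf{A}}\right\Vert_\infty+\left\Vert\mathbf{z}\right\Vert_\infty<\epsilon+\left\Vert\mathbf{z}\right\Vert_\infty,
\]
which places $\mathbf{z}_{\mathbf{A}}$ inside the stated ball while preserving $\vecdag_d(\mathbf{z}_{\mathbf{A}})=\mathbf{A}$.

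Because every ingredient is already in hand, I do not anticipate a genuine analytical obstacle; the only two points deserving care are the degenerate case $\mathbf{z}_0=\mathbf{0}$, where scaling cannot shrink the representation and must be handled separately as above, and the need to apply Lemma~\ref{lemma:invariance} in its pure-scaling ($\bm{\beta}=\mathbf{0}$) form, so that no auxiliary hypotheses on distinct node potentials or nonzero edge potentials of $\mathbf{z}_0$ are invoked.
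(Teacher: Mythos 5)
Your proof is correct. It reaches the same final step as the paper --- exhibit a representation of $\mathbf{A}$ with $\ell_\infty$-norm below $\epsilon$ and apply the triangle inequality --- but you obtain that small representation by a different route. The paper simply cites the constructive sub-lemma from the proof of Theorem~\ref{thm:surjective} (the ``sufficient'' direction), which already builds, for any $\mathbf{A}$ and $\epsilon>0$, an explicit preimage in $\left(-\epsilon,\epsilon\right)^{d\cdot\left(d+1\right)/2}$ via normalized ancestor counts and $\pm\nicefrac{1}{2}\epsilon$ edge potentials; given that sub-lemma the proof is a one-line triangle inequality. You instead take an arbitrary preimage $\mathbf{z}_0$ from the surjectivity statement and shrink it by a positive scalar, using the fact that $\vecdag_d(\alpha\mathbf{z}_0)=\vecdag_d(\mathbf{z}_0)$ for all $\alpha>0$. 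Two details you handle well deserve mention: you correctly note that Lemma~\ref{lemma:invariance} as literally stated imposes strict inequalities that $\bm{\beta}=\mathbf{0}$ would violate whenever $\mathbf{z}_0$ has repeated node potentials or zero edge potentials, so you re-derive the pure scaling invariance directly from the positive homogeneity of $H$ and the linearity of $\mathrm{grad}$ rather than invoking the lemma verbatim; and you treat the degenerate case $\mathbf{z}_0=\mathbf{0}$ separately, where scaling cannot shrink anything but the conclusion holds trivially. Your version is slightly more modular (it uses only the publicly stated Theorem~\ref{thm:surjective} plus an elementary scaling fact, not the internal construction of its proof), at the cost of the extra case analysis; the paper's is shorter because the appendix sub-lemma delivers exactly the quantitative bound needed.
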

We provide the proof in Appendix~\ref{subsec:Proof-of-proximity}.
This property is not straightforward in existing constrained optimization
approaches \citep{Zheng_etal_18DAGs,Lee_etal_2019Scaling,Zheng_etal_20Learning,Lachapelle_etal_2020Gradient,Bello_etal_22dagma},
and suggests that the true DAG may be found closer to the initial
position if we start from a smaller scale in our framework. We leverage
this result in our implementation by restricting $\mathbb{S}_{d}$
to a hypercube $\left[-\gamma,\gamma\right]^{d\cdot\left(d+1\right)/2}$
with a relatively small $\gamma=10$. This has the effect of regularizing
the search space but still does not invalidate our Theorem~\ref{thm:surjective},
i.e., we can still reach every possible DAGs when searching in this
restricted space.

\section{$\protect\ours$: Reinforced DAG Learning without Acyclicity Constraints\label{sec:Method}}

\subsection{Motivation for Reinforcement Learning}

Using the $\vecdag$ representation, the score-based causal discovery
problem may seem to simplify into a maximization problem: $\mathbf{z}^{*}=\underset{\mathbf{z}\in\mathbb{R}^{d\cdot\left(d+1\right)/2}}{\arg\max}\mathcal{S}_{\text{BIC}}\left(\mathcal{D},\vecdag\left(\mathbf{z}\right)\right)$,
which could, in principle, be addressed to certain extents using off-the-shelf
black-box optimization techniques such as Bayesian optimization, which
is one of the most popular blackbox optimization methods. However,
solving this optimization problem is far from straightforward due
to the high dimensionality of the search space, which grows quadratically
with the number of nodes (e.g., for 30 nodes, the space is 465-dimension).

Since RL is the only black-box optimization approach that has been
studied in the score-based causal discovery literature (at the time
writing this manuscript, to the best of our knowledge), we align with
the established line of works \citep{Zhu_etal_2020Causal,Wang_etal_2021Ordering,Yang_etal_2023Reinforcement,Yang_etal_2023Causal}
to specifically focus on leveraging RL to solve this optimization
problem. The idea is that an RL agent with a stochastic policy can
autonomously decide where to explore based on the uncertainty of the
learned policy, which is continuously updated through incoming reward
signals \citep{Zhu_etal_2020Causal}. That said, we note that, our
method, which only involves one-step trajectories as shown below,
is better perceived as a policy gradient approach rather than a general
RL method that requires a multi-step Markov decision process (MDP).

As shown below, policy gradient provides our method with built-in
exploration-exploitation capabilities and linear scalability with
respect to both dimensionality and sample size, making it a practical
choice for this problem. In addition, our policy gradient point of
view also enables the adaptability of various established methods,
such as vanilla policy gradient \citep{Sutton_etal_1999Policy}, A2C
\citep{Mnih_etal_2016Asynchronous}, and PPO \citep{Schulman_etal_2017Proximal},
to efficiently optimize our objective, effectively establishing a
clear association between our proposed approach and policy gradient
in RL.

\subsection{Policy Gradient for DAG Search\label{subsec:Policy-Gradient}}

\paragraph{Policy and Action.}

Utilizing RL, we seek for a policy $\pi$ that outputs a continuous
action $\mathbf{z}\in\mathbb{S}_{d}=\mathbb{R}^{d\cdot\left(d+1\right)/2}$,
which is the parameter space of DAGs of $d$ nodes. In this work,
we consider stochastic policies for better exploration, i.e., we parametrize
our policy by an isotropic Gaussian distribution with learnable means
and variances: $\pi_{\bm{\theta}}\left(\mathbf{z}\right)=\mathcal{N}\left(\mathbf{z};\bm{\mu}_{\bm{\theta}},\text{diag}\left(\mathbf{\bm{\sigma}}_{\bm{\theta}}^{2}\right)\right)$.
Since our policy generates a DAG representation in just one step,
every trajectory starts with the same initial state and terminates
after only one transition, so the agent does not need to be aware
of the state in our method. We note that, similar to RL-BIC \citep{Zhu_etal_2020Causal},
the one-step nature of the environment does not preclude the application
of RL in our approach. This is because a one-step environment is simply
a special case of a MDP, which remains compatible with most RL algorithms.

\paragraph{Reward.}

The reward of an action in our method is set as the graph score of
the DAG induced by that action with respect to the observed dataset
$\mathcal{D}$ (Section~\ref{subsec:DAG-Scoring}), and divided by
$n\times d$ to maintain numerical stability without modifying the
monotonicity of the score: 
\begin{equation}
\mathcal{R}\left(\mathbf{z}\right):=\frac{1}{n\times d}\mathcal{S}\left(\mathcal{D},\textbf{Vec2DAG}\left(\mathbf{z}\right)\right).\label{eq:reward}
\end{equation}

\paragraph{Policy Gradient Algorithm.}

\begin{algorithm}[t]
\caption{$\protect\ours$ with vanilla policy gradient for causal discovery.\label{alg:ALIAS}}

\begin{algorithmic}[1]

\REQUIRE{Dataset $\mathcal{D}=\left\{ \mathbf{x}^{\left(k\right)}\right\} _{k=1}^{n}$,
score function $\mathcal{S}\left(\mathcal{D},\cdot\right)$, batch
size $B$, and learning rate $\eta$.}

\ENSURE{Estimated causal DAG $\hat{\mathcal{G}}$.}

\WHILE{not terminated}

\STATE Draw a minibatch of $B$ actions from the policy: $\left\{ \mathbf{z}^{\left(k\right)}\sim\pi_{\bm{\theta}}\right\} _{k=1}^{B}$.

\STATE Collect rewards $\left\{ r^{\left(k\right)}:=\frac{1}{n\times d}\mathcal{S}\left(\mathcal{D},\textbf{Vec2DAG}_{d}\left(\mathbf{z}^{\left(k\right)}\right)\right)\right\} _{k=1}^{B}$.\hfill$\triangleright$
Sec.~\ref{subsec:Vec2DAG}

\STATE Update policy as: $\bm{\theta}:=\bm{\theta}+\eta\left(\frac{1}{B}\sum_{k=1}^{B}\nabla_{\bm{\theta}}\ln\pi_{\bm{\theta}}\left(\mathbf{z}^{\left(k\right)}\right)\cdot r^{\left(k\right)}\right)$.\hfill$\triangleright$
Sec.~\ref{subsec:Policy-Gradient}

\ENDWHILE

\STATE $\mathbf{z}\sim\pi_{{\bf \theta}}$, $\hat{\mathcal{G}}:=\mathrm{\vecdag}\left(\mathbf{z}\right)$.

\STATE Post-process $\hat{\mathcal{G}}$ by pruning if needed and
return.\hfill$\triangleright$ Sec.~\ref{subsec:Post-Processing}

\end{algorithmic}
\end{algorithm}

Since our action space is continuous, we employ policy gradient methods,
which are well established for handling continuous actions, rather
than the value-based approach as in recent RL-based techniques \citep{Wang_etal_2021Ordering,Yang_etal_2023Causal,Yang_etal_2023Reinforcement}.
The training objective is to maximize the expected return defined
as $\mathcal{J}\left(\bm{\mathbf{\theta}}\right)=\mathbb{E}_{\mathbf{z}\sim\pi_{\bm{\theta}}}\left[\mathcal{R}\left(\mathbf{z}\right)\right]$.

Under identifiable causal models, causal minimality, and a consistent
scoring function, the optimal policy obtained by maximizing this objective
will return the true DAG:
\begin{lem}
\label{pg-guarantee}Assuming causal identifiability and causal minimality,
that is, there is a unique causal model with no redundant edges that
can produce the observed dataset, and BIC score is used to define
the reward $\mathcal{R}\left(\mathbf{z}\right)$ as in Eq.~(\ref{eq:reward}).
Let $n$ be the sample size of the observed dataset $\mathcal{D}$,
$\theta^{*}\in\underset{\mathbf{\theta}\in\Theta}{\arg\max}\mathbb{E}_{\mathbf{z}\sim\pi_{\bm{\theta}}}\left[\mathcal{R}\left(\mathbf{z}\right)\right]$,
where $\pi_{\bm{\theta}}\left(\mathbf{z}\right)=\mathcal{N}\left(\mathbf{z};\bm{\mu}_{\bm{\theta}},\text{diag}\left(\mathbf{\bm{\sigma}}_{\bm{\theta}}^{2}\right)\right)$.
Then, as $n\rightarrow\infty$, $\mathcal{G}=\vecdag\left(\mathbf{z}\right)$
is the true DAG, where $\mathbf{z}\sim\pi_{\theta^{*}}$.
\end{lem}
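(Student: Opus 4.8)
The plan is to reduce the statement to the consistency of the BIC score (Lemma~\ref{lem:BIC-consistency}) together with the surjectivity of $\vecdag$ (Theorem~\ref{thm:surjective}), and then to show that the expected‑reward objective $\mathcal{J}(\bm{\theta})=\mathbb{E}_{\mathbf{z}\sim\pi_{\bm{\theta}}}\left[\mathcal{R}(\mathbf{z})\right]$ forces any maximizing policy to concentrate on the preimage of the true DAG. First I would observe that $\mathcal{R}(\mathbf{z})=\frac{1}{nd}\mathcal{S}_{\text{BIC}}(\mathcal{D},\vecdag(\mathbf{z}))$ is a strictly increasing affine transformation of the BIC score, so it preserves the ranking of DAGs. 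By Theorem~\ref{thm:surjective}, $\mathrm{Im}(\vecdag_d)=\mathbb{D}_d$, hence $\sup_{\mathbf{z}\in\mathbb{S}_d}\mathcal{R}(\mathbf{z})=\frac{1}{nd}\max_{\mathcal{G}\in\mathbb{D}_d}\mathcal{S}_{\text{BIC}}(\mathcal{D},\mathcal{G})$. By Lemma~\ref{lem:BIC-consistency}, for all sufficiently large $n$ the true DAG $\mathcal{G}^{\ast}$ is the unique maximizer of the BIC score; so the global maximum of $\mathcal{R}$ equals $\mathcal{R}^{\ast}:=\frac{1}{nd}\mathcal{S}_{\text{BIC}}(\mathcal{D},\mathcal{G}^{\ast})$, attained exactly on $\Omega^{\ast}:=\vecdag_d^{-1}(\mathcal{G}^{\ast})$, and consistency furnishes a strictly positive gap $\Delta:=\mathcal{R}^{\ast}-\frac{1}{nd}\max_{\mathcal{G}\neq\mathcal{G}^{\ast}}\mathcal{S}_{\text{BIC}}(\mathcal{D},\mathcal{G})>0$ between the best and second‑best attainable reward.

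Next I would show that $\Omega^{\ast}$ is a nonempty open set. Each quantity feeding the Heaviside functions in Eq.~(\ref{eq:vec2dag}) is an affine function of $\mathbf{z}$, so $\vecdag_d$ is constant on each cell cut out by the sign patterns of these affine functions. Concretely, choosing any linear extension of $\mathcal{G}^{\ast}$, setting the node potentials $\mathbf{p}(\mathbf{z})$ strictly ordered accordingly, and the edge potentials $\mathbf{E}(\mathbf{z})_{ij}$ strictly positive for adjacent pairs and strictly negative for non‑adjacent pairs, yields a full‑dimensional region on which $\vecdag_d\equiv\mathcal{G}^{\ast}$. This region is nonempty by the same construction establishing surjectivity, and it is open since it is defined by strict inequalities; indeed Lemma~\ref{lemma:invariance} already exhibits a positive‑dimensional family of representations of any fixed DAG. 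Therefore $\Omega^{\ast}$ contains an open ball and has positive Lebesgue measure.

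The core estimate is then a sandwich argument. For any policy, $\mathcal{J}(\bm{\theta})=\int\mathcal{R}(\mathbf{z})\,\pi_{\bm{\theta}}(\mathbf{z})\,d\mathbf{z}\le\mathcal{R}^{\ast}$, and splitting the integral over $\Omega^{\ast}$ and its complement gives $\mathcal{J}(\bm{\theta})\le\mathcal{R}^{\ast}-\Delta\,\bigl(1-\pi_{\bm{\theta}}(\Omega^{\ast})\bigr)$. Hence $\mathcal{J}(\bm{\theta})=\mathcal{R}^{\ast}$ forces $\pi_{\bm{\theta}}(\Omega^{\ast})=1$, so any maximizing policy places all mass on $\Omega^{\ast}$ and a draw $\mathbf{z}\sim\pi_{\bm{\theta}^{\ast}}$ satisfies $\vecdag(\mathbf{z})=\mathcal{G}^{\ast}$ almost surely. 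For existence of such a maximizer, I would take $\bm{\mu}_{\bm{\theta}}$ in the interior of $\Omega^{\ast}$ and let $\bm{\sigma}_{\bm{\theta}}\to 0^{+}$: then $\pi_{\bm{\theta}}(\Omega^{\ast})\to 1$, so $\mathcal{J}(\bm{\theta})\to\mathcal{R}^{\ast}$, showing the supremum equals $\mathcal{R}^{\ast}$ and is attained in the degenerate (Dirac) limit concentrated on $\Omega^{\ast}$.

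The main obstacle I anticipate is precisely this last point: for any nondegenerate Gaussian ($\bm{\sigma}_{\bm{\theta}}>0$) the full support forces a strictly positive probability off $\Omega^{\ast}$, so $\mathcal{J}(\bm{\theta})<\mathcal{R}^{\ast}$ and the maximum is never attained in the interior of the parameter space. The clean way to handle this is to read the $\arg\max$ over the closure of the policy family (allowing degenerate Gaussians / point masses), under which the optimum is a Dirac at a point of $\mathrm{int}(\Omega^{\ast})$ and the optimal policy returns $\mathcal{G}^{\ast}$ deterministically. I would state this interpretation explicitly, since otherwise the existence of $\bm{\theta}^{\ast}$ attaining the stated $\arg\max$ is not guaranteed.
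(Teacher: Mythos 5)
Your proposal follows essentially the same route as the paper's proof: invoke Lemma~\ref{lem:BIC-consistency} and Theorem~\ref{thm:surjective} to conclude that, for large $n$, the reward is maximized exactly on the preimage $\Omega^{\ast}=\vecdag_d^{-1}(\mathcal{G}^{\ast})$, and then argue that any expected-reward-maximizing policy must concentrate all its mass there, so a draw from $\pi_{\theta^{\ast}}$ maps to $\mathcal{G}^{\ast}$ almost surely. Where you go beyond the paper is in rigor rather than strategy: your explicit reward gap $\Delta$ and the sandwich bound $\mathcal{J}(\bm{\theta})\le\mathcal{R}^{\ast}-\Delta\,(1-\pi_{\bm{\theta}}(\Omega^{\ast}))$ yield the concentration conclusion cleanly without the paper's slightly overstated claim that the optimum must be a Dirac delta (it need not be unique, since $\Omega^{\ast}$ is a full-dimensional cone), and you correctly flag that a nondegenerate Gaussian never attains the supremum, so $\theta^{\ast}$ exists only if $\Theta$ admits $\bm{\sigma}_{\bm{\theta}}=\mathbf{0}$ --- a point the paper assumes implicitly by setting $\bm{\sigma}^{2}=\mathbf{0}$ without comment.
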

The proof is presented in Appendix~\ref{subsec:Proof-of-pg-guarantee}.
The differential entropy of the policy can also be added to the expected
return as a regularization term to encourage exploration \citep{Mnih_etal_2016Asynchronous},
however we find in our experiments that the stochasticity offered
by the policy suffices for exploration. That said, we also investigate
the effect of entropy regularization in our empirical studies. During
training, the parameter $\bm{\theta}$ is updated in the direction
suggested by the policy gradient algorithm. For example, using vanilla
policy gradient, the gradient is given by the policy gradient theorem
\citep{Sutton_etal_1999Policy} as: $\nabla_{\bm{\theta}}\mathcal{J}\left(\bm{\theta}\right)=\mathbb{E}_{\mathbf{z}\sim\pi_{\bm{\theta}}}\left[\nabla_{\bm{\theta}}\ln\pi_{\bm{\theta}}\left(\mathbf{z}\right)\cdot\mathcal{R}\left(\mathbf{z}\right)\right].$

Note that since our trajectories are one-step and our environment
is deterministic, the state-action value function is always equal
to the immediate reward, and therefore there is no need for a critic
to estimate the value function. Hence, vanilla policy gradient works
well out-of-the-box for our framework, yet in practice our method
can be implemented with more advanced algorithms for improved training
efficiency. Our practical implementation considers the basic policy
algorithm Advantage Actor-Critic (A2C, \citealp{Mnih_etal_2016Asynchronous})
and a more advanced method Proximal Policy Optimization (PPO, \citealp{Schulman_etal_2017Proximal}).
In addition, while policy gradient only ensures local convergence
under suitable conditions \citep{Sutton_etal_1999Policy}, our empirical
evidence remarks that our method can reach the exact ground truth
DAG in notably many cases.

\subsection{Post Processing\label{subsec:Post-Processing}}

With limited sample sizes, due to overfitting, redundant edges may
still be present in the returned DAG that achieves the highest score.
One approach towards suppressing the false discovery rate is to greedily
remove edges with non-substantial contributions in the score. For
linear models, a standard approach is to threshold the absolute values
of the estimated weight matrix $\mathbf{W}$ at a certain level $\delta$
\citep{Zheng_etal_18DAGs,Ng_etal_2020Role,Bello_etal_22dagma}, i.e.,
removing all edges $\left(i\rightarrow j\right)$ with $\left|\mathbf{W}_{ij}\right|<\delta$.
For nonlinear models, the popular CAM pruning method \citep{Buhlmann_etall_14Cam}
can be employed for generalized additive models (GAMs), which performs
a GAM regression on the parents set and exclude the parents that do
not pass a predefined significance level. An alternative pruning method
that does not depend on the causal model is based on conditional independence
(CI), i.e., by imposing Faithfulness \citep{Spirtes_etal_00Causation},
for each $j\in\pa_{i}$ in the graph found so far, we remove the edge
$\left(j\rightarrow i\right)$ if $X_{i}\indep X_{j}\mid X_{\pa_{i}\setminus\left\{ j\right\} }$,
which is a direct consequence of the Faithfulness assumption and can
be realized with available CI tests like KCIT \citep{Zhang_etal_11Kernel}.
In addition, for the least squares score, we can increase the regularization
strength on the number of edges to encourage sparsity during the learning
process. Our numerical experiments investigate the effects of all
these approaches.

To summarize, Algorithm~\ref{alg:ALIAS} highlights the key steps
of our $\ours$ method for the case with vanilla gradient policy.

\section{Numerical Evaluations\label{sec:Results}}

In this section, we validate our method in the causal discovery task
across a comprehensive set of settings, including \textit{different
nonlinearities}, \textit{varying graph} \textit{types},\textit{ sizes
and densities}, \textit{varying sample sizes}, as well as \textit{different
degrees of model misspecification} on \textit{both synthetic and real
data}. In addition, we also analyze the computational efficiency of
our method, as well as demonstrate the significance of different components
of our method, especially the choice of reinforcement learning as
the optimizer, in our extensive ablation studies.

\subsection{Experiment Setup}

\begin{figure*}[t]
\centering{}\includegraphics[width=1\textwidth]{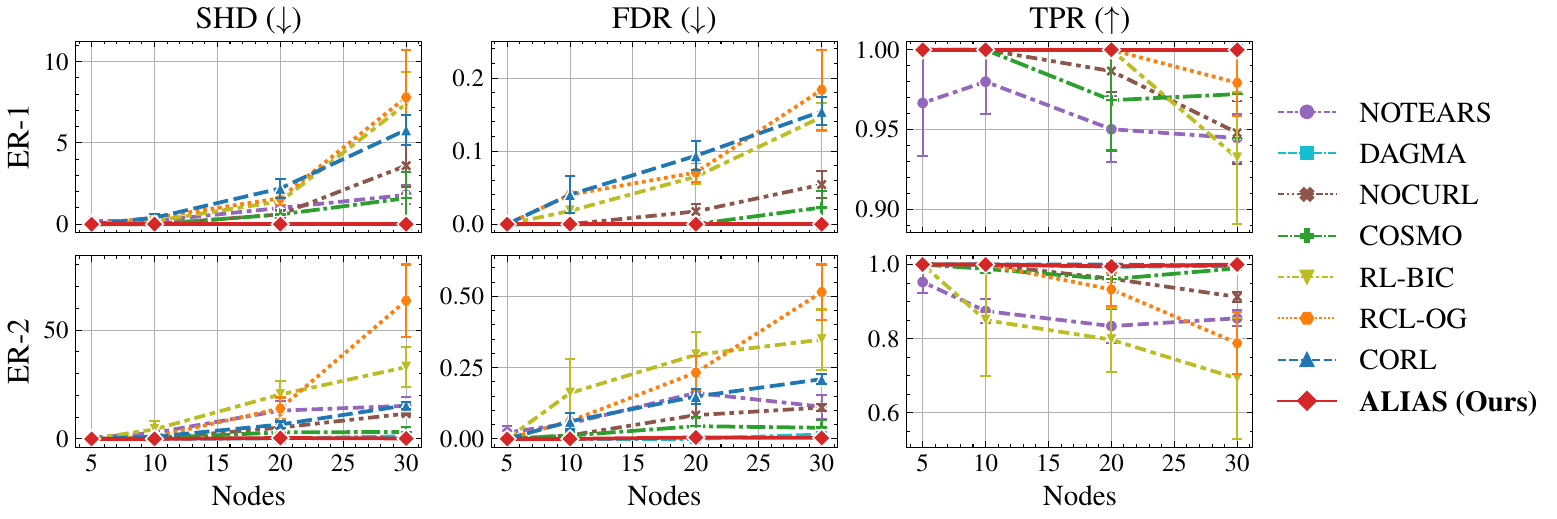}\caption{\textbf{Causal Discovery Performance on Linear-Gaussian Data.} ER-1
and ER-2 denote Erd\H{o}s-R\'{e}nyi graph models with expected in-degrees
of 1 and 2, respectively. The weight range is $\mathcal{U}\left(\left[-5,-2\right]\cup\left[2,5\right]\right)$,
which is wider than prior studies, making our setting more challenging
due to higher data variance. We compare the proposed $\protect\ours$
method with NOTEARS \citep{Zheng_etal_18DAGs}, DAGMA \citep{Bello_etal_22dagma},
NOCURL \citep{Yu_etal_2021Dags}, COSMO \citep{Massidda_etal_2023Constraint},
RL-BIC \citep{Zhu_etal_2020Causal}, CORL \citep{Wang_etal_2021Ordering},
and RCL-OG \citep{Yang_etal_2023Reinforcement}. The performance metrics
are Structural Hamming Distance (SHD, lower is better), False Detection
Rate (FDR, lower is better), and True Positive Rate (TPR, higher is
better). Shaded areas depict standard errors over 5 independent runs.\label{fig:linear-gaussian}}
\end{figure*}

We conduct extensive empirical evaluations on both simulated and real
datasets, where the ground truth DAGs are available, to compare the
efficiency of the proposed $\ours$ method with up-to-date state-of-the-arts
in causal discovery, including the constrained continuous optimization
approaches with soft DAG constraints NOTEARS \citep{Zheng_etal_18DAGs,Zheng_etal_20Learning}
and DAGMA \citep{Bello_etal_22dagma}, unconstrained continuous optimization
approaches NOCURL \citep{Yu_etal_2021Dags} and COSMO \citep{Massidda_etal_2023Constraint},
as well as three RL-based methods RL-BIC \citep{Zhu_etal_2020Causal},
CORL \citep{Wang_etal_2021Ordering}, and RCL-OG \citep{Yang_etal_2023Reinforcement}.
A brief description of these methods along with their implementation
details and hyper-parameter specifications are provided in Appendix~\ref{sec:Experiment-Details}
and the evaluation metrics are described in Appendix~\ref{subsec:Metrics}.
For the main experiments, we use the variant with BIC score and PPO
algorithm for our method, and examine other variants in the ablation
studies. We report supplementary results, including additional ablation
studies, in Appendix~\ref{sec:Additional-Experiments}.

\subsection{Linear Data with Gaussian and non-Gaussian Noises\label{subsec:Linear-Data}}

\begin{table}
\caption{\textbf{Causal discovery performance on dense graphs (30-node ER-8)
and high-dimensional graphs (200-node ER-2) with linear-Gaussian data.}
The performance metrics are Structural Hamming Distance (SHD, lower
is better), False Detection Rate (FDR, lower is better), and True
Positive Rate (TPR, higher is better). The numbers are \textit{mean
\textpm{} standard error} over 5 independent runs. \textbf{Bold}:
best performance, \uline{underline}: second-best performance. RL-BIC
\& CORL fail to run high-dimensional tasks.\label{tab:dense-highdim}}

\begin{centering}
\par\end{centering}

\centering{}\resizebox{\columnwidth}{!}{%
\begin{tabular}{ccccccc}
\toprule 
\multirow{2}{*}{\textbf{Method}} & \multicolumn{3}{c}{\textbf{Dense graphs (30 nodes, $\approx$ 240 edges)}} & \multicolumn{3}{c}{\textbf{High-dimensional graphs (200 nodes, $\approx$ 400 edges)}}\tabularnewline
\cmidrule{2-7}
 & \textbf{SHD ($\downarrow$)} & \textbf{FDR ($\downarrow$)} & \textbf{TPR ($\uparrow$)} & \textbf{SHD ($\downarrow$)} & \textbf{FDR ($\downarrow$)} & \textbf{TPR ($\uparrow$)}\tabularnewline
\midrule
NOTEARS \citep{Zheng_etal_18DAGs} & $141.2\pm11.9$ & $0.25\pm0.03$ & $0.55\pm0.03$ & $\phantom{00}53.8\pm\phantom{0}6.5$ & $0.06\pm0.01$ & $0.93\pm0.01$\tabularnewline
DAGMA \citep{Bello_etal_22dagma} & \textit{$\phantom{0}\underline{67.6\pm\mathit{\mathit{\phantom{0}8.0}}}$} & \textit{$\underline{0.14\pm\mathit{0.02}}$} & $0.82\pm0.02$ & $\phantom{000}\underline{9.6\pm\phantom{0}2.7}$ & $\underline{0.02\pm0.00}$ & $\underline{0.99\pm0.00}$\tabularnewline
NOCURL \citep{Yu_etal_2021Dags} & $147.6\pm\phantom{0}5.7$ & $0.32\pm0.01$ & $0.63\pm0.00$ & $\phantom{0}227.6\pm17.5$ & $0.20\pm0.03$ & $0.59\pm0.02$\tabularnewline
COSMO \citep{Massidda_etal_2023Constraint} & $\phantom{0}97.4\pm\phantom{0}6.8$ & $0.24\pm0.01$ & $0.80\pm0.02$ & $\phantom{0}158.0\pm19.5$ & $0.25\pm0.03$ & $0.87\pm0.02$\tabularnewline
RL-BIC \citep{Zhu_etal_2020Causal} & $180.6\pm21.7$ & $0.43\pm0.06$ & $0.42\pm0.14$ & - & - & -\tabularnewline
CORL \citep{Wang_etal_2021Ordering} & $\phantom{0}82.4\pm22.3$ & $0.23\pm0.05$ & \textit{$\underline{0.87\pm\mathit{0.04}}$} & - & - & -\tabularnewline
RCL-OG \citep{Yang_etal_2023Reinforcement} & $199.7\pm\phantom{0}7.1$ & $0.47\pm0.01$ & $0.51\pm0.04$ & $1076.6\pm28.8$ & $0.89\pm0.00$ & $0.32\pm0.01$\tabularnewline
\midrule 
$\ours$ (Ours) & \textbf{$\mathbf{\phantom{0}\phantom{0}0.2\pm\phantom{0}0.2}$} & \textbf{$\mathbf{0.00\pm0.00}$} & $\mathbf{1.00\pm0.00}$ & \textbf{$\mathbf{\phantom{000}2.0\pm\phantom{0}0.9}$} & \textbf{$\mathbf{0.00\pm0.00}$} & \textbf{$\mathbf{1.00\pm0.00}$}\tabularnewline
\bottomrule
\end{tabular}}
\end{table}

For a given number of nodes $d$, we first generate a DAG following
the Erd\H{o}s-R\'{e}nyi graph model \citep{Erdos_Renyi_60Evolution}
with an expected in-degree of $k\in\mathbb{N}^{+}$, denoted by ER-$k$.
Next, edge weights are randomly sampled from the uniform distribution
$P\left(\mathbf{W}\right)$, and the noises are drawn from the standard
Gaussian $E_{i}\sim\mathcal{N}\left(0,1\right)$. To make this setting
more challenging, we use a wider range $P\left(\mathbf{W}\right)=\mathcal{U}\left(\left[-5,-2\right]\cup\left[2,5\right]\right)$
compared with the common range of $\mathcal{U}\left(\left[-2,-0.5\right]\cup\left[0.5,2\right]\right)$
in previous studies \citep{Zheng_etal_18DAGs,Zhu_etal_2020Causal,Wang_etal_2021Ordering,Bello_etal_22dagma}.
We then sample $n=1\,000$ observations for each dataset. This causal
model is identifiable due to the equal noise variances \citep{Peters_etal_2014Causal}.
For fairness, we also apply the same pruning procedure with linear
regression coefficients thresholded at $0.3$ for all methods and
use the equal-variance BIC (Section~\ref{subsec:DAG-Scoring}) for
RL-BIC, CORL, RCL-OG, and $\ours$.

\paragraph{Small to moderate graphs.}

In Figure~\ref{fig:linear-gaussian} we report the causal discovery
performance for linear-Gaussian data with small to moderate graph
sizes and densities, showing that our method consistently achieves
near-perfect performance in all metrics, which can be expected thanks
to its ability to explore the DAG space competently. Overall, the
closest method with comparable performance to our method in this case
is DAGMA, followed by COSMO, which are among the most advanced continuous
optimization approaches, while other methods, including RL-based ones,
still struggle even in this simplest scenario.In addition, the results
on Scale-Free (SF) graphs and the common weight range $\mathcal{U}\left(\left[-2,-0.5\right]\cup\left[0.5,2\right]\right)$
can also be found in Appendix~\ref{sec:Additional-Experiments}.

\paragraph{Dense \& High-dimensional graphs.}

\begin{wrapfigure}[18]{R}{0.45\columnwidth}%
\vspace{-3mm}

\begin{centering}
\includegraphics[width=0.45\columnwidth]{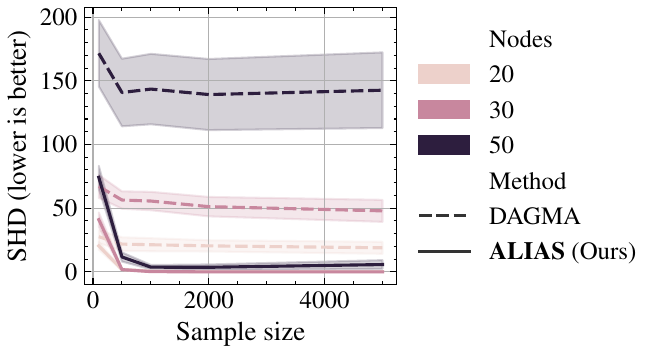}
\par\end{centering}
\centering{}\caption{\textbf{Causal Discovery performance (linear-Gaussian data on ER-8
graphs) as function of sample size ($100$ to $5\,000$)}. Shaded
areas depict standard errors over 5 independent runs.\label{fig:sample-size}}
\end{wrapfigure}%
We next test the proposed method's ability to adapt to highly complex
scenarios, including the cases with very dense graphs (ER-8 graphs)
and larger number of nodes (200-node graphs). This is to demonstrate
the advantages of our proposed method over existing approaches that
typically struggle on slightly dense graphs of ER-4 at most \citep{Zheng_etal_18DAGs,Yu_etal_2021Dags,Bello_etal_22dagma,Massidda_etal_2023Constraint}
or small graphs of only tens of nodes \citep{Zhu_etal_2020Causal,Yang_etal_2023Reinforcement}.
In this case, we use the common weight range of $\mathcal{U}\left(\left[-2,-0.5\right]\cup\left[0.5,2\right]\right)$
to avoid numerical instabilities due to more complex graphs. Table~\ref{tab:dense-highdim}
depicts that for dense graphs, $\ours$ makes almost no mistake while
the best baseline in this case, which is DAGMA, still has a significantly
large SHD of $67.6\pm8.0$. The performance gap is narrower in the
high-dimensional setting with 200 nodes, yet our method remains the
leading approach with an SHD of only 2, compared with an SHD of nearly
10 for the second-best method DAGMA.

\paragraph{Effect of sample size.}

We further investigate the behavior of $\ours$ under data scarcity
and redundancy. We again consider the difficult configuration of ER-8
graphs, and vary the sample size from very limited ($100$) to redundant
($5\,000$) in Figure~\ref{fig:sample-size}, where it is shown that
our method with just 100 samples can surpass DAGMA even with $5\,000$
samples.

\begin{table}

\caption{\textbf{Causal discovery performance under noise misspecification
on linear data with 30-node ER-2 graphs.} The numbers are \textit{mean
\textpm{} standard error} over 5 runs. \textbf{Bold}: best performance,
\uline{underline}: second-best performance.\label{tab:different-noises}}

\centering{}\resizebox{0.85\columnwidth}{!}{%
\begin{tabular}{ccccc}
\toprule 
 & \multicolumn{4}{c}{\textbf{SHD (lower is better)}}\tabularnewline
\midrule 
\textbf{Method\textbackslash Noise} & $\mathrm{Exp}\left(1\right)$ & $\mathrm{Gumbel}\left(0,1\right)$ & $\mathrm{Laplace}\left(0,1\right)$ & $\mathrm{Uniform}\left(-1,1\right)$\tabularnewline
\midrule
NOTEARS \citep{Zheng_etal_18DAGs} & $\phantom{0}6.0\pm\phantom{0}1.6$ & $\phantom{0}4.0\pm\phantom{0}1.9$ & $\phantom{0}3.0\pm\phantom{0}1.4$ & $\phantom{0}6.4\pm\phantom{0}4.3$\tabularnewline
DAGMA \citep{Bello_etal_22dagma} & \uline{\mbox{$\phantom{0}1.0\pm\phantom{0}1.0$}} & \textbf{$\mathbf{\phantom{0}0.2\pm\phantom{0}0.2}$} & \uline{\mbox{$\phantom{0}1.0\pm\phantom{0}1.0$}} & \uline{\mbox{$\phantom{0}4.8\pm\phantom{0}1.5$}}\tabularnewline
NOCURL \citep{Yu_etal_2021Dags} & $10.8\pm\phantom{0}0.8$ & $\phantom{0}6.6\pm\phantom{0}1.7$ & $\phantom{0}4.2\pm\phantom{0}1.0$ & $29.0\pm\phantom{0}3.0$\tabularnewline
COSMO \citep{Massidda_etal_2023Constraint} & $\phantom{0}5.4\pm\phantom{0}2.0$ & $\phantom{0}5.0\pm\phantom{0}2.3$ & $\phantom{0}7.6\pm\phantom{0}2.5$ & $\phantom{0}5.6\pm\phantom{0}1.8$\tabularnewline
RL-BIC \citep{Zhu_etal_2020Causal} & $66.8\pm13.2$ & $34.4\pm9.5$ & $31.4\pm10.1$ & $31.4\pm11.5$\tabularnewline
CORL \citep{Wang_etal_2021Ordering} & $12.0\pm\phantom{0}1.4$ & $15.4\pm\phantom{0}2.4$ & $15.6\pm2.2$ & $15.0\pm\phantom{0}0.9$\tabularnewline
RCL-OG \citep{Yang_etal_2023Reinforcement} & $77.3\pm13.5$ & $79.8\pm22.8$ & $41.3\pm16.4$ & $57.0\pm23.1$\tabularnewline
\midrule 
\textbf{$\ours$ (Ours)} & \textbf{$\mathbf{\phantom{0}0.4\pm\phantom{0}0.3}$} & \uline{\mbox{$\phantom{0}0.4\pm\phantom{0}0.4$}} & \textbf{$\mathbf{\phantom{0}0.8\pm\phantom{0}0.4}$} & \textbf{$\mathbf{\phantom{0}0.4\pm0.3}$}\tabularnewline
\bottomrule
\end{tabular}}
\end{table}

\paragraph{Model misspecification.}

Next, we consider the model misspecification scenarios when the Gaussian
noise assumption is violated. In Table~\ref{tab:different-noises},
we benchmark all methods on linear data with four types of non-Gaussian
noises. The results indicate that our method is still the most robust
to noise mis-specification, with an SHD of less than one in all four
cases, and is the lead performer in three out of four configurations.
In addition, we further study the performance of our method under
different model misspecification scenarios, including mismatched causal
model, noisy data, and the presence of hidden confounders, in Appendix~\ref{subsec:Model-Misspecification-Results}.

\subsection{Nonlinear Data with Gaussian Processes\label{subsec:Nonlinear-GP}}

In this section, to answer the question of whether our method can
operate beyond the standard linear-Gaussian setting, we follow the
evaluations in \citet{Zhu_etal_2020Causal,Wang_etal_2021Ordering,Yang_etal_2023Reinforcement}
to sample each causal mechanism $f_{i}$ from a Gaussian process with
an RBF kernel of unit bandwidth, and the noises follow normal distributions
with different variances sampled uniformly. We also follow \citet{Wang_etal_2021Ordering,Yang_etal_2023Reinforcement}
to apply Gaussian process regression using the RBF kernel with learnable
length scale and regularization $\alpha=1$ to calculate the BIC with
non-equal variances (Section~\ref{subsec:DAG-Scoring}) for RL-BIC,
CORL, RCL-OG, and our $\ours$ method. For NOTEARS, DAGMA, and COSMO,
we use their nonlinear versions where Multiple-layer Perceptrons (MLP)
are used to model nonlinear relationships.

\begin{table}[t]
\caption{\textbf{Causal discovery performance on nonlinear data with Gaussian
processes on 10-node ER-4 graphs. }The performance metrics are Structural
Hamming Distance (SHD, lower is better), False Detection Rate (FDR,
lower is better), and True Positive Rate (TPR, higher is better).
The numbers are \textit{mean \textpm{} standard error} over 5 runs.
\textbf{Bold}: best performance, \uline{underline}: second-best performance.
Since the graphs are dense and the noise is additive, we also study
the effect of pruning the output graphs with CAM pruning \citep{Buhlmann_etall_14Cam}.\label{tab:nonlinear}}

\begin{centering}
\par\end{centering}
\centering{}\resizebox{\columnwidth}{!}{%
\begin{tabular}{ccccccc}
\toprule 
 & \multicolumn{3}{c}{\textbf{No Pruning}} & \multicolumn{3}{c}{\textbf{CAM Pruning}}\tabularnewline
\midrule 
\textbf{Method} & \textbf{SHD ($\downarrow$)} & \textbf{FDR ($\downarrow$)} & \textbf{TPR ($\uparrow$)} & \textbf{SHD ($\downarrow$)} & \textbf{FDR ($\downarrow$)} & \textbf{TPR ($\uparrow$)}\tabularnewline
\midrule
NOTEARS \citep{Zheng_etal_20Learning} & $28.4\pm1.4$ & $0.33\pm0.06$ & $0.33\pm{0.04}$ & $28.6\pm{1.0}$ & $0.32\pm{0.06}$ & $0.32\pm{0.04}$\tabularnewline
DAGMA \citep{Bello_etal_22dagma} & $25.8\pm1.7$ & $0.32\pm0.04$ & $0.40\pm{0.05}$ & $26.0\pm{1.8}$ & $0.31\pm{0.04}$ & $0.39\pm{0.06}$\tabularnewline
NOCURL \citep{Yu_etal_2021Dags} & $35.2\pm0.9$ & $0.47\pm0.09$ & $0.15\pm{0.04}$ & $35.0\pm{0.8}$ & $0.46\pm{0.08}$ & $0.15\pm{0.04}$\tabularnewline
COSMO \citep{Massidda_etal_2023Constraint} & $26.4\pm2.0$ & $0.30\pm0.04$ & $0.39\pm{0.04}$ & $27.0\pm{2.5}$ & $0.28\pm{0.04}$ & $0.35\pm{0.05}$\tabularnewline
RL-BIC \citep{Zhu_etal_2020Causal} & $39.0\pm2.0$ & $\underline{0.06\pm0.06}$ & $0.05\pm{0.04}$ & $39.2\pm{1.8}$ & $\underline{0.05\pm{0.05}}$ & $0.04\pm{0.04}$\tabularnewline
CORL \citep{Wang_etal_2021Ordering} & $\phantom{0}8.4\pm1.8$ & $0.19\pm0.04$ & $0.90\pm{0.03}$ & $\phantom{0}9.6\pm{1.7}$ & $0.10\pm{0.04}$ & $0.82\pm{0.04}$\tabularnewline
RCL-OG \citep{Yang_etal_2023Reinforcement} & $\phantom{0}\underline{7.0\pm1.4}$ & $0.16\pm0.03$ & $\underline{0.94\pm{0.02}}$ & $\phantom{0}\underline{9.2\pm{1.3}}$ & $0.12\pm{0.04}$ & $\underline{0.84\pm{0.02}}$\tabularnewline
\midrule 
$\ours$ (Ours) & \textbf{$\mathbf{\phantom{0}0.8\pm0.4}$} & \textbf{$\mathbf{0.01\pm0.01}$} & \textbf{$\mathbf{0.99\pm{0.00}}$} & \textbf{$\mathbf{\phantom{0}4.6\pm{0.9}}$} & \textbf{$\mathbf{0.01\pm{0.01}}$} & \textbf{$\mathbf{0.89\pm{0.02}}$}\tabularnewline
\bottomrule
\end{tabular}}
\end{table}

The empirical results reported in Table~\ref{tab:nonlinear} verify
the effectiveness of our method even on nonlinear data. Our method
outperforms all other baselines in all metrics, either with or without
pruning. Most remarkably, even without pruning, our method correctly
identifies nearly every edge with an expected SHD lower than 1. However,
by using CAM pruning, there is a slight degrade in performance of
most methods, which could be due to CAM's inability to capture complex
causal mechanisms drawn from Gaussian processes.

\begin{table}

\begin{centering}
\caption{\textbf{Causal discovery performance on real-world flow cytometry
data \citep{Sachs_etall_05Causal} with 11 nodes, 17 edges, and 853
samples.} Running time is compared among RL-based methods. The figures
for RL-BIC are as originally reported. \textbf{Bold}: best performance,
\uline{underline}: second-best performance. Since the causal model
is potentially non-additive, we also consider CIT-based pruning with
KCIT \citep{Zhang_etal_11Kernel}.\label{tab:real-data}}
\par\end{centering}
\begin{centering}
\par\end{centering}
\begin{centering}
\resizebox{0.85\linewidth}{!}{%
\begin{tabular}{ccccccc}
\toprule 
 & \multicolumn{3}{c}{\textbf{CAM Pruning}} & \multicolumn{3}{c}{\textbf{CIT Pruning}}\tabularnewline
\midrule 
\multirow{2}{*}{\textbf{Method}} & \textbf{Total} & \textbf{Correct ($\uparrow$)} & \multirow{2}{*}{\textbf{SHD ($\downarrow$)}} & \textbf{Total} & \textbf{Correct ($\uparrow$)} & \multirow{2}{*}{\textbf{SHD ($\downarrow$)}}\tabularnewline
 & \textbf{edges} & \textbf{edges} &  & \textbf{edges} & \textbf{edges} & \tabularnewline
\midrule
NOTEARS \citep{Zheng_etal_20Learning} & $\phantom{0}8$ & $5$ & $13$ & $\phantom{0}7$ & $\underline{5}$ & $\underline{13}$\tabularnewline
DAGMA \citep{Bello_etal_22dagma} & $\phantom{0}6$ & $2$ & $15$ & $\phantom{0}6$ & $2$ & $15$\tabularnewline
NOCURL \citep{Yu_etal_2021Dags} & $\phantom{0}4$ & $2$ & $15$ & $\phantom{0}4$ & $2$ & $15$\tabularnewline
COSMO \citep{Massidda_etal_2023Constraint} & $\phantom{0}5$ & $2$ & $16$ & $\phantom{0}5$ & $2$ & $16$\tabularnewline
RL-BIC \citep{Zhu_etal_2020Causal} & $10$ & $\underline{7}$ & $\underline{11}$ & - & - & -\tabularnewline
CORL \citep{Wang_etal_2021Ordering} & $\phantom{0}9$ & $3$ & $14$ & $10$ & $3$ & $15$\tabularnewline
RCL-OG \citep{Yang_etal_2023Reinforcement} & $\phantom{0}9$ & $5$ & $13$ & $\phantom{0}9$ & $\underline{5}$ & $\underline{13}$\tabularnewline
\midrule 
$\ours$ (Ours) & $10$ & \textbf{8} & \textbf{$\mathbf{10}$} & $\phantom{0}9$ & \textbf{$\mathbf{8}$} & \textbf{$\phantom{0}\mathbf{9}$}\tabularnewline
\bottomrule
\end{tabular}}
\par\end{centering}
\end{table}

Furthermore, following \citet{Lachapelle_etal_2020Gradient}, we also
study the case of causal model misspecification with Post-nonlinear
models \citep{Zhang_etal_2009Identifiability} in Appendix~\ref{subsec:Model-Misspecification-Results}.
In addition, nonlinear models generated using MLPs are also studied
in Appendix~\ref{subsec:nonlinear-mlp}.

\subsection{Real Data}

Next, to confirm the validity of our method past synthetic data, we
evaluate it on the popular benchmark flow cytometry dataset \citep{Sachs_etall_05Causal},
which involves a protein signaling network based on expression levels
of proteins and phospholipids. We employ the observational partition
of the dataset with 853 samples, 11 nodes, and 17 edges.

The empirical results provided in Table~\ref{tab:real-data} show
that our method $\ours$ both achieves the best SHD and number of
correct edges among all approaches. Specifically, using CAM pruning
under the assumption of generalized additive noise models, we achieve
the lowest SHD of 10 compared with the second-best of 11 by RL-BIC.
Meanwhile, when using CIT-based pruning, we can even further reduce
the SHD to 9, with 8 out of 9 identified edges are correct. This
is a state-of-the-art level of SHD among existing studies on this
dataset.

\subsection{Runtime Analysis}

\begin{figure}

\begin{tabular}{>{\centering}m{0.35\columnwidth}>{\centering}m{0.65\columnwidth}}
\resizebox{.35\columnwidth}{!}{%
\begin{tabular}{cccc}
\toprule 
\textbf{Nodes} & \textbf{ALIAS} & \textbf{CORL} & \textbf{RCL-OG}\tabularnewline
\midrule
$\phantom{0}50$ & $0.3$ & $\phantom{0}45.7$ (${\color{violet}\phantom{0,\!}135.7}\mathbin{\color{violet}\times}$) & $\phantom{0}46.4$ (${\color{violet}\phantom{0,\!}137.7}\mathbin{\color{violet}\times}$)\tabularnewline
$100$ & $0.2$ & $146.8$ (${\color{violet}\phantom{0,\!}900.0}\mathbin{\color{violet}\times}$) & $188.6$ (${\color{violet}1,\!155.9}\mathbin{\color{violet}\times}$)\tabularnewline
$150$ & $0.3$ & $359.6$ (${\color{violet}1,\!261.5}\mathbin{\color{violet}\times}$) & $471.3$ (${\color{violet}{\color{violet}1,\!653.2}\mathbin{\color{violet}\times}}$)\tabularnewline
$200$ & $0.4$ & $640.6$ (${\color{violet}1,\!596.8}\mathbin{\color{violet}\times}$) & $915.5$ (${\color{violet}2,\!282.0}\mathbin{\color{violet}\times}$)\tabularnewline
\bottomrule
\end{tabular}} & \includegraphics[width=0.66\columnwidth]{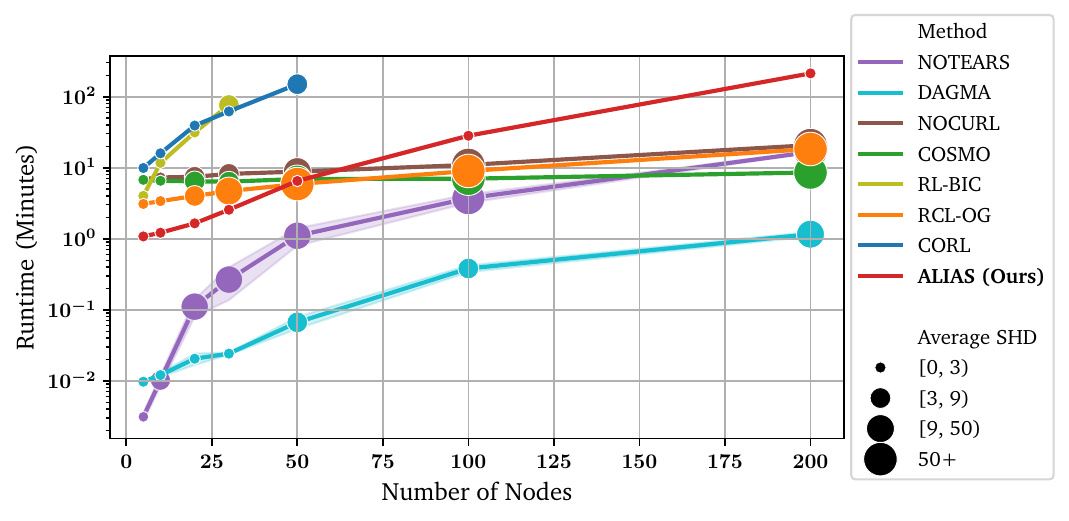}\tabularnewline
\textbf{(a) Average sampling time for each DAG in milliseconds.} & \textbf{(b) Causal Discovery Runtime.}\tabularnewline
\end{tabular}

\caption{\textbf{Runtime analysis of $\protect\ours$.} (a) We demonstrate
the efficiency of our $\mathcal{O}\left(d^{2}\right)$ sampling technique
compared with the $\mathcal{O}\left(d^{3}\right)$ approaches in CORL
and RCL-OG. (b) We study the runtime of all methods with respect to
graph size. The shaded areas depicts\textit{ }standard errors over
5 random linear-Gaussian datasets with the regular weight range $\mathcal{U}\left(\left[-2,-0.5\right]\cup\left[0.5,2\right]\right)$
on ER-2 graphs.\label{fig:Runtime-analysis}}
\end{figure}

Here, we analyze the efficiency of $\ours$ in details. First, to
show the significance of our optimal quadratic complexity for sampling
DAGs, our Figure~\ref{fig:Runtime-analysis}a compares the runtime
of our policy with the autoregressive sampling approaches in CORL
and RCL-OG with cubic complexity. It can be seen that, our DAG sampling
policy is much faster than other approaches and does not significantly
slow down with increasing graph sizes. Meanwhile, CORL and RCL-OG
are nearly 140 times slower than $\ours$ at 50 nodes, and the speedup
ratio drastically increases with the growth of the graph. Second,
in Figure~\ref{fig:Runtime-analysis}b, we detail the runtime and
performance of all methods with varying numbers of nodes. For small-
to moderate-sized graphs of up to 50 nodes, our method is even faster
than gradient-based methods NOCURL and COSMO. For larger graphs, RL-BIC
and CORL become computationally expensive very rapidly, and while
other methods become faster than $\ours$, their performance quickly
degrade with significantly larger SHDs than our method.

\subsection{Ablation Studies}

In Figure~\ref{fig:learning-curve}, we study the effect of the
choice of graph scorer, RL method, and number of training steps onto
the performance of $\ours$ compared with DAGMA and CORL as the representatives
for continuous optimization and RL-based approaches. It can be seen
that all variants of our method surpass the baselines, using as few
as $1\,000$ training steps. While all variants perform equivalently
well, PPO proves to be a better choice than A2C, with both variants
$\text{PPO}+\text{BIC}$ and $\text{PPO}+\text{LS}$ can reach very
close to zero SHD, while those of A2C are not as performant (Figure~\ref{fig:learning-curve}b).
The influence of other hyper-parameters can be found in Appendix~\ref{subsec:Ablation-Studies}.

Furthermore, to show that the effectiveness of $\ours$ is not only
thanks to the $\vecdag$ parametrization alone, but also the application
of RL, as opposed to the gradient-based optimization approach commonly
employed in the literature, we replace RL in our method with a gradient-based
optimizer, which is popular among modern causal discovery methods,
and compare the performances. However, since Vec2DAG is discrete,
which renders the objective non-differentiable as is, we make slight
modifications to make it amenable for continuous optimization, and
the adapted version for linear data is given as:

\[
\mathrm{Vec2DAG}^{\textrm{cont.}}:=(\mathbf{E(z)+E(z)^{\top}})\odot H(\mathrm{grad}(\mathbf{p(z)})),
\]

which still represents the weighted adjacency matrix of a DAG. Still,
$H(\cdot)$ is not differentiable, so we further use the Straight-Through
estimator \citep{Bengio_etal_2013Estimating} to estimate its gradients.
Specifically, we use $H(\mathrm{grad}\mathbf{(p(z))})$ for the forward
pass whereas the gradients of the inner part $\frac{\partial\mathrm{grad}\mathbf{(p(z))}}{\partial\mathbf{z}}$
is used for the backward pass. This is not done similarly for the
first term in $\vecdag$ because that would require an additional
weight matrix to represent linear coefficients, which is redundant
compared with the above. Then, since the BIC score used in our RL
approach is also non-differentiable, we adopt a likelihood-based loss
similar to BIC as follows (which is also used in, e.g., GOLEM, \citealp{Ng_etal_2020Role}):

\[
\mathcal{L}(\mathbf{z})=\ln\left(\frac{1}{n\times d}\Vert\mathbf{X}-\mathbf{X}\cdot\mathrm{Vec2DAG}^{\textrm{cont.}}(\mathbf{z})\Vert_{2}^{2}\right)+\lambda_{1}|\mathbf{z}|,
\]

where $\lambda_{1}$ is the sparsity regularization coefficient. We
minimize this loss until convergence using the Adam optimizer \citep{Kingma_2014Adam}
(same as NOCURL, COSMO, and the RL algorithm PPO in our method).

We provide the results in Table~\ref{tab:Role-of-RL} with a wide
range of hyperparameter choices for the above approach. It can be
seen that even in this simple case, the continuous optimization approach
performs poorly and cannot compete with our RL approach, confirming
that $\ours$'s effectiveness is attributed greatly by RL, not just
$\vecdag$ alone.

\begin{table}

\caption{\textbf{Role of RL in $\protect\ours$.} We replace RL with continuous
optimization using Adam \citep{Kingma_2014Adam} and compare with
the RL version. The numbers are \textit{mean \textpm{} standard error}
over 5 random datasets on 10-node ER-2 graphs.\label{tab:Role-of-RL}}

\begin{centering}
\resizebox{\columnwidth}{!}{%
\begin{tabular}{cccc}
\toprule 
\textbf{Method} & \textbf{SHD ($\downarrow$)} & \textbf{FDR ($\downarrow$)} & \textbf{TPR ($\uparrow$)}\tabularnewline
\midrule
Vec2DAG + continuous optimization ($\text{lr}=10^{-2},\lambda_{1}=10^{-3}$) & $14.0\pm{0.6}$ & $0.52\pm{0.05}$ & $0.42\pm{0.06}$\tabularnewline
Vec2DAG + continuous optimization ($\text{lr}=10^{-2},\lambda_{1}=10^{-5}$) & $\phantom{0}9.6\pm{2.6}$ & $0.36\pm{0.06}$ & $0.65\pm{0.08}$\tabularnewline
Vec2DAG + continuous optimization ($\text{lr}=10^{-2},\lambda_{1}=10^{-7}$) & $11.8\pm{1.6}$ & $0.45\pm{0.08}$ & $0.48\pm{0.06}$\tabularnewline
Vec2DAG + continuous optimization ($\text{lr}=10^{-3},\lambda_{1}=10^{-3}$) & $10.0\pm{1.7}$ & $0.37\pm{0.05}$ & $0.56\pm{0.06}$\tabularnewline
Vec2DAG + continuous optimization ($\text{lr}=10^{-3},\lambda_{1}=10^{-5}$) & $11.8\pm{2.4}$ & $0.44\pm{0.06}$ & $0.55\pm{0.08}$\tabularnewline
Vec2DAG + continuous optimization ($\text{lr}=10^{-3},\lambda_{1}=10^{-7}$) & $\phantom{0}8.6\pm{0.4}$ & $0.32\pm{0.03}$ & $0.62\pm{0.01}$\tabularnewline
Vec2DAG + continuous optimization ($\text{lr}=10^{-4},\lambda_{1}=10^{-3}$) & $14.2\pm{1.9}$ & $0.53\pm{0.06}$ & $0.49\pm{0.05}$\tabularnewline
Vec2DAG + continuous optimization ($\text{lr}=10^{-4},\lambda_{1}=10^{-5}$) & $12.0\pm{2.1}$ & $0.45\pm{0.05}$ & $0.59\pm{0.04}$\tabularnewline
Vec2DAG + continuous optimization ($\text{lr}=10^{-4},\lambda_{1}=10^{-7}$) & $12.8\pm{2.8}$ & $0.45\pm{0.08}$ & $0.55\pm{0.04}$\tabularnewline
Vec2DAG + continuous optimization ($\text{lr}=10^{-5},\lambda_{1}=10^{-3}$) & $12.6\pm{3.7}$ & $0.49\pm{0.10}$ & $0.55\pm{0.12}$\tabularnewline
Vec2DAG + continuous optimization ($\text{lr}=10^{-5},\lambda_{1}=10^{-5}$) & $13.0\pm{3.2}$ & $0.51\pm{0.09}$ & $0.51\pm{0.10}$\tabularnewline
Vec2DAG + continuous optimization ($\text{lr}=10^{-5},\lambda_{1}=10^{-7}$) & $12.2\pm{2.5}$ & $0.49\pm{0.08}$ & $0.51\pm{0.08}$\tabularnewline
\midrule 
\textbf{Vec2DAG + RL} & $\mathbf{\phantom{0}0.0\pm0.0}$ & $\mathbf{\phantom{0}0.00\pm0.00}$ & $\mathbf{1.00\pm0.00}$\tabularnewline
\bottomrule
\end{tabular}}
\par\end{centering}
\end{table}

\section{Conclusions\label{sec:Conclusions}}

In this study, a novel causal discovery method based on RL is proposed.
With the introduction of a new DAG characterization that bridges an
unconstrained continuous space to the constrained DAG space, we devise
an RL policy that can generate DAGs efficiently without any enforcement
of the acyclicity constraint, which helps improve the search for the
optimal score drastically. Experiments on a wide array of both synthetic
and real datasets confirm the effectiveness of our method compared
with state-of-the-art baselines.

Regarding limitations, the RL approaches in our study, which are online
RL methods, may be limited in sample efficiency, as exploration data
is not effectively ultilized to prioritize visiting promising DAGs,
thus potentially requiring more explorations than needed to reach
the optimal DAG. Towards this end, more sample-efficient RL approaches,
such as Optimistic PPO \citep{Cai_etal_20Provably}, or reward redesign
can be considered to enhance exploration in our method, and thus further
improve its efficiency.

Future work may involve deepening the understanding on the convergence
properties of our method and extending it to more intriguing settings
like causal discovery with interventional data and hidden variables.

\vfill{}

\pagebreak{}

\bibliographystyle{tmlr}
\bibliography{ref}

\begin{thebibliography}{65}
\providecommand{\natexlab}[1]{#1}
\providecommand{\url}[1]{\texttt{#1}}
\expandafter\ifx\csname urlstyle\endcsname\relax
  \providecommand{\doi}[1]{doi: #1}\else
  \providecommand{\doi}{doi: \begingroup \urlstyle{rm}\Url}\fi

\bibitem[Annadani et~al.(2023)Annadani, Pawlowski, Jennings, Bauer, Zhang, and
  Gong]{Annadani_etal_2023Bayesdag}
Yashas Annadani, Nick Pawlowski, Joel Jennings, Stefan Bauer, Cheng Zhang, and
  Wenbo Gong.
\newblock Bayes{DAG}: Gradient-based posterior sampling for causal discovery.
\newblock In \emph{Advances in Neural Information Processing Systems}, 2023.

\bibitem[Bello et~al.(2022)Bello, Aragam, and Ravikumar]{Bello_etal_22dagma}
Kevin Bello, Bryon Aragam, and Pradeep Ravikumar.
\newblock {DAGMA}: learning {DAG}s via {M}-matrices and a log-determinant
  acyclicity characterization.
\newblock \emph{Advances in Neural Information Processing Systems}, pp.\
  8226--8239, 2022.

\bibitem[Bengio et~al.(2021)Bengio, Jain, Korablyov, Precup, and
  Bengio]{Bengio_etal_2021Flow}
Emmanuel Bengio, Moksh Jain, Maksym Korablyov, Doina Precup, and Yoshua Bengio.
\newblock Flow network based generative models for non-iterative diverse
  candidate generation.
\newblock In \emph{Advances in Neural Information Processing Systems}, pp.\
  27381--27394, 2021.

\bibitem[Bengio et~al.(2013)Bengio, L{\'e}onard, and
  Courville]{Bengio_etal_2013Estimating}
Yoshua Bengio, Nicholas L{\'e}onard, and Aaron Courville.
\newblock Estimating or propagating gradients through stochastic neurons for
  conditional computation.
\newblock \emph{arXiv preprint arXiv:1308.3432}, 2013.

\bibitem[Bengio et~al.(2023)Bengio, Lahlou, Deleu, Hu, Tiwari, and
  Bengio]{Bengio_etal_2023Gflownet}
Yoshua Bengio, Salem Lahlou, Tristan Deleu, Edward~J Hu, Mo~Tiwari, and
  Emmanuel Bengio.
\newblock Gflownet foundations.
\newblock \emph{Journal of Machine Learning Research}, pp.\  1--55, 2023.

\bibitem[Brouillard et~al.(2020)Brouillard, Lachapelle, Lacoste,
  Lacoste-Julien, and Drouin]{Brouillard_etal_20Differentiable}
Philippe Brouillard, S{\'e}bastien Lachapelle, Alexandre Lacoste, Simon
  Lacoste-Julien, and Alexandre Drouin.
\newblock Differentiable causal discovery from interventional data.
\newblock \emph{Advances in Neural Information Processing Systems}, pp.\
  21865--21877, 2020.

\bibitem[B{\"u}hlmann et~al.(2014)B{\"u}hlmann, Peters, and
  Ernest]{Buhlmann_etall_14Cam}
Peter B{\"u}hlmann, Jonas Peters, and Jan Ernest.
\newblock {CAM}: Causal additive models, high-dimensional order search and
  penalized regression.
\newblock \emph{The Annals of Statistics}, 42:\penalty0 2526--2556, 2014.

\bibitem[Cai et~al.(2020)Cai, Yang, Jin, and Wang]{Cai_etal_20Provably}
Qi~Cai, Zhuoran Yang, Chi Jin, and Zhaoran Wang.
\newblock Provably efficient exploration in policy optimization.
\newblock In \emph{Proceedings of the International Conference on Machine
  Learning}, pp.\  1283--1294, 2020.

\bibitem[Cao et~al.(2019)Cao, Summerfield, Park, Giordano, and
  Kayser]{Cao_etal_19Causal}
Yinan Cao, Christopher Summerfield, Hame Park, Bruno~Lucio Giordano, and
  Christoph Kayser.
\newblock Causal inference in the multisensory brain.
\newblock \emph{Neuron}, 102:\penalty0 1076--1087, 2019.

\bibitem[Charpentier et~al.(2022)Charpentier, Kibler, and
  G\"{u}nnemann]{Charpentier_etal_2022Differentiable}
Bertrand Charpentier, Simon Kibler, and Stephan G\"{u}nnemann.
\newblock Differentiable {DAG} sampling.
\newblock In \emph{Proceedings of the International Conference on Learning
  Representations}. 2022.

\bibitem[Chen et~al.(2019)Chen, Drton, and Wang]{Chen_etal_19Causal}
Wenyu Chen, Mathias Drton, and Y~Samuel Wang.
\newblock On causal discovery with an equal-variance assumption.
\newblock \emph{Biometrika}, 106:\penalty0 973--980, 2019.

\bibitem[Chickering(1996)]{Chickering_1996Learning}
David~Maxwell Chickering.
\newblock Learning {B}ayesian networks is {NP}-complete.
\newblock \emph{Learning from data: Artificial intelligence and statistics V},
  pp.\  121--130, 1996.

\bibitem[Chickering(2002)]{Chickering_02Optimal}
David~Maxwell Chickering.
\newblock Optimal structure identification with greedy search.
\newblock \emph{Journal of Machine Learning Research}, 3:\penalty0 507--554,
  2002.

\bibitem[Colombo et~al.(2012)Colombo, Maathuis, Kalisch, and
  Richardson]{Colombo_etal_12Learning}
Diego Colombo, Marloes~H Maathuis, Markus Kalisch, and Thomas~S Richardson.
\newblock Learning high-dimensional directed acyclic graphs with latent and
  selection variables.
\newblock \emph{The Annals of Statistics}, pp.\  294--321, 2012.

\bibitem[Cundy et~al.(2021)Cundy, Grover, and Ermon]{Cundy_etal_2021Bcd}
Chris Cundy, Aditya Grover, and Stefano Ermon.
\newblock {BCD} nets: Scalable variational approaches for {B}ayesian causal
  discovery.
\newblock In \emph{Advances in Neural Information Processing Systems}, pp.\
  7095--7110, 2021.

\bibitem[Deleu et~al.(2022)Deleu, G{\'o}is, Emezue, Rankawat, Lacoste-Julien,
  Bauer, and Bengio]{Deleu_etal_2022Bayesian}
Tristan Deleu, Ant{\'o}nio G{\'o}is, Chris Emezue, Mansi Rankawat, Simon
  Lacoste-Julien, Stefan Bauer, and Yoshua Bengio.
\newblock Bayesian structure learning with generative flow networks.
\newblock In \emph{Proceedings of the Uncertainty in Artificial Intelligence},
  pp.\  518--528, 2022.

\bibitem[Deleu et~al.(2024)Deleu, Nishikawa-Toomey, Subramanian, Malkin,
  Charlin, and Bengio]{Deleu_etal_24Joint}
Tristan Deleu, Mizu Nishikawa-Toomey, Jithendaraa Subramanian, Nikolay Malkin,
  Laurent Charlin, and Yoshua Bengio.
\newblock Joint {B}ayesian inference of graphical structure and parameters with
  a single generative flow network.
\newblock \emph{Advances in Neural Information Processing Systems}, 2024.

\bibitem[Erd\H{o}s \& R\'{e}nyi(1960)Erd\H{o}s and
  R\'{e}nyi]{Erdos_Renyi_60Evolution}
Paul Erd\H{o}s and Alfr\'{e}d R\'{e}nyi.
\newblock On the evolution of random graphs.
\newblock \emph{Publications of the Mathematical Institute of the Hungarian
  Academy of Sciences}, 1960.

\bibitem[Haughton(1988)]{Haughton_88Choice}
Dominique Marie-Annick Haughton.
\newblock On the choice of a model to fit data from an exponential family.
\newblock \emph{The Annals of Statistics}, pp.\  342--355, 1988.

\bibitem[Hauser \& B{\"u}hlmann(2012)Hauser and
  B{\"u}hlmann]{Hauser_12Characterization}
Alain Hauser and Peter B{\"u}hlmann.
\newblock Characterization and greedy learning of interventional markov
  equivalence classes of directed acyclic graphs.
\newblock \emph{The Journal of Machine Learning Research}, 13:\penalty0
  2409--2464, 2012.

\bibitem[Heckerman et~al.(1995)Heckerman, Geiger, and
  Chickering]{Heckerman_etal_1995Learning}
David Heckerman, Dan Geiger, and David~M Chickering.
\newblock Learning {B}ayesian networks: The combination of knowledge and
  statistical data.
\newblock \emph{Machine Learning}, pp.\  197--243, 1995.

\bibitem[Hoyer et~al.(2008)Hoyer, Janzing, Mooij, Peters, and
  Sch{\"o}lkopf]{Hoyer_etal_2008Nonlinear}
Patrik Hoyer, Dominik Janzing, Joris~M Mooij, Jonas Peters, and Bernhard
  Sch{\"o}lkopf.
\newblock Nonlinear causal discovery with additive noise models.
\newblock In \emph{Advances in Neural Information Processing Systems}, 2008.

\bibitem[H{\"u}nermund \& Bareinboim(2023)H{\"u}nermund and
  Bareinboim]{Hunermund_Bareinboim_2023Causal}
Paul H{\"u}nermund and Elias Bareinboim.
\newblock Causal inference and data fusion in econometrics.
\newblock \emph{The Econometrics Journal}, pp.\  utad008, 2023.

\bibitem[Kingma(2014)]{Kingma_2014Adam}
Diederik~P Kingma.
\newblock Adam: A method for stochastic optimization.
\newblock \emph{arXiv preprint arXiv:1412.6980}, 2014.

\bibitem[Kuhn(1955)]{Kuhn_1955Hungarian}
Harold~W Kuhn.
\newblock The {H}ungarian method for the assignment problem.
\newblock \emph{Naval Research Logistics Quarterly}, pp.\  83--97, 1955.

\bibitem[Lachapelle et~al.(2020)Lachapelle, Brouillard, Deleu, and
  Lacoste-Julien]{Lachapelle_etal_2020Gradient}
S{\'e}bastien Lachapelle, Philippe Brouillard, Tristan Deleu, and Simon
  Lacoste-Julien.
\newblock Gradient-based neural {DAG} learning.
\newblock In \emph{Proceedings of the International Conference on Learning
  Representations}, 2020.

\bibitem[Lee et~al.(2020)Lee, Danieletto, Miotto, Cherng, and
  Dudley]{Lee_etal_2019Scaling}
Hao-Chih Lee, Matteo Danieletto, Riccardo Miotto, Sarah~T Cherng, and Joel~T
  Dudley.
\newblock Scaling structural learning with {NO-BEARS} to infer causal
  transcriptome networks.
\newblock In \emph{Proceedings of the Pacific Symposium on Biocomputing}, pp.\
  391--402, 2020.

\bibitem[Lim(2020)]{Lim_20Hodge}
Lek-Heng Lim.
\newblock Hodge {L}aplacians on graphs.
\newblock \emph{Siam Review}, 62:\penalty0 685--715, 2020.

\bibitem[Lippe et~al.(2022)Lippe, Cohen, and Gavves]{Lippe_etal_22Efficient}
Phillip Lippe, Taco Cohen, and Efstratios Gavves.
\newblock Efficient neural causal discovery without acyclicity constraints.
\newblock In \emph{Proceedings of the International Conference on Learning
  Representations}, 2022.

\bibitem[Massidda et~al.(2024)Massidda, Landolfi, Cinquini, and
  Bacciu]{Massidda_etal_2023Constraint}
Riccardo Massidda, Francesco Landolfi, Martina Cinquini, and Davide Bacciu.
\newblock Constraint-free structure learning with smooth acyclic orientations.
\newblock In \emph{Proceedings of the International Conference on Learning
  Representations}, 2024.

\bibitem[Mnih et~al.(2016)Mnih, Badia, Mirza, Graves, Lillicrap, Harley,
  Silver, and Kavukcuoglu]{Mnih_etal_2016Asynchronous}
Volodymyr Mnih, Adria~Puigdomenech Badia, Mehdi Mirza, Alex Graves, Timothy
  Lillicrap, Tim Harley, David Silver, and Koray Kavukcuoglu.
\newblock Asynchronous methods for deep reinforcement learning.
\newblock In \emph{Proceedings of the International Conference on Machine
  Learning}, pp.\  1928--1937, 2016.

\bibitem[Mokhtarian et~al.(2021)Mokhtarian, Akbari, Ghassami, and
  Kiyavash]{Mokhtarian_etal_21Recursive}
Ehsan Mokhtarian, Sina Akbari, AmirEmad Ghassami, and Negar Kiyavash.
\newblock A recursive {M}arkov boundary-based approach to causal structure
  learning.
\newblock In \emph{The KDD'21 Workshop on Causal Discovery}, pp.\  26--54,
  2021.

\bibitem[Mokhtarian et~al.(2022)Mokhtarian, Akbari, Jamshidi, Etesami, and
  Kiyavash]{Mokhtarian_etal_22Learning}
Ehsan Mokhtarian, Sina Akbari, Fateme Jamshidi, Jalal Etesami, and Negar
  Kiyavash.
\newblock Learning {B}ayesian networks in the presence of structural side
  information.
\newblock In \emph{Proceedings of the AAAI Conference on Artificial
  Intelligence}, pp.\  7814--7822, 2022.

\bibitem[Ng et~al.(2020)Ng, Ghassami, and Zhang]{Ng_etal_2020Role}
Ignavier Ng, AmirEmad Ghassami, and Kun Zhang.
\newblock On the role of sparsity and {DAG} constraints for learning linear
  {DAG}s.
\newblock In \emph{Advances in Neural Information Processing Systems}, pp.\
  17943--17954, 2020.

\bibitem[Pearl(2009)]{Pearl_09Causality}
Judea Pearl.
\newblock \emph{Causality}.
\newblock Cambridge University Press, 2009.

\bibitem[Peters et~al.(2014)Peters, Mooij, Janzing, and
  Sch{\"o}lkopf]{Peters_etal_2014Causal}
Jonas Peters, Joris~M Mooij, Dominik Janzing, and Bernhard Sch{\"o}lkopf.
\newblock Causal discovery with continuous additive noise models.
\newblock \emph{Journal of Machine Learning Research}, 2014.

\bibitem[Raffin et~al.(2021)Raffin, Hill, Gleave, Kanervisto, Ernestus, and
  Dormann]{Raffin_etal_2021Stable}
Antonin Raffin, Ashley Hill, Adam Gleave, Anssi Kanervisto, Maximilian
  Ernestus, and Noah Dormann.
\newblock Stable-baselines3: Reliable reinforcement learning implementations.
\newblock \emph{Journal of Machine Learning Research}, pp.\  1--8, 2021.

\bibitem[Ramdas et~al.(2015)Ramdas, Reddi, P{\'o}czos, Singh, and
  Wasserman]{Ramdas_etal_2015Decreasing}
Aaditya Ramdas, Sashank~Jakkam Reddi, Barnab{\'a}s P{\'o}czos, Aarti Singh, and
  Larry Wasserman.
\newblock On the decreasing power of kernel and distance based nonparametric
  hypothesis tests in high dimensions.
\newblock In \emph{Proceedings of the AAAI Conference on Artificial
  Intelligence}, 2015.

\bibitem[Ramsey et~al.(2017)Ramsey, Glymour, Sanchez-Romero, and
  Glymour]{Ramsey_17Million}
Joseph Ramsey, Madelyn Glymour, Ruben Sanchez-Romero, and Clark Glymour.
\newblock A million variables and more: the {Fast Greedy Equivalence Search}
  algorithm for learning high-dimensional graphical causal models, with an
  application to functional magnetic resonance images.
\newblock \emph{International Journal of Data Science and Analytics},
  3:\penalty0 121--129, 2017.

\bibitem[Reisach et~al.(2021)Reisach, Seiler, and
  Weichwald]{Reisach_etall_21Beware}
Alexander Reisach, Christof Seiler, and Sebastian Weichwald.
\newblock Beware of the simulated dag! causal discovery benchmarks may be easy
  to game.
\newblock In \emph{Advances in Neural Information Processing Systems},
  volume~34, pp.\  27772--27784, 2021.

\bibitem[Rissanen(1978)]{Rissanen_1978Modeling}
Jorma Rissanen.
\newblock Modeling by shortest data description.
\newblock \emph{Automatica}, pp.\  465--471, 1978.

\bibitem[Robinson(1977)]{Robinson_77Counting}
Robert~W Robinson.
\newblock Counting unlabeled acyclic digraphs.
\newblock In \emph{Combinatorial Mathematics V}, pp.\  28--43. Springer, 1977.

\bibitem[Rolland et~al.(2022)Rolland, Cevher, Kleindessner, Russell, Janzing,
  Sch{\"o}lkopf, and Locatello]{Rolland_etal_22Score}
Paul Rolland, Volkan Cevher, Matth{\"a}us Kleindessner, Chris Russell, Dominik
  Janzing, Bernhard Sch{\"o}lkopf, and Francesco Locatello.
\newblock Score matching enables causal discovery of nonlinear additive noise
  models.
\newblock In \emph{Proceedings of the International Conference on Machine
  Learning}, pp.\  18741--18753, 2022.

\bibitem[Sachs et~al.(2005)Sachs, Perez, Pe'er, Lauffenburger, and
  Nolan]{Sachs_etall_05Causal}
Karen Sachs, Omar Perez, Dana Pe'er, Douglas~A Lauffenburger, and Garry~P
  Nolan.
\newblock Causal protein-signaling networks derived from multiparameter
  single-cell data.
\newblock \emph{Science}, 308:\penalty0 523--529, 2005.

\bibitem[Sanchez et~al.(2023)Sanchez, Liu, O'Neil, and
  Tsaftaris]{Sanchez_etal_22Diffusion}
Pedro Sanchez, Xiao Liu, Alison~Q O'Neil, and Sotirios~A. Tsaftaris.
\newblock Diffusion models for causal discovery via topological ordering.
\newblock In \emph{Proceedings of the International Conference on Learning
  Representations}, 2023.

\bibitem[Schulman et~al.(2017)Schulman, Wolski, Dhariwal, Radford, and
  Klimov]{Schulman_etal_2017Proximal}
John Schulman, Filip Wolski, Prafulla Dhariwal, Alec Radford, and Oleg Klimov.
\newblock Proximal policy optimization algorithms.
\newblock \emph{arXiv preprint arXiv:1707.06347}, 2017.

\bibitem[Schwarz(1978)]{Schwarz_1978Estimating}
Gideon Schwarz.
\newblock Estimating the dimension of a model.
\newblock \emph{The Annals of Statistics}, pp.\  461--464, 1978.

\bibitem[Sinkhorn(1964)]{Sinkhorn_1964Relationship}
Richard Sinkhorn.
\newblock A relationship between arbitrary positive matrices and doubly
  stochastic matrices.
\newblock \emph{The Annals of Mathematical Statistics}, pp.\  876--879, 1964.

\bibitem[Spirtes \& Glymour(1991)Spirtes and
  Glymour]{Spirtes_Glymour_91Algorithm}
Peter Spirtes and Clark Glymour.
\newblock An algorithm for fast recovery of sparse causal graphs.
\newblock \emph{Social Science Computer Review}, 9:\penalty0 62--72, 1991.

\bibitem[Spirtes et~al.(2000)Spirtes, Glymour, Scheines, and
  Heckerman]{Spirtes_etal_00Causation}
Peter Spirtes, Clark~N Glymour, Richard Scheines, and David Heckerman.
\newblock \emph{Causation, prediction, and search}.
\newblock MIT Press, 2000.

\bibitem[Sutton et~al.(1999)Sutton, McAllester, Singh, and
  Mansour]{Sutton_etal_1999Policy}
Richard~S Sutton, David McAllester, Satinder Singh, and Yishay Mansour.
\newblock Policy gradient methods for reinforcement learning with function
  approximation.
\newblock In \emph{Advances in Neural Information Processing Systems}, 1999.

\bibitem[Towers et~al.(2023)Towers, Terry, Kwiatkowski, Balis, Cola, Deleu,
  Goul{\~a}o, Kallinteris, KG, Krimmel, Perez-Vicente, Pierr{\'e}, Schulhoff,
  Tai, Shen, and Younis]{Towers_etal_2023Gymnasium}
Mark Towers, Jordan~K. Terry, Ariel Kwiatkowski, John~U. Balis, Gianluca~de
  Cola, Tristan Deleu, Manuel Goul{\~a}o, Andreas Kallinteris, Arjun KG, Markus
  Krimmel, Rodrigo Perez-Vicente, Andrea Pierr{\'e}, Sander Schulhoff, Jun~Jet
  Tai, Andrew Tan~Jin Shen, and Omar~G. Younis.
\newblock Gymnasium, 2023.

\bibitem[Wang et~al.(2021)Wang, Du, Zhu, Ke, Chen, Hao, and
  Wang]{Wang_etal_2021Ordering}
Xiaoqiang Wang, Yali Du, Shengyu Zhu, Liangjun Ke, Zhitang Chen, Jianye Hao,
  and Jun Wang.
\newblock Ordering-based causal discovery with reinforcement learning.
\newblock In \emph{Proceedings of the International Joint Conference on
  Artificial Intelligence}, pp.\  3566--3573, 2021.

\bibitem[Wei et~al.(2020)Wei, Gao, and Yu]{Wei_etal_2020Dags}
Dennis Wei, Tian Gao, and Yue Yu.
\newblock {DAG}s with {N}o {F}ears: A closer look at continuous optimization
  for learning {B}ayesian networks.
\newblock In \emph{Advances in Neural Information Processing Systems}, pp.\
  3895--3906, 2020.

\bibitem[Yang et~al.(2023{\natexlab{a}})Yang, Yu, Wang, Wu, and
  Guo]{Yang_etal_2023Reinforcement}
Dezhi Yang, Guoxian Yu, Jun Wang, Zhengtian Wu, and Maozu Guo.
\newblock Reinforcement causal structure learning on order graph.
\newblock In \emph{Proceedings of the AAAI Conference on Artificial
  Intelligence}, pp.\  10737--10744, 2023{\natexlab{a}}.

\bibitem[Yang et~al.(2023{\natexlab{b}})Yang, Yu, Wang, Yan, and
  Guo]{Yang_etal_2023Causal}
Dezhi Yang, Guoxian Yu, Jun Wang, Zhongmin Yan, and Maozu Guo.
\newblock Causal discovery by graph attention reinforcement learning.
\newblock In \emph{Proceedings of the SIAM International Conference on Data
  Mining}, pp.\  28--36, 2023{\natexlab{b}}.

\bibitem[Yu et~al.(2019)Yu, Chen, Gao, and Yu]{Yu_etal_2019Dag}
Yue Yu, Jie Chen, Tian Gao, and Mo~Yu.
\newblock {DAG-GNN}: {DAG} structure learning with graph neural networks.
\newblock In \emph{Proceedings of the International Conference on Machine
  Learning}, pp.\  7154--7163, 2019.

\bibitem[Yu et~al.(2021)Yu, Gao, Yin, and Ji]{Yu_etal_2021Dags}
Yue Yu, Tian Gao, Naiyu Yin, and Qiang Ji.
\newblock {DAG}s with {N}o {C}url: An efficient {DAG} structure learning
  approach.
\newblock In \emph{Proceedings of the International Conference on Machine
  Learning}, pp.\  12156--12166, 2021.

\bibitem[Zhang et~al.(2021)Zhang, Zhu, Kalander, Ng, Ye, Chen, and
  Pan]{Zhang_etal_2021Gcastle}
Keli Zhang, Shengyu Zhu, Marcus Kalander, Ignavier Ng, Junjian Ye, Zhitang
  Chen, and Lujia Pan.
\newblock g{C}astle: A python toolbox for causal discovery.
\newblock \emph{arXiv preprint arXiv:2111.15155}, 2021.

\bibitem[Zhang \& Hyv{{\"a}}rinen(2009)Zhang and
  Hyv{{\"a}}rinen]{Zhang_etal_2009Identifiability}
Kun Zhang and Aapo Hyv{{\"a}}rinen.
\newblock On the identifiability of the post-nonlinear causal model.
\newblock In \emph{Proceedings of the Conference on Uncertainty in Artificial
  Intelligence}, pp.\  647--655, 2009.

\bibitem[Zhang et~al.(2011)Zhang, Peters, Janzing, and
  Sch\"{o}lkopf]{Zhang_etal_11Kernel}
Kun Zhang, Jonas Peters, Dominik Janzing, and Bernhard Sch\"{o}lkopf.
\newblock Kernel-based conditional independence test and application in causal
  discovery.
\newblock In \emph{Proceedings of the Conference on Uncertainty in Artificial
  Intelligence}, pp.\  804--813, 2011.

\bibitem[Zhang et~al.(2022)Zhang, Ng, Gong, Liu, Abbasnejad, Gong, Zhang, and
  Shi]{Zhang_etal_2022Truncated}
Zhen Zhang, Ignavier Ng, Dong Gong, Yuhang Liu, Ehsan Abbasnejad, Mingming
  Gong, Kun Zhang, and Javen~Qinfeng Shi.
\newblock Truncated matrix power iteration for differentiable {DAG} learning.
\newblock In \emph{Advances in Neural Information Processing Systems}, pp.\
  18390--18402, 2022.

\bibitem[Zheng et~al.(2018)Zheng, Aragam, Ravikumar, and
  Xing]{Zheng_etal_18DAGs}
Xun Zheng, Bryon Aragam, Pradeep Ravikumar, and Eric~P. Xing.
\newblock D{AGs with NO TEARS}: Continuous optimization for structure learning.
\newblock In \emph{Advances in Neural Information Processing Systems}, pp.\
  9472--9483, 2018.

\bibitem[Zheng et~al.(2020)Zheng, Dan, Aragam, Ravikumar, and
  Xing]{Zheng_etal_20Learning}
Xun Zheng, Chen Dan, Bryon Aragam, Pradeep Ravikumar, and Eric~P. Xing.
\newblock Learning sparse nonparametric {DAG}s.
\newblock In \emph{Proceedings of the International Conference on Artificial
  Intelligence and Statistics}, 2020.

\bibitem[Zhu et~al.(2020)Zhu, Ng, and Chen]{Zhu_etal_2020Causal}
Shengyu Zhu, Ignavier Ng, and Zhitang Chen.
\newblock Causal discovery with reinforcement learning.
\newblock In \emph{Proceedings of the International Conference on Learning
  Representations}, 2020.

\end{thebibliography}
\vfill{}
\pagebreak{}

\appendix

\section*{Appendix}

\begin{figure}
\begin{centering}
\noindent\begin{minipage}[t]{1\columnwidth}%
\begin{lstlisting}[language=Python]
def Vec2DAG(z) -> np.ndarray:
    p = z[:d]					# R^d
	E = np.zeros((d, d))
	E[np.triu_indices(d, -1)] = z[d:]	# R^(d(d-1)/2)

    A = (E + E.T > 0) * (p[:, None] < p[None, :])
    return A
\end{lstlisting}%
\end{minipage}
\par\end{centering}
\caption{Unconstrained DAG parameterization. This function takes as input a
real-valued vector $\mathbf{z}\in\mathbb{R}^{d\cdot\left(d+1\right)/2}$
and deterministically transforms it into an adjacency matrix of a
$d$-node DAG.\label{fig:DAG-transformation}}
\end{figure}

\section{Details about BIC scores\label{subsec:Derivation-of-BIC}}

\subsection{Non-equal variances BIC}

Recall that the additive noise model under Gaussian noise is given
by $X_{i}:=f_{i}\left(\mathbf{X}_{\mathrm{pa}_{i}}\right)+E_{i}$,
where $E_{i}\sim\mathcal{N}\left(0,\sigma_{i}^{2}\right)$. This implies
$X_{i}\sim\mathcal{N}\left(f_{i}\left(\mathbf{X}_{\pa_{i}}\right),\sigma_{i}^{2}\right)$
and the log-likelihood of an empirical dataset $\mathcal{D}=\left\{ \mathbf{x}^{\left(k\right)}\right\} _{k=1}^{n}$
is given by
\begin{align}
\mathcal{L} & =\ln p\left(\mathcal{D}\mid f,\bm{\sigma},\mathcal{G}\right)\label{eq:log-likelihood}\\
 & =-\frac{1}{2}\sum_{k=1}^{n}\sum_{i=1}^{d}\frac{\left(x_{i}^{\left(k\right)}-f_{i}\left(x_{\pa_{i}}^{\left(k\right)}\right)\right)^{2}}{\sigma_{i}^{2}}\\
 & \phantom{=}-\frac{n}{2}\sum_{i=1}^{d}\ln\sigma_{i}^{2}+\text{const},
\end{align}

\noindent where the constant does not depend on any variable.

The maximum likelihood estimator for $f_{i}$ can be found via least
square methods, and that of $\sigma_{i}^{2}$ can be found by solving
$\frac{\partial\mathcal{L}}{\partial\sigma_{i}^{2}}=0$, which yields
\begin{equation}
\hat{\sigma}_{i}^{2}=\frac{1}{n}\underbrace{\sum_{k=1}^{n}\left(x_{i}^{\left(k\right)}-\hat{f}_{i}\left(x_{\pa_{i}}^{\left(k\right)}\right)\right)^{2}}_{\mathrm{SSR}_{i}}.
\end{equation}

Plugging this back to Eqn.~(\ref{eq:log-likelihood}) gives
\begin{equation}
\mathcal{\hat{L}}=-\frac{n}{2}\sum_{i=1}^{d}\ln\frac{\mathrm{SSR}_{i}}{n}+\text{const}.
\end{equation}

Finally, we obtain the BIC score for the non-equal variances case
by incorporating this into Eqn.~(\ref{eq:bic}):

\begin{equation}
\text{BIC}\left(\mathcal{D},\mathcal{G}\right)=n\sum_{i=1}^{d}\ln\frac{\mathrm{SSR}_{i}}{n}+\left|\mathcal{G}\right|\ln n+\text{const},
\end{equation}

\subsection{Equal variances BIC}

Similarly to the unequal variances case, by assuming $\sigma_{1}=\ldots=\sigma_{d}=\sigma$,
we solve for $\frac{\partial\mathcal{L}}{\partial\sigma^{2}}=0$ and
obtain
\begin{equation}
\hat{\sigma}^{2}=\frac{1}{nd}\sum_{i=1}^{d}\underbrace{\sum_{k=1}^{n}\left(x_{i}^{\left(k\right)}-\hat{f}_{i}\left(x_{\pa_{i}}^{\left(k\right)}\right)\right)^{2}}_{\mathrm{SSR}_{i}}.
\end{equation}

Substituting this estimate into Eqn.~(\ref{eq:log-likelihood}) yields
us with
\begin{equation}
\hat{\mathcal{L}}=-\frac{nd}{2}\ln\frac{\sum_{i=1}^{d}\mathrm{SSR}_{i}}{nd}.
\end{equation}

For the last step, the BIC score for the equal variance case is given
by substitution as

\begin{equation}
\text{BIC}\left(\mathcal{D},\mathcal{G}\right)=nd\ln\frac{\sum_{i=1}^{d}\mathrm{SSR}_{i}}{nd}+\left|\mathcal{G}\right|\ln n+\text{const}.
\end{equation}

\section{Proofs}

\subsection{Proof of Lemma~\ref{lem:BIC-consistency} \label{subsec:Proof-of-BIC-consistency}}
\begin{lem*}
Let $\mathcal{G}^{\ast}$ be the ground truth DAG of an identifiable
SCM satisfying causal minimality \citep{Peters_etal_2014Causal} (i.e.,
there are no redundant edges) inducing the dataset $\mathcal{D}$,
and let $n$ be the sample size of $\mathcal{D}$. Then, in the limit
of large $n$, $\mathcal{S}_{\text{BIC}}\left(\mathcal{D},\mathcal{G}^{\ast}\right)>\mathcal{S}_{\text{BIC}}\left(\mathcal{D},\mathcal{G}\right)$
for any $\mathcal{G}\neq\mathcal{G}^{\ast}$.
\end{lem*}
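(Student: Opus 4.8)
The plan is to exploit the two-term structure of the score, $\mathcal{S}_{\text{BIC}}\left(\mathcal{D},\mathcal{G}\right)=2\ln p\left(\mathcal{D}\mid\hat{\psi},\mathcal{G}\right)-\left|\mathcal{G}\right|\ln n$, and to show that for every competitor $\mathcal{G}\neq\mathcal{G}^{\ast}$ the difference $\mathcal{S}_{\text{BIC}}\left(\mathcal{D},\mathcal{G}^{\ast}\right)-\mathcal{S}_{\text{BIC}}\left(\mathcal{D},\mathcal{G}\right)$ is eventually positive. Let $p^{\ast}$ denote the data-generating law and define the population-optimal expected log-likelihood $\ell\left(\mathcal{G}\right):=\sup_{\psi}\mathbb{E}_{p^{\ast}}\left[\ln p\left(\mathbf{X}\mid\psi,\mathcal{G}\right)\right]$. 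Standard maximum-likelihood consistency gives $\frac{1}{n}\ln p\left(\mathcal{D}\mid\hat{\psi},\mathcal{G}\right)\to\ell\left(\mathcal{G}\right)$ almost surely, with $\ell\left(\mathcal{G}\right)\leq\ell\left(\mathcal{G}^{\ast}\right)$ and equality precisely when $p^{\ast}$ factorizes according to $\mathcal{G}$. The argument then splits into two regimes according to whether $\mathcal{G}$ can reproduce $p^{\ast}$.

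First I would treat the \emph{underfitting} case, where $\mathcal{G}$ cannot represent $p^{\ast}$, so $\ell\left(\mathcal{G}\right)<\ell\left(\mathcal{G}^{\ast}\right)$. Dividing the score difference by $n$, the likelihood contribution converges to $2\left(\ell\left(\mathcal{G}^{\ast}\right)-\ell\left(\mathcal{G}\right)\right)>0$ while the penalty contribution $\left(\left|\mathcal{G}\right|-\left|\mathcal{G}^{\ast}\right|\right)\frac{\ln n}{n}$ vanishes. Hence the score difference grows linearly in $n$ and is positive for all large $n$. This case uses only the strong law of large numbers together with the fact that a misspecified factorization incurs a strictly positive Kullback--Leibler gap; note in particular that any graph with fewer edges than $\mathcal{G}^{\ast}$ falls here, since by causal minimality it cannot reproduce all dependencies of $p^{\ast}$.

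Next I would handle the \emph{overfitting} case, where $\mathcal{G}$ does represent $p^{\ast}$ so that $\ell\left(\mathcal{G}\right)=\ell\left(\mathcal{G}^{\ast}\right)$. The structural input is that, by identifiability together with causal minimality, $\mathcal{G}^{\ast}$ is the unique DAG with no redundant edges to which $p^{\ast}$ is Markov; any other distribution-reproducing competitor must therefore carry strictly more edges, $\left|\mathcal{G}\right|\geq\left|\mathcal{G}^{\ast}\right|+1$. The likelihood terms are now asymptotically comparable: because the model of $\mathcal{G}^{\ast}$ is nested in that of $\mathcal{G}$, the log-likelihood-ratio statistic $2\left[\ln p\left(\mathcal{D}\mid\hat{\psi},\mathcal{G}\right)-\ln p\left(\mathcal{D}\mid\hat{\psi},\mathcal{G}^{\ast}\right)\right]$ is $O_{p}\left(1\right)$ (asymptotically $\chi^{2}$) by Wilks' theorem, whereas the penalty gap $\left(\left|\mathcal{G}\right|-\left|\mathcal{G}^{\ast}\right|\right)\ln n\to\infty$. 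Thus the penalty dominates and $\mathcal{G}^{\ast}$ wins with probability tending to one.

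The main obstacle is the overfitting case, and specifically two points. First, justifying that identifiability plus minimality forces every distribution-reproducing competitor to have strictly more edges than $\mathcal{G}^{\ast}$: this is exactly where the identifiability hypothesis is indispensable, since without it two Markov-equivalent DAGs of equal edge count would tie on both terms and consistency would fail. Second, controlling the likelihood-ratio asymptotics, which requires the usual regularity conditions on the parametric family (smoothness and nonsingular Fisher information) so that Wilks' theorem applies. Both hold for the linear-Gaussian and additive-noise families considered here, and since the overall statement is the classical BIC-consistency theorem, I would invoke \citet{Haughton_88Choice} to discharge the regularity bookkeeping rather than re-deriving it.
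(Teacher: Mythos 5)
Your proof is correct and follows essentially the same route as the paper's: split competitors into those that cannot reproduce $p^{\ast}$ (where the Kullback--Leibler gap makes the likelihood difference grow linearly in $n$, swamping the $\ln n$ penalty) and strict supergraphs of $\mathcal{G}^{\ast}$ (where the likelihood terms are asymptotically comparable and the edge penalty decides). Your explicit appeal to Wilks' theorem to bound the likelihood-ratio statistic as $O_{p}\left(1\right)$ in the overfitting case is in fact slightly sharper than the paper's $o\left(n\right)$ bookkeeping, which on its face does not suffice to be dominated by a $\ln n$ penalty.
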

\begin{proof}
Let $\psi^{*}$ be the parameter of the causal model generating the
dataset $\mathcal{D}$. Because the causal model is identifiable and
$\left(\psi^{*},\mathcal{G}^{*}\right)$ are the true parameters generating
the data $\mathcal{D}$, the likelihood $p\left(\mathcal{D}\mid\psi^{*},\mathcal{G}^{*}\right)$
is the highest possible. For any incorrect DAG $\mathcal{G}\neq\mathcal{G}^{*}$,
there only exists parameter $\psi$ such that $p\left(\mathcal{D}\mid\psi^{*},\mathcal{G}^{*}\right)=p\left(\mathcal{D}\mid\psi,\mathcal{G}\right)$
if $\mathcal{G}^{*}\subset\mathcal{G}$, because of causal minimality.
Otherwise, the difference between the likelihoods are given by
\begin{align}
\ln p\left(\mathcal{D}\mid\psi^{*},\mathcal{G}^{*}\right)-\ln p\left(\mathcal{D}\mid\psi,\mathcal{G}\right) & =\sum_{k=1}^{n}\ln p\left(\mathbf{x}^{\left(k\right)}\mid\psi^{*},\mathcal{G}^{*}\right)-\ln p\left(\mathbf{x}^{\left(k\right)}\mid\psi,\mathcal{G}\right)\\
 & =n\cdot\underbrace{\mathrm{KL}\left(p\left(\mathcal{D}\mid\psi^{*},\mathcal{G}^{*}\right)\parallel p\left(\mathcal{D}\mid\psi,\mathcal{G}\right)\right)}_{k\left(\mathcal{G}^{*},\mathcal{G}\right)}+o\left(n\right)
\end{align}

\noindent where the second equality follows from the asymptotic behavior
of the log-likelihoods. Therefore, for any incorrect DAG $\mathcal{G}\varsupsetneq\mathcal{G}^{*}$:
\begin{equation}
\mathcal{S}_{\text{BIC}}\left(\mathcal{D},\mathcal{G}^{\mathcal{*}}\right)-\mathcal{S}_{\text{BIC}}\left(\mathcal{D},\mathcal{G}\right)=n\cdot k\left(\mathcal{G}^{*},\mathcal{G}\right)+o\left(n\right)+\left(\left|\mathcal{G}\right|-\left|\mathcal{G}^{*}\right|\right)\ln n.
\end{equation}
As $n\rightarrow\infty$, if $\mathcal{G}^{*}\nsubseteq\mathcal{G}$
then the KL divergence term grows linearly with $n$, dominating the
logarithmic growth of the penalty term. On the other hand, if $\mathcal{G}^{*}\subset\mathcal{G}$
then the difference between the likelihoods vanishes and is therefore
dominated by the penalty term. Thus, for any $\mathcal{G}\neq\mathcal{G}^{*}$,
the BIC score satisfies $\mathcal{S}_{\text{BIC}}\left(\mathcal{D},\mathcal{G}^{\ast}\right)>\mathcal{S}_{\text{BIC}}\left(\mathcal{D},\mathcal{G}\right)$
as $n\rightarrow\infty$.
\end{proof}

\subsection{Proof of Theorem~\ref{thm:surjective}\label{subsec:Proof-of-Theorem}}
\begin{thm*}
For all $d\in\mathbb{N}^{+}$, let $\vecdag_{d}:\mathbb{S}_{d}\rightarrow\left\{ 0,1\right\} ^{d\times d}$
be defined as in Eq.~\ref{eq:vec2dag}. Then, $\mathrm{Im}\left(\vecdag_{d}\right)=\mathbb{D}_{d}$.
\end{thm*}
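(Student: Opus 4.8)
The plan is to prove the two inclusions $\mathrm{Im}\left(\vecdag_{d}\right)\subseteq\mathbb{D}_{d}$ (soundness: every output is acyclic) and $\mathbb{D}_{d}\subseteq\mathrm{Im}\left(\vecdag_{d}\right)$ (completeness: every DAG is hit) separately, then combine them.

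For soundness, I would argue that the directional factor $H\left(\mathrm{grad}\left(\mathbf{p}\left(\mathbf{z}\right)\right)\right)$ alone forces acyclicity, irrespective of the symmetric factor. By Eq.~(\ref{eq:vec2dag}), an edge $i\rightarrow j$ can appear in $\vecdag_{d}\left(\mathbf{z}\right)$ only when $\mathbf{p}\left(\mathbf{z}\right)_{i}<\mathbf{p}\left(\mathbf{z}\right)_{j}$. Hence any directed path strictly increases the node potential, and a hypothetical cycle $i_{1}\rightarrow i_{2}\rightarrow\cdots\rightarrow i_{k}\rightarrow i_{1}$ would give $\mathbf{p}\left(\mathbf{z}\right)_{i_{1}}<\cdots<\mathbf{p}\left(\mathbf{z}\right)_{i_{1}}$, a contradiction; the strict inequality also rules out self-loops on the diagonal. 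Thus every output is a DAG, and notably this step requires no assumption that the potentials be distinct.

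For completeness, given an arbitrary $\mathbf{A}\in\mathbb{D}_{d}$, I would construct a preimage explicitly. First take any topological ordering of $\mathbf{A}$ and set $\mathbf{p}\left(\mathbf{z}\right)_{i}$ to the rank of node $i$ in that ordering, so the potentials are distinct and every true edge $i\rightarrow j$ satisfies $\mathbf{p}\left(\mathbf{z}\right)_{i}<\mathbf{p}\left(\mathbf{z}\right)_{j}$. Then the directional mask $M:=H\left(\mathrm{grad}\left(\mathbf{p}\left(\mathbf{z}\right)\right)\right)$ equals $1$ on exactly the ``forward'' ordered pairs, a superset of the edge set of $\mathbf{A}$. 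To carve out $\mathbf{A}$ from these forward pairs, I would choose the strictly upper-triangular $\mathbf{E}\left(\mathbf{z}\right)$ so that for each $i<j$, $\mathbf{E}\left(\mathbf{z}\right)_{ij}$ is positive iff $\{i,j\}$ is an edge of the skeleton (undirected version) of $\mathbf{A}$ and negative otherwise; concretely $+1$ and $-1$ suffice. Since $\mathbf{E}\left(\mathbf{z}\right)$ is strictly upper-triangular, for $i<j$ we have $\left(\mathbf{E}\left(\mathbf{z}\right)+\mathbf{E}\left(\mathbf{z}\right)^{\top}\right)_{ij}=\left(\mathbf{E}\left(\mathbf{z}\right)+\mathbf{E}\left(\mathbf{z}\right)^{\top}\right)_{ji}=\mathbf{E}\left(\mathbf{z}\right)_{ij}$, so the symmetric mask $S:=H\left(\mathbf{E}\left(\mathbf{z}\right)+\mathbf{E}\left(\mathbf{z}\right)^{\top}\right)$ is precisely the skeleton of $\mathbf{A}$.

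The step I expect to be most delicate is verifying that $S\odot M$ reproduces $\mathbf{A}$ exactly, with no reversed or spurious edges --- this is where the DAG hypothesis is really used. For a true edge $i\rightarrow j$ both factors equal $1$, so the entry is $1$; for a non-adjacent pair $S$ is $0$; and for a reversed pair (where $j\rightarrow i\in\mathbf{A}$) the symmetric $S$ is $1$ but $M_{ij}=0$ because $\mathbf{p}\left(\mathbf{z}\right)_{j}<\mathbf{p}\left(\mathbf{z}\right)_{i}$, so the directional mask kills the backward copy. The crucial fact enabling this is that a DAG has no $2$-cycles, so its skeleton is unambiguous and each symmetric connection is resolved into a single orientation by the ordering. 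Combining the two inclusions then yields $\mathrm{Im}\left(\vecdag_{d}\right)=\mathbb{D}_{d}$.
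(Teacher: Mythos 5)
Your proposal is correct and follows essentially the same strategy as the paper: acyclicity of every output via the strictly increasing node potential along directed edges, and surjectivity via an explicit preimage whose node potentials respect a topological order (the paper uses ancestor counts rather than topological ranks, and additionally normalizes the construction into an $\left(-\epsilon,\epsilon\right)$ box so it can be reused for the proximity lemma, but these are cosmetic differences) and whose edge potentials encode the skeleton. Your verification that the directional mask resolves each skeleton edge into its unique orientation matches the paper's argument.
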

To show that $\vecdag$ is surjective, we first show that every point
in the parameter space maps to a directed graph without any cycle,
and vice-versa, for any DAG, there always exists a vector that maps
to it.
\begin{lem*}
\label{lem:necessary}For all $d\in\mathbb{N}^{+}$, let $\vecdag_{d}:\mathbb{S}_{d}\rightarrow\left\{ 0,1\right\} ^{d\times d}$
be defined as in Eq.~(\ref{eq:vec2dag}). Then, $\vecdag_{d}\left(\mathbf{z}\right)\in\mathbb{D}_{d}$
$\forall\mathbf{z}\in\mathbb{S}_{d}$.
\end{lem*}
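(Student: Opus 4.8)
The plan is to establish the containment $\mathrm{Im}(\vecdag_d) \subseteq \mathbb{D}_d$ by showing directly, for an arbitrary $\mathbf{z} \in \mathbb{S}_d$, that the binary matrix $\mathbf{A} := \vecdag_d(\mathbf{z})$ is the adjacency matrix of a directed acyclic graph. First I would unpack the entrywise definition: writing $\mathbf{p} = \mathbf{p}(\mathbf{z})$ and $\mathbf{E} = \mathbf{E}(\mathbf{z})$, the Hadamard product gives $A_{ij} = H(E_{ij} + E_{ji})\, H(p_j - p_i)$. Since $H$ takes values in $\{0,1\}$ and the product of two such values is again in $\{0,1\}$, $\mathbf{A}$ is immediately a binary matrix. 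For the diagonal, $H(p_i - p_i) = H(0) = 0$ under the stated convention that $H(x) = 1$ only when $x > 0$, so $A_{ii} = 0$ and the graph has no self-loops.

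The core of the argument is acyclicity, and the key structural observation is that the second factor forces every edge to be consistent with the strict order induced by the node potentials. Concretely, $A_{ij} = 1$ requires $H(p_j - p_i) = 1$, i.e.\ $p_j > p_i$: an edge $i \to j$ exists only when its head has strictly larger potential than its tail. I would then argue by contradiction. Suppose $\mathbf{A}$ contained a directed cycle $i_1 \to i_2 \to \cdots \to i_m \to i_1$. Applying the observation along each arc of the cycle yields $p_{i_1} < p_{i_2} < \cdots < p_{i_m} < p_{i_1}$, hence $p_{i_1} < p_{i_1}$, a contradiction. Therefore no directed cycle can exist, and $\mathbf{A} \in \mathbb{D}_d$.

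There is no serious obstacle here; the whole result rests on recognizing that $\mathbf{p}$ plays the role of a topological potential, with the gradient-flow factor $H(\mathrm{grad}(\mathbf{p}))$ admitting only ``uphill'' edges. The one subtlety worth flagging is the treatment of ties: if $p_i = p_j$ for some $i \neq j$, then $H(p_j - p_i) = H(p_i - p_j) = H(0) = 0$, so neither $A_{ij}$ nor $A_{ji}$ equals $1$ and the pair is simply left unconnected. This confirms both that equal potentials never induce a $2$-cycle and that the strict inequality (rather than $\geq$) is exactly what makes the contradiction step go through. The first factor $H(\mathbf{E} + \mathbf{E}^\top)$ only decides which of the order-consistent edges are actually present; since it can never create an edge that violates the potential order, it plays no part in the acyclicity argument and can be ignored once binariness and the no-self-loop property are checked.
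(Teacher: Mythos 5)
Your proof is correct and follows essentially the same route as the paper's: the necessary condition $p_i < p_j$ for an edge $i \to j$, followed by a contradiction with the total ordering of the reals along any putative directed cycle. The additional checks you include (binariness, absence of self-loops, and the handling of ties $p_i = p_j$) are sound refinements that the paper leaves implicit.
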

\begin{proof}
The proof follows the same argument with Theorem 2.1 of \citet{Yu_etal_2021Dags}.
Specifically, let $\mathbf{A}=\vecdag_{d}\left(\mathbf{z}\right)=H\left(\mathbf{E}\left(\mathbf{z}\right)+\mathbf{E}\left(\mathbf{z}\right)^{\top}\right)\odot H\left(\mathrm{grad}\left(\mathbf{p}\left(\mathbf{z}\right)\right)\right)$.
Then, the necessary condition for an edge from $i$ to $j$ to exist
is $p_{i}<p_{j}$. By contradiction, assuming there exists a cycle
$\left(i_{1}\rightarrow\ldots\rightarrow i_{k}\rightarrow i_{1}\right)$
in $\mathbf{A}$, then it follows that $p_{i_{1}}<\cdots<p_{i_{k}}<p_{i_{1}}$.
This contradicts with the total-ordering property of real values,
thus concluding our argument.
\end{proof}
\begin{lem*}
\label{lem:sufficient}For all $d\in\mathbb{N}^{+}$, let $\vecdag_{d}:\mathbb{S}_{d}\rightarrow\left\{ 0,1\right\} ^{d\times d}$
be defined as in Eq.~(\ref{eq:vec2dag}). Then, for any DAG $\mathbf{A}\in\mathbb{D}_{d}$
and $\epsilon>0$, there exists $\mathbf{z}\in\left(-\epsilon,\epsilon\right)^{d\cdot\left(d+1\right)/2}\subset\mathbb{S}_{d}$
such that $\vecdag_{d}\left(\mathbf{z}\right)=\mathbf{A}$.
\end{lem*}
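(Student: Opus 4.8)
The plan is to exploit the fact that every finite DAG admits a topological ordering, and to build $\mathbf{z}$ by \emph{decoupling} two independent degrees of freedom: the \emph{skeleton} (which pairs of nodes are adjacent), governed by the edge potentials $\mathbf{E}(\mathbf{z})$, and the \emph{orientation} (the direction of each edge), governed by the node potentials $\mathbf{p}(\mathbf{z})$. The starting observation is that the $(i,j)$ entry of $\vecdag_d(\mathbf{z})$ equals $H(E_{ij}+E_{ji})\cdot H(p_j-p_i)$, and since $\mathbf{E}(\mathbf{z})$ is strictly upper-triangular, only the single parameter $E_{\min(i,j),\max(i,j)}$ survives in the first factor. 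Hence I have exactly one free edge-potential per unordered pair $\{i,j\}$ to decide adjacency, and the node potentials to decide direction.

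First I would fix a topological order of $\mathbf{A}$, i.e. a bijection $\tau:\{1,\dots,d\}\to\{1,\dots,d\}$ with $A_{ij}=1\Rightarrow\tau(i)<\tau(j)$, which exists because $\mathbf{A}$ is acyclic. I then set $p_i:=\tfrac{\epsilon}{2}\cdot\tfrac{\tau(i)}{d}$; these values are pairwise distinct, lie in $(0,\tfrac{\epsilon}{2}]\subset(-\epsilon,\epsilon)$, and satisfy $p_i<p_j\iff\tau(i)<\tau(j)$. Consequently $H(p_j-p_i)=1$ exactly when $i$ precedes $j$ in the order, and the diagonal vanishes since $H(0)=0$. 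Next, for each pair $i<j$ I would set $E_{ij}:=\tfrac{\epsilon}{2}$ if $\{i,j\}$ is adjacent in $\mathbf{A}$ (that is, $A_{ij}=1$ or $A_{ji}=1$) and $E_{ij}:=-\tfrac{\epsilon}{2}$ otherwise; because $\mathbf{A}$ is a DAG it contains no $2$-cycle, so at most one of $A_{ij},A_{ji}$ is nonzero and adjacency is well defined. Then $H(E_{ij}+E_{ji})=1$ precisely on adjacent pairs. Every chosen parameter has absolute value $\le\tfrac{\epsilon}{2}<\epsilon$, so the assembled $\mathbf{z}$ lies in $(-\epsilon,\epsilon)^{d\cdot(d+1)/2}$ as required.

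Finally I would verify $\vecdag_d(\mathbf{z})=\mathbf{A}$ entrywise by a short case analysis: for a true edge $i\to j$ both factors equal $1$ (adjacency holds and $\tau(i)<\tau(j)$), giving $1$; the reverse entry $(j,i)$ keeps adjacency factor $1$ but has direction factor $H(p_i-p_j)=0$, giving $0$; a non-adjacent pair has adjacency factor $0$ in both directions; and the diagonal is $0$. The main thing to get right is exactly this step, namely checking that the orientation factor suppresses the \emph{wrong} direction of every adjacent pair so that the skeleton encoded in $\mathbf{E}$ and the order encoded in $\mathbf{p}$ combine to reproduce $\mathbf{A}$ and nothing spurious; the conceptual crux is the decoupling idea together with the invocation of a topological order, after which everything reduces to a direct computation. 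Combined with the preceding lemma (every $\vecdag_d(\mathbf{z})$ is a DAG), this yields $\mathrm{Im}(\vecdag_d)=\mathbb{D}_d$ and hence proves Theorem~\ref{thm:surjective}.
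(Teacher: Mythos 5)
Your proposal is correct and follows essentially the same construction as the paper: node potentials encoding an order consistent with the DAG (you use a topological rank $\tau(i)$ scaled into $(-\epsilon,\epsilon)$, the paper uses normalized ancestor counts $|\an_i|$, which serve the identical purpose), together with edge potentials set to $\pm\nicefrac{1}{2}\epsilon$ according to adjacency, followed by the same entrywise verification. The only difference is the cosmetic choice of node potential, so no further comparison is needed.
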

\begin{proof}
We prove by construction. Let $\an_{i}=\left\{ j\neq i\mid\text{there is a path from }j\text{ to }i\right\} $
be the set of ancestors of $i$. It follows that if $i\rightarrow j$
is in $\mathbf{A}$ then every ancestor of $i$ is an ancestor of
$j$, so $\an_{i}\subset\an_{j}$ and $\an_{j}=\an_{i}\cup\left\{ i\right\} \cup\left(\an_{j}\setminus\an_{i}\setminus\left\{ i\right\} \right)$,
which means $\left|\an_{i}\right|<\left|\an_{j}\right|$. Therefore,
we can construct the unnormalized node potentials as $\tilde{p}_{i}=\left|\an_{i}\right|$
and normalize it to fit into the $\left(-\epsilon,\epsilon\right)$
range as
\begin{equation}
p_{i}:=\frac{\tilde{p_{i}}-\min\tilde{p_{i}}}{\max\tilde{p_{i}}-\min\tilde{p_{i}}}\times\epsilon-\frac{1}{2}\epsilon.
\end{equation}

We then construct the upper-triangular edge potentials matrix as
\begin{equation}
E_{ij}=\begin{cases}
\nicefrac{1}{2}\epsilon & \text{if }i<j\text{ and }A_{ij}+A_{ji}=1,\\
-\nicefrac{1}{2}\epsilon & \text{otherwise}.
\end{cases}\label{eq:construct-E}
\end{equation}

\noindent We now verify that this pair of potentials leads to the
exact binary matrix $\mathbf{A}$ via Eq.~(\ref{eq:vec2dag}). First,
$E_{ij}+E_{ji}=\nicefrac{1}{2}\epsilon>0$ if $i$ and $j$ are directly
connected in $\mathbf{A}$, and $E_{ij}+E_{ji}=-\nicefrac{1}{2}\epsilon<0$
otherwise, thus $H\left(\mathbf{E}+\mathbf{E}^{\top}\right)$ is the
undirected version of $\mathbf{A}$. Second, for any directed edge
$j\rightarrow i$ in $\mathbf{A}$, we have $\left|\an_{i}\right|<\left|\an_{j}\right|$,
which leads to $\mathrm{grad}\left(\mathbf{p}\right)_{ij}=p_{j}-p_{i}>0$,
and thus $H\left(\mathrm{grad}\left(\mathbf{p}\right)\right)$ encodes
the direction for every directed edge in $\mathbf{A}$. Therefore,
using the Hadamard product, we mask out the edges in $H\left(\mathrm{grad}\left(\mathbf{p}\right)\right)$
that do not exist in $\mathbf{A}$, and at the same time give direction
to any available edge in $H\left(\mathbf{E}+\mathbf{E}^{\top}\right)$,
resulting in exactly $\mathbf{A}$.
\end{proof}
Lemmas~\ref{lem:necessary} and \ref{lem:sufficient} then completes
our proof of Theorem~\ref{thm:surjective}.

\subsection{Proof of Lemma~\ref{lemma:invariance}\label{subsec:Proof-of-Invariance}}
\begin{lem*}
(Scaling and Translation Invariance). For all $d\in\mathbb{N}^{+}$,
let $\vecdag_{d}:\mathbb{S}_{d}\rightarrow\left\{ 0,1\right\} ^{d\times d}$
be defined as in Eq.~\ref{eq:vec2dag}. Then, for all $\mathbf{z}\in\mathbb{S}_{d}$,
$\alpha>0$, and ${\bf \bm{\beta}}\in\mathbb{\mathbb{S}}_{d}$ such
that $\left|\mathbf{p}\left(\bm{\beta}\right)_{i}\right|<\nicefrac{1}{2}\min_{j}\left|\mathbf{p}\left(\mathbf{z}\right)_{i}-\mathbf{p}\left(\mathbf{z}\right)_{j}\right|$
and $\left|\mathbf{E}\left(\bm{{\bf \beta}}\right)_{ij}\right|<\left|\mathbf{E}\left(\mathbf{z}\right)_{ij}\right|$
$\forall i,j$, we have $\mathbb{\vecdag}_{d}\left(\mathbf{z}\right)=\vecdag_{d}\left(\alpha\cdot(\mathbf{z}+{\bf \bm{\beta}})\right)$.
\end{lem*}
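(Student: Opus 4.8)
The plan is to exploit the fact that $\vecdag_d$ depends on its argument only through the \emph{signs} of two families of scalars: the symmetrized edge potentials $\mathbf{E}(\mathbf{z})_{ij}+\mathbf{E}(\mathbf{z})_{ji}$ and the potential gradients $\mathrm{grad}(\mathbf{p}(\mathbf{z}))_{ij}=\mathbf{p}(\mathbf{z})_j-\mathbf{p}(\mathbf{z})_i$, since both applications of $H$ in Eq.~(\ref{eq:vec2dag}) merely test positivity. Both read-off maps $\mathbf{p}(\cdot)$ and $\mathbf{E}(\cdot)$ are linear, so I would reduce the entire claim to showing that replacing $\mathbf{z}$ by $\alpha(\mathbf{z}+\bm{\beta})$ flips the sign of none of these entries. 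The scaling part is immediate: as $\alpha>0$ and the gradient and symmetrization operators are linear, scaling multiplies every tested entry by $\alpha$ and leaves its sign intact, so $\vecdag_d(\alpha(\mathbf{z}+\bm{\beta}))=\vecdag_d(\mathbf{z}+\bm{\beta})$, and only the translation invariance $\vecdag_d(\mathbf{z}+\bm{\beta})=\vecdag_d(\mathbf{z})$ remains.

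For the translation I would treat the two masks separately. The gradient entry $(i,j)$ is perturbed by $\mathbf{p}(\bm{\beta})_j-\mathbf{p}(\bm{\beta})_i$. Using the triangle inequality together with the hypothesis, which gives $|\mathbf{p}(\bm{\beta})_i|<\tfrac12\min_{k\neq i}|\mathbf{p}(\mathbf{z})_i-\mathbf{p}(\mathbf{z})_k|\le\tfrac12|\mathbf{p}(\mathbf{z})_i-\mathbf{p}(\mathbf{z})_j|$ and the symmetric bound at $j$, I would obtain
\[
|\mathbf{p}(\bm{\beta})_j-\mathbf{p}(\bm{\beta})_i|\le|\mathbf{p}(\bm{\beta})_j|+|\mathbf{p}(\bm{\beta})_i|<|\mathbf{p}(\mathbf{z})_j-\mathbf{p}(\mathbf{z})_i|.
\]
Thus the perturbation is strictly dominated by the original gradient, so $H(\mathbf{p}(\mathbf{z})_j-\mathbf{p}(\mathbf{z})_i)$ is unchanged (the original gradient is forced nonzero by the hypothesis, so no boundary case arises). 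For the edge mask, since $\mathbf{E}$ is strictly upper triangular the symmetrized entry for $i<j$ is just $\mathbf{E}(\mathbf{z})_{ij}$, and the hypothesis $|\mathbf{E}(\bm{\beta})_{ij}|<|\mathbf{E}(\mathbf{z})_{ij}|$ again strictly dominates the perturbation, preserving the sign by the same elementary fact that $a\neq0$ and $|b|<|a|$ force $\mathrm{sign}(a+b)=\mathrm{sign}(a)$.

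Putting the pieces together, every scalar fed into either Heaviside keeps its sign under $\mathbf{z}\mapsto\alpha(\mathbf{z}+\bm{\beta})$, so both binary masks and hence their Hadamard product coincide, establishing $\vecdag_d(\mathbf{z})=\vecdag_d(\alpha(\mathbf{z}+\bm{\beta}))$. I expect no serious obstacle here; the only point requiring genuine care is the constant $\tfrac12$ in the node-potential hypothesis, which is precisely what lets the triangle inequality split the gap between the two endpoints $i$ and $j$ into two halves, together with the implicit non-degeneracy the hypotheses silently enforce (distinct node potentials and nonzero edge potentials) that guarantees no tested quantity ever lands on the Heaviside boundary where the sign would be undefined.
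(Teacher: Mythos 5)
Your proposal is correct and follows essentially the same route as the paper's proof: factor the map $\mathbf{z}\mapsto\alpha(\mathbf{z}+\bm{\beta})$ into a positive scaling (which trivially preserves the sign of every quantity fed to the Heaviside function) and a translation, then show the translation cannot flip the sign of any gradient entry or symmetrized edge potential under the stated bounds. The only cosmetic difference is that you bound the node-potential perturbation via the triangle inequality $\left|\mathbf{p}(\bm{\beta})_j-\mathbf{p}(\bm{\beta})_i\right|<\left|\mathbf{p}(\mathbf{z})_j-\mathbf{p}(\mathbf{z})_i\right|$ where the paper uses an equivalent midpoint argument, and you explicitly invoke strict upper-triangularity to reduce the symmetrized edge entry to a single term.
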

\begin{proof}
\noindent We first show that $\vecdag_{d}\left(\mathbf{z}\right)=\vecdag_{d}\left(\alpha\cdot\mathbf{z}\right)$
$\forall\alpha>0$. Let $\mathbf{A}=\vecdag_{d}\left(\mathbf{z}\right)$.
This follows from the fact that 
\begin{align}
\vecdag_{d}\left(\mathbf{z}\right)_{ij}=1 & \Leftrightarrow\begin{cases}
\mathbf{p}\left(\mathbf{z}\right)_{i}<\mathbf{p}\left(\mathbf{z}\right)_{j}\\
\mathbf{E}\left(\mathbf{z}\right)_{ij}+\mathbf{E}\left(\mathbf{z}\right)_{ji}>0
\end{cases}\\
 & \Leftrightarrow\begin{cases}
\alpha\mathbf{p}\left(\mathbf{z}\right)_{i}<\alpha\mathbf{p}\left(\mathbf{z}\right)_{j}\\
\alpha\mathbf{E}\left(\mathbf{z}\right)_{ij}+\alpha\mathbf{E}\left(\mathbf{z}\right)_{ji}>0
\end{cases}\\
 & \Leftrightarrow\vecdag_{d}\left(\alpha\cdot\mathbf{z}\right)_{ij}=1,\label{eq:scale}
\end{align}
or equivalently, $\vecdag_{d}\left(\mathbf{z}\right)=\vecdag_{d}\left(\alpha\cdot\mathbf{z}\right)$.

Next, we prove $\vecdag_{d}\left(\mathbf{z}\right)=\vecdag_{d}\left(\mathbf{z}+{\bf \bm{\beta}}\right)$
if $\left|\mathbf{p}\left(\bm{\beta}\right)_{i}\right|<\nicefrac{1}{2}\min_{j}\left|\mathbf{p}\left(\mathbf{z}\right)_{i}-\mathbf{p}\left(\mathbf{z}\right)_{j}\right|$
and $\left|\mathbf{E}\left(\bm{{\bf \beta}}\right)_{ij}\right|<\left|\mathbf{E}\left(\mathbf{z}\right)_{ij}\right|$
$\forall i,j$. These conditions state that translating the potentials
by an amount $\bm{\beta}$ that does not change the ordering of $\mathbf{p}$
or the element-wise positivity of $\mathbf{E}$ also leads to the
same DAG. First, it can be seen that if $p_{i}<p_{j}$ and $p_{i}^{\prime},p_{j}^{\prime}\in\left(-\nicefrac{1}{2}\left(p_{j}-p_{i}\right),\nicefrac{1}{2}\left(p_{j}-p_{i}\right)\right)$,
then $p_{i}+p_{i}^{\prime}<\nicefrac{1}{2}\left(p_{i}+p_{j}\right)<p_{j}+p_{j}^{\prime}.$
Therefore, $\mathbf{p}\left(\mathbf{z}\right)_{i}<\mathbf{p}\left(\mathbf{z}\right)_{j}\Leftrightarrow\mathbf{p}\left(\mathbf{z}\right)_{i}+\mathbf{p}\left(\bm{\beta}\right)_{i}<\mathbf{p}\left(\mathbf{z}\right)_{j}+\mathbf{p}\left(\bm{\beta}\right)_{j}$
if $\left|\mathbf{p}\left(\bm{\beta}\right)_{i}\right|<\nicefrac{1}{2}\left|\mathbf{p}\left(\mathbf{z}\right)_{i}-\mathbf{p}\left(\mathbf{z}\right)_{j}\right|$,
which also applies when $\left|\mathbf{p}\left(\bm{\beta}\right)_{i}\right|<\nicefrac{1}{2}\min_{j}\left|\mathbf{p}\left(\mathbf{z}\right)_{i}-\mathbf{p}\left(\mathbf{z}\right)_{j}\right|$.

Similarly, $\mathbf{E}\left(\mathbf{z}\right)_{ij}>0\Leftrightarrow\mathbf{E}\left(\mathbf{z}\right)_{ij}+\mathbf{E}\left(\bm{\beta}\right)_{ij}>0$
if $\left|\mathbf{E}\left(\bm{{\bf \beta}}\right)_{ij}\right|<\left|\mathbf{E}\left(\mathbf{z}\right)_{ij}\right|$.
Combining these two points entails that if $\left|\mathbf{E}\left(\bm{{\bf \beta}}\right)_{ij}\right|<\left|\mathbf{E}\left(\mathbf{z}\right)_{ij}\right|$
$\forall i,j$ then we obtain
\begin{align}
\vecdag_{d}\left(\mathbf{z}\right)_{ij}=1 & \Leftrightarrow\begin{cases}
\mathbf{p}\left(\mathbf{z}\right)_{i}<\mathbf{p}\left(\mathbf{z}\right)_{j}\\
\mathbf{E}\left(\mathbf{z}\right)_{ij}+\mathbf{E}\left(\mathbf{z}\right)_{ji}>0
\end{cases}\\
 & \Leftrightarrow\begin{cases}
\mathbf{p}\left(\mathbf{z}\right)_{i}+\mathbf{p}\left(\bm{\beta}\right)_{i}<\mathbf{p}\left(\mathbf{z}\right)_{j}+\mathbf{p}\left(\bm{\beta}\right)_{j}\\
\left(\mathbf{E}\left(\mathbf{z}\right)_{ij}+\mathbf{E}\left(\bm{\beta}\right)_{ij}\right)+\left(\mathbf{E}\left(\mathbf{z}\right)_{ji}+\mathbf{E}\left(\bm{\beta}\right)_{ji}\right)>0
\end{cases}\\
 & \Leftrightarrow\vecdag_{d}\left(\mathbf{z}+\bm{\beta}\right)_{ij}=1.\label{eq:translation}
\end{align}

\noindent Finally, combining Eqs.~(\ref{eq:scale}) and (\ref{eq:translation})
concludes our proof for Lemma~\ref{lemma:invariance}:

\[
\vecdag_{d}\left(\alpha\cdot(\mathbf{z}+{\bf \bm{\beta}})\right)=\vecdag_{d}\left(\mathbf{z}+{\bf \bm{\beta}}\right)=\vecdag_{d}\left(\mathbf{z}\right).
\]
\end{proof}

\subsection{Proof of Lemma~\ref{lemma:proximity}\label{subsec:Proof-of-proximity}}
\begin{lem*}
(Proximity between DAGs). Let $\mathbf{z}\in\mathbb{S}_{d}$. Then,
for any DAG $\mathbf{A}\in\mathbb{D}_{d}$ and $\epsilon>0$, there
exists $\mathbf{z}_{\mathbf{A}}$ in the unit ball $B\left(\infty;\left\Vert \mathbf{z}\right\Vert _{\infty}+\epsilon\right)$
around $\mathbf{z}$ such that $\vecdag_{d}\left(\mathbf{z}_{\mathbf{A}}\right)=\mathbf{A}$.
\end{lem*}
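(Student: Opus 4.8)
The plan is to derive this directly from the sufficiency half of Theorem~\ref{thm:surjective} together with the triangle inequality for the $L^{\infty}$ norm; there is essentially no new work beyond bookkeeping, since the surjectivity construction already produces preimages of arbitrarily small magnitude.

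First I would invoke the second lemma established inside the proof of Theorem~\ref{thm:surjective}, which guarantees that for any DAG $\mathbf{A}\in\mathbb{D}_{d}$ and any $\epsilon>0$ there exists a representation $\mathbf{z}_{\mathbf{A}}\in\left(-\epsilon,\epsilon\right)^{d\cdot(d+1)/2}\subset\mathbb{S}_{d}$ with $\vecdag_{d}\left(\mathbf{z}_{\mathbf{A}}\right)=\mathbf{A}$. The crucial feature to extract is that this preimage can be taken inside an arbitrarily small $L^{\infty}$ box around the origin; in particular $\left\Vert \mathbf{z}_{\mathbf{A}}\right\Vert _{\infty}<\epsilon$.

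Next I would apply the triangle inequality to bound the distance from $\mathbf{z}_{\mathbf{A}}$ to the prescribed center $\mathbf{z}$:
\[
\left\Vert \mathbf{z}_{\mathbf{A}}-\mathbf{z}\right\Vert _{\infty}\le\left\Vert \mathbf{z}_{\mathbf{A}}\right\Vert _{\infty}+\left\Vert \mathbf{z}\right\Vert _{\infty}<\epsilon+\left\Vert \mathbf{z}\right\Vert _{\infty}.
\]
This places $\mathbf{z}_{\mathbf{A}}$ inside the ball $B\left(\infty;\left\Vert \mathbf{z}\right\Vert _{\infty}+\epsilon\right)$ centered at $\mathbf{z}$ while still satisfying $\vecdag_{d}\left(\mathbf{z}_{\mathbf{A}}\right)=\mathbf{A}$, which is exactly the claim.

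The only point requiring care, rather than a genuine obstacle, is aligning the free parameter of the sufficiency lemma with the $\epsilon$ in the statement and reading $B\left(\infty;r\right)$ as the $L^{\infty}$ ball of radius $r$. Because the origin-centered construction is available for \emph{every} positive box radius, simply taking that radius equal to $\epsilon$ closes the argument; no property of $\vecdag_{d}$ beyond its surjectivity-by-construction is used, and the reasoning is uniform in $d$.
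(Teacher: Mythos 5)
Your proposal is correct and is essentially identical to the paper's own proof: both invoke the sufficiency lemma from the proof of Theorem~\ref{thm:surjective} to obtain $\mathbf{z}_{\mathbf{A}}\in\left(-\epsilon,\epsilon\right)^{d\cdot\left(d+1\right)/2}$ and then apply the triangle inequality for the $l_{\infty}$ norm to conclude $\left\Vert \mathbf{z}-\mathbf{z}_{\mathbf{A}}\right\Vert _{\infty}<\left\Vert \mathbf{z}\right\Vert _{\infty}+\epsilon$. No differences worth noting.
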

\begin{proof}
Lemma~\ref{lem:sufficient} established that there exists a representation
$\mathbf{z}_{\mathbf{A}}\in\left(-\epsilon,\epsilon\right)^{d\cdot\left(d+1\right)/2}$
for any DAG $\mathbf{A}$ and $\epsilon>0$. Therefore, the distance
between $\mathbf{z}$ and $\mathbf{z}_{\mathbf{A}}$ is bounded in
$l_{\infty}$ norm by the triangle inequality as $\left|\mathbf{z}-\mathbf{z}_{\mathbf{A}}\right|_{\infty}\leq\left|\mathbf{z}\right|_{\infty}+\left|\mathbf{z}_{\mathbf{A}}\right|<\left|\mathbf{z}\right|_{\infty}+\epsilon$.
\end{proof}

\subsection{Proof of Lemma~\ref{pg-guarantee}\label{subsec:Proof-of-pg-guarantee}}
\begin{lem*}
Assuming causal identifiability and causal minimality, that is, there
is a unique causal model with no redundant edges that can produce
the observed dataset, and BIC score is used to define the reward $\mathcal{R}\left(\mathbf{z}\right)$
as in Eq.~(\ref{eq:reward}). Let $n$ be the sample size of the
observed dataset $\mathcal{D}$, $\theta^{*}\in\underset{\mathbf{\theta}\in\Theta}{\arg\max}\mathbb{E}_{\mathbf{z}\sim\pi_{\bm{\theta}}}\left[\mathcal{R}\left(\mathbf{z}\right)\right]$,
where $\pi_{\bm{\theta}}\left(\mathbf{z}\right)=\mathcal{N}\left(\mathbf{z};\bm{\mu}_{\bm{\theta}},\text{diag}\left(\mathbf{\bm{\sigma}}_{\bm{\theta}}^{2}\right)\right)$.
Then, as $n\rightarrow\infty$, $\mathcal{G}=\vecdag\left(\mathbf{z}\right)$
is the true DAG, where $\mathbf{z}\sim\pi_{\theta^{*}}$.
\end{lem*}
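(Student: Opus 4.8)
The plan is to reduce the optimization of the expected return to the purely combinatorial statement already provided by Lemma~\ref{lem:BIC-consistency}. First I would record that the reward $\mathcal{R}\left(\mathbf{z}\right)=\frac{1}{nd}\mathcal{S}_{\text{BIC}}\left(\mathcal{D},\vecdag_d\left(\mathbf{z}\right)\right)$ is a bounded, measurable function taking only finitely many values: its range is the image of the finite set $\mathbb{D}_d$ (Theorem~\ref{thm:surjective}) under the score, rescaled by the positive constant $\frac{1}{nd}$, which preserves the ordering of scores. By Lemma~\ref{lem:BIC-consistency}, as $n\to\infty$ the true DAG $\mathcal{G}^{\ast}$ is the \emph{unique} maximizer of $\mathcal{S}_{\text{BIC}}$ over $\mathbb{D}_d$. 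Hence $\mathcal{R}$ attains its global maximum $R_{\max}:=\frac{1}{nd}\mathcal{S}_{\text{BIC}}\left(\mathcal{D},\mathcal{G}^{\ast}\right)$ exactly on the preimage $Z^{\ast}:=\vecdag_d^{-1}\left(\mathcal{G}^{\ast}\right)$ and is strictly smaller off $Z^{\ast}$.

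Next I would bound the objective itself. For every $\bm{\theta}$ we have $\mathbb{E}_{\mathbf{z}\sim\pi_{\bm{\theta}}}\left[\mathcal{R}\left(\mathbf{z}\right)\right]\le R_{\max}$, with equality if and only if $\pi_{\bm{\theta}}\left(Z^{\ast}\right)=1$, since the nonnegative gap $R_{\max}-\mathcal{R}$ vanishes only on $Z^{\ast}$. It then remains to understand when a policy in the Gaussian family can realize this value. Theorem~\ref{thm:surjective} gives $Z^{\ast}\neq\varnothing$, and Lemma~\ref{lemma:invariance} shows that $\vecdag_d$ is locally constant around any representation with distinct node potentials and nonvanishing edge potentials; consequently $Z^{\ast}$ contains an open box and therefore has positive Lebesgue measure. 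Placing the mean $\bm{\mu}_{\bm{\theta}}$ in this interior and sending the variances $\bm{\sigma}_{\bm{\theta}}^{2}\to\mathbf{0}$ drives $\pi_{\bm{\theta}}\left(Z^{\ast}\right)\to 1$, so $\sup_{\bm{\theta}}\mathbb{E}_{\mathbf{z}\sim\pi_{\bm{\theta}}}\left[\mathcal{R}\left(\mathbf{z}\right)\right]=R_{\max}$.

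The delicate point, which I expect to be the main obstacle, is \emph{attainment}. A nondegenerate diagonal Gaussian has full support, so whenever $\bm{\sigma}_{\bm{\theta}}>\mathbf{0}$ we get $\pi_{\bm{\theta}}\left(Z^{\ast}\right)<1$ (the complement of $Z^{\ast}$ contains the interior of some other DAG's preimage and hence has positive measure for $d\ge 2$), giving a strict inequality $\mathbb{E}_{\mathbf{z}\sim\pi_{\bm{\theta}}}\left[\mathcal{R}\left(\mathbf{z}\right)\right]<R_{\max}$. Thus the supremum is realized only in the degenerate limit, and I would make this precise by taking the parameter space $\Theta$ to include the boundary $\bm{\sigma}=\mathbf{0}$, equivalently admitting the Dirac policy $\delta_{\bm{\mu}}$ as the closure of the Gaussian family. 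For an optimal $\theta^{\ast}$ the policy is then a point mass whose mean lies in $Z^{\ast}$, so a draw $\mathbf{z}\sim\pi_{\theta^{\ast}}$ satisfies $\mathbf{z}\in Z^{\ast}$ almost surely, and applying $\vecdag_d$ yields $\vecdag_d\left(\mathbf{z}\right)=\mathcal{G}^{\ast}$, the true DAG, as $n\to\infty$. I would close by stressing that the only genuinely nontrivial ingredient is Lemma~\ref{lem:BIC-consistency}; the rest is the monotone reduction from reward to score, the surjectivity and interior facts, and the concentration argument for the optimal policy.
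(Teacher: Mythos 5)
Your proposal is correct and follows essentially the same route as the paper's proof: invoke Lemma~\ref{lem:BIC-consistency} to identify the preimage of the true DAG as the exact maximizer set of the reward, then argue that the expected return is maximized only by a policy concentrating all its mass there, which forces a degenerate (Dirac) Gaussian whose samples land in that preimage almost surely. You are in fact somewhat more careful than the paper on two points it glosses over — that the maximizer set $Z^{\ast}$ has positive Lebesgue measure (via Theorem~\ref{thm:surjective} and Lemma~\ref{lemma:invariance}) rather than being a discrete set, and that attainment of the supremum requires $\Theta$ to include the boundary $\bm{\sigma}=\mathbf{0}$ — but the overall argument is the same.
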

\begin{proof}
Let $\mathcal{G}^{*}$ be the true DAG. By the identifiability and
minimality of the causal model and Lemma~\ref{lem:BIC-consistency},
as $n\rightarrow\infty$, $\mathbf{z}^{*}\in\arg\max\mathcal{R}\left(\mathbf{z}\right)$
if and only if $\vecdag\left(\mathbf{z}^{*}\right)=\mathcal{G}^{*}$.
This implies that the reward $\mathcal{R}\left(\mathbf{z}\right)$
is only maximized when $\mathbf{z}$ corresponds to $\mathcal{G}^{*}$.

Now, consider the objective $\theta^{*}\in\underset{\mathbf{\theta}\in\Theta}{\arg\max}\mathbb{E}_{\mathbf{z}\sim\pi_{\bm{\theta}}}\left[\mathcal{R}\left(\mathbf{z}\right)\right]$.
One solution to this maximization problem is when $\pi_{\bm{\theta}}$
places all of its probability mass on an arbitrary $\mathbf{z}^{*}\in\arg\max\mathcal{R}\left(\mathbf{z}\right)$,
i.e., $\pi_{\bm{\theta}^{*}}\left(\mathbf{z}\right)=\delta\left(\mathbf{z}-\mathbf{z}^{*}\right)$,
where $\delta\left(\cdot\right)$ is the Dirac delta function. This
is achieved when $\bm{\mu}_{\bm{\theta}}=\mathbf{z}^{*}$ and $\mathbf{\bm{\sigma}}^{2}=\mathbf{0}$.
In this case, any sample $\mathbf{z}\sim\pi_{\bm{\theta}^{*}}$ satisfies
$\mathbf{z}=\mathbf{z}^{*}$.

On the other hand, if $\pi_{\bm{\theta}}$ assigns probability mass
to multiple values of $\arg\max\mathcal{R}\left(\mathbf{z}\right)$,
then it is a regular Gaussian distribution and thus will also put
non-zero masses on the values with smaller rewards, thus strictly
reducing the value of the expected reward. Thus, $\pi_{\bm{\theta}}$
cannot achieve the maximum expected reward unless it is a Dirac delta
distribution concentrated at $\mathbf{z}^{*}$.

Finally, since $\vecdag\left(\mathbf{z}^{*}\right)=\mathcal{G}^{*}$,
we have $\mathcal{G}=\vecdag\left(\mathbf{z}\right)=\mathcal{G}^{*}$
with probability 1 when $\mathbf{z}\sim\pi_{\bm{\theta}^{*}}$. Therefore,
as $n\rightarrow\infty$, $\mathcal{G}=\vecdag\left(\mathbf{z}\right)$
is the true DAG.
\end{proof}

\section{Experiment Details\label{sec:Experiment-Details}}

\subsection{Datasets}

\subsubsection{Synthetic Linear-Gaussian Data\label{subsec:Synthetic-Linear-Gaussian-Data}}

To simulate data for a given number of nodes $d$, we first generate
a DAG following the Erd\H{o}s-R\'{e}nyi graph model \citep{Erdos_Renyi_60Evolution}
with a random ordering and an expected in-degree of $k\in\mathbb{N}^{+}$,
denoted by ER-$k$. Next, edge weights are randomly sampled from the
uniform distribution $\mathcal{U}\left(\left[-5,-2\right]\cup\left[2,5\right]\right)$,
giving a weighted matrix $\mathbf{W}\in\mathbb{R}^{d\times d}$ where
zero entries indicate no connections, then the noises are sampled
from the standard Gaussian distribution $E_{i}\sim\mathcal{N}\left(0,1\right)$.

Finally, we sample $n=1,000$ observations for each dataset following
the linear assignment $X_{i}:=\sum_{j\in\mathrm{pa}_{i}}W_{ji}X_{j}+E_{i}$.
While linear-Gaussian models are non-identifiable in general \citep{Spirtes_etal_00Causation},
the instances with equal noise variances adopted in this experiment
are known to be fully identifiable \citep{Peters_etal_2014Causal}.
This data generation process is similar to multiple other works such
as \citet{Zheng_etal_18DAGs,Zhu_etal_2020Causal,Wang_etal_2021Ordering}
and is conducted using the well-established gCastle\footnote{\url{https://github.com/huawei-noah/trustworthyAI/tree/master/gcastle},
version 1.0.3, Apache-2.0 license.} utility for causality research \citep{Zhang_etal_2021Gcastle}.

\subsubsection{Nonlinear Data with Gaussian Processes}

We evaluate the performance of the proposed method $\ours$ with competitors
on the exact 5 datasets used by \citet{Zhu_etal_2020Causal} in their
experiment, which are produced by \citet{Lachapelle_etal_2020Gradient}
(\url{https://github.com/kurowasan/GraN-DAG}, MIT license). In addition,
we also consider the first 5 datasets of the PNL-GP portion of their
datasets for the model misspecification experiment.

\subsection{Experiment Details}

\subsubsection{Evaluation Metrics\label{subsec:Metrics}}

The estimated graphs are assessed against ground truth DAGs on multiple
evaluation metrics, including the commonly employed Structural Hamming
Distance (SHD, lower is better), False Detection Rate (FDR, lower
is better), and True Positive Rate (TPR, higher is better). SHD counts
the minimal number of edge additions, removals, and reversals in order
to turn one graph into another. FDR is the ratio of incorrectly estimated
edges over all estimated edges, while TPR is the proportion of correctly
identified edges over all true edges. We also utilize gCastle for
the calculations of the aforementioned metrics.

\subsubsection{Implementations of Methods\label{subsec:Implementation-Details}}

Our set of baseline methods contains a wide range of both well-established
and recent approaches, including the constrained continuous optimization
approaches with soft DAG constraints NOTEARS \citep{Zheng_etal_18DAGs,Zheng_etal_20Learning}
and DAGMA \citep{Bello_etal_22dagma}, unconstrained continuous optimization
approaches NOCURL \citep{Yu_etal_2021Dags} and COSMO \citep{Massidda_etal_2023Constraint},
as well as three RL-based methods RL-BIC \citep{Zhu_etal_2020Causal},
CORL \citep{Wang_etal_2021Ordering}, and RCL-OG \citep{Yang_etal_2023Reinforcement},
where the implementations are publicly available. More specifically,
\begin{itemize}
\item NOTEARS \citep{Zheng_etal_18DAGs,Zheng_etal_20Learning} is a continuous
optimization method that optimizes over the continuous space of weighted
adjacency $\mathbf{W}\in\mathbb{R}^{d\times d}$ for linear models,
with a soft constraint $\mathbf{W}\text{ is DAG}\Leftrightarrow h\left(\mathbf{W}\right)=e^{\mathbf{W}\circ\mathbf{W}}-d=0$,
which is solved using augmented Lagrangian methods. For nonlinear
data, $\mathbf{W}$ is used to mask the input of the MLPs that model
the nonlinear causal mechanisms.
\item DAGMA \citep{Bello_etal_22dagma} is an alternative proposal to NOTEARS,
with the acyclicity constraint $\mathbf{W}\text{ is DAG}\Leftrightarrow h\left(\mathbf{W}\right)=\log\det\left(s\mathbf{I}-\mathbf{W}\circ\mathbf{W}\right)-d\log s=0$
for all $s>0$.
\item NOCURL \citep{Yu_etal_2021Dags} is proposed as an unconstrained continuous
optimization method that represents the weight matrix of a linear
model by $\mathbf{A}=\mathbf{W}\odot\mathrm{ReLU}\left(\mathrm{grad}\left(\mathbf{p}\right)\right),$
where $\mathbf{W}\in\mathbb{R}^{d\times d}$ and $\mathrm{grad}\left(\mathbf{p}\right)_{ij}:=p_{j}-p_{i}$.
Their optimization strategy is to first find a solution from an unconstrained
method like NOTEARS, then refine it with this new unconstrained representation.
\item COSMO \citep{Massidda_etal_2023Constraint} improves upon NOCURL by
parametrizing the adjacency matrix with $\mathbf{A}=\mathbf{W}\odot\mathrm{sigmoid}\left(\mathrm{grad}\left(\mathbf{p}\right)/\tau\right)$,
which can be shown to converge to a DAG when $\tau\rightarrow0^{+}$.
They optimize directly with this formulation instead of employing
the two-stage approach like NOCURL.
\item RL-BIC \citep{Zhu_etal_2020Causal} is the first RL-based method,
which involves an actor-critic agent that learns to output high-reward
graphs. The acyclicity is incorporated into the reward to penalize
cyclic graphs.
\item CORL \citep{Wang_etal_2021Ordering} is the first RL method for ordering-based
causal discovery, which revolves around an agent that learns to produce
causal orderings with high rewards. The policy sequentially generates
each element of the ordering, then the causal order is pruned to obtain
a DAG.
\item RCL-OG \citep{Yang_etal_2023Reinforcement} is an alternative RL approach
for ordering-based causal discovery, which improves the state space
design of CORL. It reduces the state space size from permutations
with a factorial size $\mathcal{O}\left(d!\right)$ to only $\mathcal{O}\left(2^{d}\right)$.
\end{itemize}
For NOTEARS, RL-BIC, and CORL, we also adopt the implementations from
the gCastle package. For RCL-OG, we employ the implementation provided
by the authors at \url{https://www.sdu-idea.cn/codes.php?name=RCL-OG}
(no license provided). Since RCL-OG is essentially a Bayesian method,
we use the best-scoring DAG from its $1\,000$ posterior samples as
the output for a fair comparison with other methods.

Default hyper-parameters for each implementation are used unless specifically
indicated. The detailed configurations of all methods are provided
in Appendix~\ref{subsec:Hyper-parameters}.

Our proposed $\ours$ method is implemented using the Stable-Baselines3\footnote{\url{https://github.com/DLR-RM/stable-baselines3}, MIT license.}
toolset \citep{Raffin_etal_2021Stable} with the Advantage Actor-Critic
(A2C, \citealp{Mnih_etal_2016Asynchronous}) and Proximal Policy Optimization
(PPO, \citealp{Schulman_etal_2017Proximal}) methods, and a custom
DAG environment built on top of Gymnasium\footnote{\url{https://github.com/Farama-Foundation/Gymnasium}, MIT license.}
\citep{Towers_etal_2023Gymnasium}. See Appendix~\ref{subsec:Hyper-parameters}
the hyper-parameters for our methods.

Experiments are executed on a mix of several machines running Ubuntu
20.04/22.04 with the matching Python environments, including the following
configurations:
\begin{itemize}
\item AMD EPYC 7742 CPU, 1TB of RAM, and 8 Nvidia A100 40GB GPUs.
\item Intel Xeon Platinum 8452Y CPU, 1TB of RAM, 4 Nvidia H100 80GB GPUs.
\item Intel Core i9 13900KF CPU, 128GB of RAM, 1 Nvidia 4070Ti Super 16GB
GPU.
\end{itemize}
The first configuration is used for batched executions of all methods,
while the second is for resource-intensive experiments like high-dimensional
ones with RL-based baselines, and the last configuration is for prototyping
experiments. We find that RL-based baselines heavily rely on GPU with
much slower (hours of) runtime on CPU, even on 30-node graphs, while
our method is less dependent on GPU and can even handle datasets of
up to 100 nodes in 45 minutes purely on CPU (single process).

\subsubsection{Hyper-parameters\label{subsec:Hyper-parameters}}

We provide the hyper-parameters for each method and experiment scenario
as follows:
\begin{itemize}
\item $\ours$ (Ours): Table~\ref{tab:Hyper-parameters}.
\item NOTEARS and DAGMA: Table~\ref{tab:params-NOTEARS-DAGMA}.
\item NOCURL: Table~\ref{tab:params-NOCURL}.
\item COSMO: Table~\ref{tab:params-COSMO}.
\item RL-BIC: Table~\ref{tab:params-RLBIC}.
\item CORL: Table~\ref{tab:params-CORL}.
\item RCL-OG: Table~\ref{tab:params-RCLOG}.
\end{itemize}
\begin{table}
\caption{Default hyper-parameters for our method $\protect\ours$ throughout
the experiments. Unmentioned hyper-parameters are left unchanged.\label{tab:Hyper-parameters}}

\centering{}\resizebox{\textwidth}{!}{%
\begin{tabular}{>{\centering}p{0.3\textwidth}ccc}
\toprule 
\multirow{2}{0.3\textwidth}{Experiment} & Linear data (Figures \ref{fig:learning-curve}, \ref{fig:linear-gaussian}
and \ref{fig:sample-size},  & Nonlinear GP data (Table~\ref{tab:nonlinear}  & Real data\tabularnewline
 & Tables~\ref{tab:dense-highdim}, \ref{tab:different-noises}, \ref{tab:ER-8},
\ref{tab:SF-8}, \ref{tab:learning-rate}, \ref{tab:entropy-regularization},
and \ref{tab:sparisty-regularization}) & and Table~\ref{tab:nonlinear-pnl}) & (Table~\ref{tab:real-data})\tabularnewline
\midrule
Batch size (No. parallel environments) & \multicolumn{3}{c}{64}\tabularnewline
\midrule
Training steps & \multicolumn{3}{c}{20,000}\tabularnewline
\midrule
No. steps to run for each update & \multicolumn{3}{c}{1}\tabularnewline
\midrule
Data standardization (dimension-wise) & No (variances should remain equal) & No (data std. is near unit) & Yes (data std. is large)\tabularnewline
\midrule
RL method & \multicolumn{3}{c}{PPO}\tabularnewline
\midrule
Advantage normalization & \multicolumn{3}{c}{Yes}\tabularnewline
\midrule
Learning rate & \multicolumn{3}{c}{Stable-Baseline3' default ($0.0003$ for PPO and $0.0007$ for A2C)}\tabularnewline
\midrule
Regression method & Linear Regression & \multicolumn{2}{c}{Gaussian Process Regression with RBF kernel}\tabularnewline
\midrule
Scoring method & BIC equal variances, assuming Gaussian noises & \multicolumn{2}{c}{BIC non-equal variances, assuming Gaussian noises}\tabularnewline
\midrule
$l_{0}$ regularization for LS score (not used in comparative experiments) & $0.000001$ & - & -\tabularnewline
\bottomrule
\end{tabular}}
\end{table}

\begin{table}
\caption{Default hyper-parameters for CORL \citep{Wang_etal_2021Ordering}
throughout the experiments. Unmentioned hyper-parameters are left
unchanged.\label{tab:params-CORL}}

\centering{}\resizebox{\textwidth}{!}{%
\begin{tabular}{cccc}
\toprule 
\multirow{2}{*}{Experiment} & Linear data (Figures \ref{fig:learning-curve}, \ref{fig:linear-gaussian}
and \ref{fig:sample-size}, & Nonlinear GP data (Table~\ref{tab:nonlinear} & Real data\tabularnewline
 & Tables~\ref{tab:dense-highdim}, \ref{tab:different-noises}, \ref{tab:ER-8},
\ref{tab:SF-8}, \ref{tab:learning-rate}, \ref{tab:entropy-regularization},
and \ref{tab:sparisty-regularization}) & and Table~\ref{tab:nonlinear-pnl}) & (Table~\ref{tab:real-data})\tabularnewline
\midrule
Batch size & \multicolumn{3}{c}{64}\tabularnewline
\midrule
Training steps & \multicolumn{3}{c}{10,000}\tabularnewline
\midrule
Data standardization (dimension-wise) & No (variances should remain equal) & No (data std. is near unit) & Yes (data std. is large)\tabularnewline
\midrule
Regression method & Linear Regression & \multicolumn{2}{c}{Gaussian Process Regression with RBF kernel}\tabularnewline
\midrule
Scoring method & BIC equal variances, assuming Gaussian noises & \multicolumn{2}{c}{BIC non-equal variances, assuming Gaussian noises}\tabularnewline
\bottomrule
\end{tabular}}
\end{table}

\begin{table}
\caption{Default hyper-parameters for RL-BIC \citep{Zhu_etal_2020Causal} throughout
the experiments. Unmentioned hyper-parameters are left unchanged.\label{tab:params-RLBIC}}

\centering{}\resizebox{\textwidth}{!}{%
\begin{tabular}{cccc}
\toprule 
\multirow{2}{*}{Experiment} & Linear data (Figures \ref{fig:learning-curve}, \ref{fig:linear-gaussian}
and \ref{fig:sample-size}, & Nonlinear GP data (Table~\ref{tab:nonlinear} & Real data\tabularnewline
 & Tables~\ref{tab:dense-highdim}, \ref{tab:different-noises}, \ref{tab:ER-8},
\ref{tab:SF-8}, \ref{tab:learning-rate}, \ref{tab:entropy-regularization},
and \ref{tab:sparisty-regularization}) & and Table~\ref{tab:nonlinear-pnl}) & (Table~\ref{tab:real-data})\tabularnewline
\midrule
Batch size & \multicolumn{3}{c}{64}\tabularnewline
\midrule
Training steps & \multicolumn{3}{c}{20,000}\tabularnewline
\midrule
Data standardization (dimension-wise) & No (variances should remain equal) & No (data std. is near unit) & Yes (data std. is large)\tabularnewline
\midrule
Regression method & Linear Regression & \multicolumn{2}{c}{Gaussian Process Regression with RBF kernel}\tabularnewline
\midrule
Scoring method & BIC equal variances, assuming Gaussian noises & \multicolumn{2}{c}{BIC non-equal variances, assuming Gaussian noises}\tabularnewline
\bottomrule
\end{tabular}}
\end{table}

\begin{table}
\caption{Default hyper-parameters for RCL-OG \citep{Yang_etal_2023Reinforcement}
throughout the experiments. Unmentioned hyper-parameters are left
unchanged.\label{tab:params-RCLOG}}

\centering{}\resizebox{\textwidth}{!}{%
\begin{tabular}{cccc}
\toprule 
\multirow{2}{*}{Experiment} & Linear data (Figures \ref{fig:learning-curve}, \ref{fig:linear-gaussian}
and \ref{fig:sample-size}, & Nonlinear GP data (Table~\ref{tab:nonlinear} & Real data\tabularnewline
 & Tables~\ref{tab:dense-highdim}, \ref{tab:different-noises}, \ref{tab:ER-8},
\ref{tab:SF-8}, \ref{tab:learning-rate}, \ref{tab:entropy-regularization},
and \ref{tab:sparisty-regularization}) & and Table~\ref{tab:nonlinear-pnl}) & (Table~\ref{tab:real-data})\tabularnewline
\midrule
Batch size & \multicolumn{3}{c}{32}\tabularnewline
\midrule
Training steps & \multicolumn{3}{c}{40,000}\tabularnewline
\midrule
Data standardization (dimension-wise) & No (variances should remain equal) & No (data std. is near unit) & Yes (data std. is large)\tabularnewline
\midrule
Regression method & Linear Regression & \multicolumn{2}{c}{Gaussian Process Regression with RBF kernel}\tabularnewline
\midrule
Scoring method & BIC equal variances, assuming Gaussian noises & \multicolumn{2}{c}{BIC non-equal variances, assuming Gaussian noises}\tabularnewline
\bottomrule
\end{tabular}}
\end{table}

\begin{table}
\caption{Default hyper-parameters for NOCURL \citep{Yang_etal_2023Reinforcement}
throughout the experiments. The values are the best parameters yielding
the lowest SHD over linear-Gaussian datasets with 30-node ER-4 graphs,
found via the tuning process provided by COSMO's implementation. Unmentioned
hyper-parameters are left unchanged.\label{tab:params-NOCURL}}

\centering{}\resizebox{\textwidth}{!}{%
\begin{tabular}{cccc}
\toprule 
\multirow{2}{*}{Experiment} & Linear data (Figures \ref{fig:learning-curve}, \ref{fig:linear-gaussian}
and \ref{fig:sample-size}, & Nonlinear GP data (Table~\ref{tab:nonlinear} & Real data\tabularnewline
 & Tables~\ref{tab:dense-highdim}, \ref{tab:different-noises}, \ref{tab:ER-8},
\ref{tab:SF-8}, \ref{tab:learning-rate}, \ref{tab:entropy-regularization},
and \ref{tab:sparisty-regularization}) & and Table~\ref{tab:nonlinear-pnl}) & (Table~\ref{tab:real-data})\tabularnewline
\midrule
Batch size & \multicolumn{3}{c}{64}\tabularnewline
\midrule
Inner iterations & \multicolumn{3}{c}{5000}\tabularnewline
\midrule
Data standardization (dimension-wise) & No (variances should remain equal) & No (data std. is near unit) & Yes (data std. is large)\tabularnewline
\midrule
Learning rate & \multicolumn{3}{c}{$0.0009747753554628831$}\tabularnewline
\midrule
Regularization strength & \multicolumn{3}{c}{$0.007478648909986116$}\tabularnewline
\bottomrule
\end{tabular}}
\end{table}

\begin{table}
\caption{Default hyper-parameters for COSMO \citep{Massidda_etal_2023Constraint}
throughout the experiments. The values are the best parameters yielding
the lowest SHD over linear-Gaussian datasets with 30-node ER-4 graphs
for linear data, and MLP datasets with 40-node ER-4 graphs for nonlinear
data. They are found via the tuning process provided by COSMO's implementation.
Unmentioned hyper-parameters are left unchanged.\label{tab:params-COSMO}}

\centering{}\resizebox{\textwidth}{!}{%
\begin{tabular}{cccc}
\toprule 
\multirow{2}{*}{Experiment} & Linear data (Figures \ref{fig:learning-curve}, \ref{fig:linear-gaussian}
and \ref{fig:sample-size}, & Nonlinear GP data (Table~\ref{tab:nonlinear} & Real data\tabularnewline
 & Tables~\ref{tab:dense-highdim}, \ref{tab:different-noises}, \ref{tab:ER-8},
\ref{tab:SF-8}, \ref{tab:learning-rate}, \ref{tab:entropy-regularization},
and \ref{tab:sparisty-regularization}) & and Table~\ref{tab:nonlinear-pnl}) & (Table~\ref{tab:real-data})\tabularnewline
\midrule
Batch size & \multicolumn{3}{c}{64}\tabularnewline
\midrule
Max epochs & 5000 & \multicolumn{2}{c}{2000}\tabularnewline
\midrule
Data standardization (dimension-wise) & No (variances should remain equal) & No (data std. is near unit) & Yes (data std. is large)\tabularnewline
\midrule
Learning rate & $0.004424703475697184$ & \multicolumn{2}{c}{$0.0011606444486776536$}\tabularnewline
\midrule
$l_{1}$ regularization strength & $0.0007589315865487066$ & \multicolumn{2}{c}{$0.000999704401738756$}\tabularnewline
\midrule
$l_{2}$ regularization strength & $0.029171977709975934$ & \multicolumn{2}{c}{$0.0011406751425196925$}\tabularnewline
\midrule
Priority regularization strength & $0.0007604972601205271$ & \multicolumn{2}{c}{$0.0014549388704106592$}\tabularnewline
\midrule
Temperature & $0.0008407426566089702$ & \multicolumn{2}{c}{$0.0007651953117707655$}\tabularnewline
\midrule
Shift & $0.005860546462049756$ & \multicolumn{2}{c}{$0.009324030532123762$}\tabularnewline
\midrule
MLP hidden units & $0$ & \multicolumn{2}{c}{$10$}\tabularnewline
\bottomrule
\end{tabular}}
\end{table}

\begin{table}
\caption{Default hyper-parameters for NOTEARS \citet{Zheng_etal_18DAGs,Zheng_etal_20Learning}
and DAGMA \citet{Bello_etal_22dagma} throughout the experiments.
Unmentioned hyper-parameters are left unchanged.\label{tab:params-NOTEARS-DAGMA}}

\centering{}\resizebox{\textwidth}{!}{%
\begin{tabular}{cccc}
\toprule 
\multirow{2}{*}{Experiment} & Linear data (Figures \ref{fig:learning-curve}, \ref{fig:linear-gaussian}
and \ref{fig:sample-size}, & Nonlinear GP data (Table~\ref{tab:nonlinear} & Real data\tabularnewline
 & Tables~\ref{tab:dense-highdim}, \ref{tab:different-noises}, \ref{tab:ER-8},
\ref{tab:SF-8}, \ref{tab:learning-rate}, \ref{tab:entropy-regularization},
and \ref{tab:sparisty-regularization}) & and Table~\ref{tab:nonlinear-pnl}) & (Table~\ref{tab:real-data})\tabularnewline
\midrule
Data standardization (dimension-wise) & No (variances should remain equal) & No (data std. is near unit) & Yes (data std. is large)\tabularnewline
\midrule
SEM & Linear & \multicolumn{2}{c}{MLP}\tabularnewline
\midrule
Loss type & \multicolumn{3}{c}{$l_{2}$}\tabularnewline
\bottomrule
\end{tabular}}
\end{table}

\section{Additional Results\label{sec:Additional-Experiments}}

\subsection{Results on ER graphs}

In Table~\ref{tab:ER-8}, we provide detailed numerical results for
Figure~\ref{fig:sample-size}, i.e., effect of dimensionality and
sample size on linear-Gaussian data and ER-8 graphs.

\begin{table}[H]
\caption{Causal discovery performance as function of dimensionalities and sample
size on ER-8 graphs. The numbers are \textit{mean \textpm{} standard
error} over 5 random datasets.\label{tab:ER-8}}

\centering{}\resizebox{.7\textwidth}{!}{%
\begin{tabular}{cccccc}
\toprule 
Nodes & Samples & Method & SHD ($\downarrow$) & FDR ($\downarrow$) & TPR ($\uparrow$)\tabularnewline
\midrule
\multirow{10}{*}{20} & \multirow{2}{*}{100} & DAGMA & $40.6\pm{3.5}$ & $0.09\pm{0.01}$ & $0.78\pm{0.02}$\tabularnewline
 &  & $\ours$ & \textbf{$24.8\pm{4.6}$} & \textbf{$0.04\pm{0.01}$} & \textbf{$0.86\pm{0.02}$}\tabularnewline
\cmidrule{2-6}
 & \multirow{2}{*}{500} & DAGMA & $36.8\pm{3.8}$ & $0.08\pm{0.01}$ & $0.8\pm{0.02}$\tabularnewline
 &  & $\ours$ & \textbf{$3.0\pm{1.1}$} & \textbf{$0.01\pm{0.00}$} & \textbf{$0.98\pm{0.00}$}\tabularnewline
\cmidrule{2-6}
 & \multirow{2}{*}{$1000$} & DAGMA & $35.6\pm{3.6}$ & $0.07\pm{0.01}$ & $0.8\pm{0.02}$\tabularnewline
 &  & $\ours$ & \textbf{$1.0\pm{0.6}$} & \textbf{$0.0\pm{0.00}$} & \textbf{$0.99\pm{0.00}$}\tabularnewline
\cmidrule{2-6}
 & \multirow{2}{*}{2000} & DAGMA & $31.8\pm{3.6}$ & $0.07\pm{0.01}$ & $0.82\pm{0.03}$\tabularnewline
 &  & $\ours$ & \textbf{$0.2\pm{0.2}$} & \textbf{$0.0\pm0.0$} & \textbf{$1.0\pm0.0$}\tabularnewline
\cmidrule{2-6}
 & \multirow{2}{*}{5000} & DAGMA & $31.2\pm{4.2}$ & $0.07\pm{0.01}$ & $0.83\pm{0.03}$\tabularnewline
 &  & $\ours$ & \textbf{$0.2\pm{0.2}$} & \textbf{$0.0\pm0.0$} & \textbf{$1.0\pm0.0$}\tabularnewline
\midrule
\multirow{10}{*}{30} & \multirow{2}{*}{100} & DAGMA & $87.0\pm{10.6}$ & $0.19\pm{0.03}$ & $0.78\pm{0.02}$\tabularnewline
 &  & $\ours$ & \textbf{$51.8\pm{7.9}$} & \textbf{$0.1\pm{0.02}$} & \textbf{$0.86\pm{0.02}$}\tabularnewline
\cmidrule{2-6}
 & \multirow{2}{*}{500} & DAGMA & $67.0\pm{9.5}$ & $0.13\pm{0.02}$ & $0.81\pm{0.02}$\tabularnewline
 &  & $\ours$ & \textbf{$3.8\pm{0.6}$} & \textbf{$0.01\pm0.0$} & \textbf{$0.99\pm0.0$}\tabularnewline
\cmidrule{2-6}
 & \multirow{2}{*}{1000} & DAGMA & $67.6\pm{8.0}$ & $0.14\pm{0.02}$ & $0.82\pm{0.02}$\tabularnewline
 &  & $\ours$ & \textbf{$0.2\pm{0.2}$} & \textbf{$0.0\pm0.0$} & \textbf{$1.0\pm0.0$}\tabularnewline
\cmidrule{2-6}
 & \multirow{2}{*}{2000} & DAGMA & $65.8\pm{7.9}$ & $0.13\pm{0.02}$ & $0.82\pm{0.02}$\tabularnewline
 &  & $\ours$ & \textbf{$0.0\pm0.0$} & \textbf{$0.0\pm0.0$} & \textbf{$1.0\pm0.0$}\tabularnewline
\cmidrule{2-6}
 & \multirow{2}{*}{5000} & DAGMA & $68.4\pm{10.2}$ & $0.14\pm{0.02}$ & $0.81\pm{0.03}$\tabularnewline
 &  & $\ours$ & \textbf{$0.0\pm0.0$} & \textbf{$0.0\pm0.0$} & \textbf{$1.0\pm0.0$}\tabularnewline
\midrule
\multirow{10}{*}{50} & \multirow{2}{*}{100} & DAGMA & $242.0\pm{15.6}$ & $0.34\pm{0.02}$ & $0.76\pm{0.01}$\tabularnewline
 &  & $\ours$ & \textbf{$94.2\pm{10.1}$} & \textbf{$0.14\pm{0.02}$} & \textbf{$0.9\pm{0.01}$}\tabularnewline
\cmidrule{2-6}
 & \multirow{2}{*}{500} & DAGMA & $213.2\pm{20.8}$ & $0.31\pm{0.03}$ & $0.78\pm{0.01}$\tabularnewline
 &  & $\ours$ & \textbf{$13.6\pm{5.5}$} & \textbf{$0.02\pm{0.01}$} & \textbf{$0.99\pm{0.00}$}\tabularnewline
\cmidrule{2-6}
 & \multirow{2}{*}{1000} & DAGMA & $215.2\pm{26.6}$ & $0.3\pm{0.04}$ & $0.77\pm{0.02}$\tabularnewline
 &  & $\ours$ & \textbf{$2.2\pm{1.0}$} & \textbf{$0.0\pm0.0$} & \textbf{$1.0\pm0.0$}\tabularnewline
\cmidrule{2-6}
 & \multirow{2}{*}{2000} & DAGMA & $213.2\pm{25.8}$ & $0.31\pm{0.03}$ & $0.78\pm{0.02}$\tabularnewline
 &  & $\ours$ & \textbf{$1.8\pm{0.8}$} & \textbf{$0.0\pm0.0$} & \textbf{$1.0\pm0.0$}\tabularnewline
\cmidrule{2-6}
 & \multirow{2}{*}{5000} & DAGMA & $210.2\pm{32.4}$ & $0.3\pm{0.04}$ & $0.78\pm{0.02}$\tabularnewline
 &  & $\ours$ & \textbf{$0.7\pm{0.5}$} & \textbf{$0.0\pm0.0$} & \textbf{$1.0\pm0.0$}\tabularnewline
\bottomrule
\end{tabular}}
\end{table}

\subsection{Results on SF graphs}

Table~\ref{tab:SF-8} investigates the effect of dimensionality and
sample size on linear-Gaussian data and Scale-Free (SF) graphs with
8 parents per node.

\begin{table}[H]
\caption{Causal discovery performance as function of dimensionalities and sample
size on SF-8 graphs. The numbers are \textit{mean \textpm{} standard
error} over 5 random datasets.\label{tab:SF-8}}

\centering{}\resizebox{.7\textwidth}{!}{%
\begin{tabular}{cccccc}
\toprule 
Nodes & Samples & Method & SHD ($\downarrow$) & FDR ($\downarrow$) & TPR ($\uparrow$)\tabularnewline
\midrule
\multirow{10}{*}{20} & \multirow{2}{*}{100} & DAGMA & \textbf{$14.6\pm{3.5}$} & \textbf{$0.07\pm{0.03}$} & $0.87\pm{0.02}$\tabularnewline
 &  & $\ours$ & $16.0\pm{2.8}$ & $0.09\pm{0.02}$ & \textbf{$0.89\pm{0.03}$}\tabularnewline
\cmidrule{2-6}
 & \multirow{2}{*}{500} & DAGMA & $6.8\pm{1.4}$ & $0.02\pm{0.01}$ & $0.93\pm{0.01}$\tabularnewline
 &  & $\ours$ & \textbf{$0.6\pm{0.3}$} & \textbf{$0.01\pm{0.00}$} & \textbf{$1.0\pm{0.00}$}\tabularnewline
\cmidrule{2-6}
 & \multirow{2}{*}{$1000$} & DAGMA & $7.0\pm{1.7}$ & $0.02\pm{0.01}$ & $0.93\pm{0.02}$\tabularnewline
 &  & $\ours$ & \textbf{$0.2\pm{0.2}$} & \textbf{$0.0\pm{0.00}$} & \textbf{$1.0\pm{0.00}$}\tabularnewline
\cmidrule{2-6}
 & \multirow{2}{*}{2000} & DAGMA & $9.0\pm{3.0}$ & $0.04\pm{0.02}$ & $0.91\pm{0.03}$\tabularnewline
 &  & $\ours$ & \textbf{$0.2\pm{0.2}$} & \textbf{$0.0\pm{0.00}$} & \textbf{$1.0\pm{0.00}$}\tabularnewline
\cmidrule{2-6}
 & \multirow{2}{*}{5000} & DAGMA & $6.6\pm{2.5}$ & $0.03\pm{0.02}$ & $0.94\pm{0.02}$\tabularnewline
 &  & $\ours$ & \textbf{$0.2\pm{0.2}$} & \textbf{$0.0\pm{0.00}$} & \textbf{$1.0\pm{0.00}$}\tabularnewline
\midrule
\multirow{10}{*}{30} & \multirow{2}{*}{100} & DAGMA & $47.2\pm{7.1}$ & $0.15\pm{0.04}$ & $0.84\pm{0.01}$\tabularnewline
 &  & $\ours$ & \textbf{$30.2\pm{4.4}$} & \textbf{$0.1\pm{0.01}$} & \textbf{$0.92\pm{0.01}$}\tabularnewline
\cmidrule{2-6}
 & \multirow{2}{*}{500} & DAGMA & $45.8\pm{7.9}$ & $0.16\pm{0.03}$ & $0.85\pm{0.02}$\tabularnewline
 &  & $\ours$ & \textbf{$0.2\pm{0.2}$} & \textbf{$0.0\pm0.0$} & \textbf{$1.0\pm0.0$}\tabularnewline
\cmidrule{2-6}
 & \multirow{2}{*}{1000} & DAGMA & $43.6\pm{10.0}$ & $0.14\pm{0.04}$ & $0.86\pm{0.03}$\tabularnewline
 &  & $\ours$ & \textbf{$0.2\pm{0.2}$} & \textbf{$0.0\pm0.0$} & \textbf{$1.0\pm0.0$}\tabularnewline
\cmidrule{2-6}
 & \multirow{2}{*}{2000} & DAGMA & $36.6\pm{9.7}$ & $0.12\pm{0.04}$ & $0.89\pm{0.02}$\tabularnewline
 &  & $\ours$ & \textbf{$0.0\pm0.0$} & \textbf{$0.0\pm0.0$} & \textbf{$1.0\pm0.0$}\tabularnewline
\cmidrule{2-6}
 & \multirow{2}{*}{5000} & DAGMA & $27.4\pm{4.7}$ & $0.09\pm{0.03}$ & $0.9\pm{0.01}$\tabularnewline
 &  & $\ours$ & \textbf{$0.0\pm0.0$} & \textbf{$0.0\pm0.0$} & \textbf{$1.0\pm0.0$}\tabularnewline
\midrule
\multirow{10}{*}{50} & \multirow{2}{*}{100} & DAGMA & $101.2\pm{18.3}$ & $0.18\pm{0.04}$ & $0.84\pm{0.02}$\tabularnewline
 &  & $\ours$ & \textbf{$54.2\pm{7.9}$} & \textbf{$0.11\pm{0.01}$} & \textbf{$0.92\pm{0.01}$}\tabularnewline
\cmidrule{2-6}
 & \multirow{2}{*}{500} & DAGMA & $68.4\pm{10.6}$ & $0.13\pm{0.02}$ & $0.89\pm{0.01}$\tabularnewline
 &  & $\ours$ & \textbf{$9.8\pm{5.1}$} & \textbf{$0.03\pm{0.01}$} & \textbf{$0.99\pm{0.00}$}\tabularnewline
\cmidrule{2-6}
 & \multirow{2}{*}{1000} & DAGMA & $71.8\pm{12.6}$ & $0.13\pm{0.03}$ & $0.89\pm{0.01}$\tabularnewline
 &  & $\ours$ & \textbf{$5.2\pm{3.3}$} & \textbf{$0.01\pm{0.01}$} & \textbf{$1.0\pm0.0$}\tabularnewline
\cmidrule{2-6}
 & \multirow{2}{*}{2000} & DAGMA & $65.2\pm{9.8}$ & $0.12\pm{0.02}$ & $0.9\pm{0.02}$\tabularnewline
 &  & $\ours$ & \textbf{$5.4\pm{4.4}$} & \textbf{$0.02\pm{0.01}$} & \textbf{$1.0\pm0.0$}\tabularnewline
\cmidrule{2-6}
 & \multirow{2}{*}{5000} & DAGMA & $75.0\pm{25.1}$ & $0.14\pm{0.05}$ & $0.89\pm{0.04}$\tabularnewline
 &  & $\ours$ & \textbf{$8.8\pm{5.2}$} & \textbf{$0.02\pm{0.01}$} & \textbf{$0.99\pm{0.00}$}\tabularnewline
\bottomrule
\end{tabular}}
\end{table}

\subsection{Ablation Studies\label{subsec:Ablation-Studies}}

\subsubsection{Effect of RL method and learning rate}

In Table~\ref{tab:learning-rate}, we conduct ablation studies on
the effect of the choices of RL method and learning rate on the performance
of our $\ours$ method with the BIC score.

\begin{table}[H]
\begin{centering}
\caption{Performance sensitivity of $\protect\ours$ with BIC score subjected
to the variations of learning rate. We employ linear-Gaussian datasets
with 30 nodes on ER-8 graphs and use CORL \citep{Wang_etal_2021Ordering}
and DAGMA \citep{Bello_etal_22dagma} as reference. For each row,
we compute the means and standard errors over 5 independent datasets.
\textbf{Bold}: better performance than the baselines. Unless otherwise
indicated, the remaining hyper-parameters are used according to Table~\ref{tab:Hyper-parameters}
in the Appendix.\label{tab:learning-rate}}
\par\end{centering}
\centering{}\resizebox{\textwidth}{!}{%
\begin{tabular}{cclccc}
\toprule 
Method & RL method & Learning rate & SHD ($\downarrow$) & FDR ($\downarrow$) & TPR ($\uparrow$)\tabularnewline
\midrule 
CORL & - & - & $\phantom{0}82.4\pm\phantom{0}{22.3}$ & $0.23\pm{0.05}$ & $0.87\pm{0.04}$\tabularnewline
DAGMA & - & - & $\phantom{0}67.6\pm\phantom{0}{8.0}$ & $0.14\pm{0.02}$ & $0.82\pm{0.02}$\tabularnewline
\midrule
\multirow{16}{*}{$\ours$ (Ours)} & \multirow{8}{*}{A2C} & 0.00001 & $\phantom{0}79.4\pm\phantom{0}{6.7}$ & $0.22\pm{0.02}$ & $0.86\pm{0.01}$\tabularnewline
 &  & 0.00005 & $\mathbf{\phantom{0}12.6\pm\phantom{00}{3.1}}$ & $\mathbf{0.04\pm{0.01}}$ & $\mathbf{0.98\pm{0.00}}$\tabularnewline
 &  & 0.0001 & $\mathbf{\phantom{00}1.2\pm\phantom{00}{0.4}}$ & \textbf{$\mathbf{0.00\pm{0.00}}$} & $\mathbf{1.00\pm0.00}$\tabularnewline
 &  & 0.0005 & $\mathbf{\phantom{00}2.2\pm\phantom{00}{1.3}}$ & $\mathbf{0.01\pm{0.00}}$ & $\mathbf{1.00\pm0.00}$\tabularnewline
 &  & 0.001 & $\mathbf{\phantom{0}22.8\pm\phantom{00}{3.9}}$ & $\mathbf{0.07\pm{0.01}}$ & $\mathbf{0.96\pm{0.01}}$\tabularnewline
 &  & 0.005 & $123.8\pm{49.7}$ & $0.34\pm{0.16}$ & $0.54\pm{0.21}$\tabularnewline
 &  & 0.01 & $223.4\pm\phantom{0}{5.6}$ & $0.48\pm{0.03}$ & $0.13\pm{0.04}$\tabularnewline
 &  & 0.05 & $232.6\pm\phantom{00}{3.0}$ & $0.59\pm{0.03}$ & $0.11\pm{0.01}$\tabularnewline
\cmidrule{2-6}
 & \multirow{8}{*}{PPO} & 0.00001 & $\phantom{0}77.8\pm\phantom{0}{7.5}$ & $0.21\pm{0.02}$ & $0.86\pm{0.01}$\tabularnewline
 &  & 0.00005 & $\mathbf{\phantom{00}9.0\pm\phantom{00}{2.9}}$ & $\mathbf{0.03\pm{0.01}}$ & $\mathbf{0.99\pm{0.00}}$\tabularnewline
 &  & 0.0001 & $\mathbf{\phantom{00}1.2\pm\phantom{00}{0.4}}$ & $\mathbf{0.00\pm0.00}$ & $\mathbf{1.00\pm0.00}$\tabularnewline
 &  & 0.0005 & $\mathbf{\phantom{00}0.4\pm\phantom{00}{0.3}}$ & $\mathbf{0.00\pm0.00}$ & $\mathbf{1.00\pm0.00}$\tabularnewline
 &  & 0.001 & $\mathbf{\phantom{00}4.0\pm\phantom{00}{2.0}}$ & $\mathbf{0.01\pm{0.01}}$ & $\mathbf{0.99\pm{0.00}}$\tabularnewline
 &  & 0.005 & $204.8\pm\phantom{0}{38.8}$ & $0.65\pm{0.13}$ & $0.21\pm{0.18}$\tabularnewline
 &  & 0.01 & $225.4\pm\phantom{0}{6.8}$ & $0.53\pm{0.04}$ & $0.11\pm{0.04}$\tabularnewline
 &  & 0.05 & $231.4\pm\phantom{0}{5.1}$ & $0.54\pm{0.02}$ & $0.13\pm{0.01}$\tabularnewline
\bottomrule
\end{tabular}}
\end{table}

\subsubsection{Effect of Entropy regularization}

In Table~\ref{tab:entropy-regularization}, we conduct ablation studies
on the effect of the choices of entropy regularization on the performance
of our $\ours$ method with the BIC score.

\begin{table}[H]
\caption{Performance sensitivity of $\protect\ours$ with BIC score subjected
to the variations of Entropy regularization weight. We employ linear-Gaussian
datasets with 30 nodes on ER-8 graphs and use CORL \citep{Wang_etal_2021Ordering}
and DAGMA \citep{Bello_etal_22dagma} as reference. Our scoring function
is BIC. For each row, we compute the means and standard errors over
5 independent datasets. \textbf{Bold}: better performance than the
baselines. Unless otherwise indicated, the remaining hyper-parameters
are used according to Table~\ref{tab:Hyper-parameters}.\label{tab:entropy-regularization}}

\centering{}\resizebox{\textwidth}{!}{%
\begin{tabular}{ccllccc}
\toprule 
Method & RL method & Learning rate & Entropy Coef. & SHD ($\downarrow$) & FDR ($\downarrow$) & TPR ($\uparrow$)\tabularnewline
\midrule
CORL &  & - & - & $\phantom{0}82.4\pm\phantom{0}{22.3}$ & $0.23\pm{0.05}$ & $0.87\pm{0.04}$\tabularnewline
DAGMA &  & - & - & $\phantom{0}67.6\pm\phantom{0}{8.0}$ & $0.14\pm{0.02}$ & $0.82\pm{0.02}$\tabularnewline
\midrule
\multirow{20}{*}{$\ours$ (Ours)} & \multirow{10}{*}{A2C} & \multirow{5}{*}{0.0001} & 0 & $\mathbf{\phantom{00}1.20\pm{0.4}}$ & $\mathbf{0.00\pm0.00}$ & \textbf{$\mathbf{1.00\pm0.00}$}\tabularnewline
 &  &  & 0.001 & $\mathbf{\phantom{00}1.2\pm{0.4}}$ & $\mathbf{0.00\pm0.00}$ & $\mathbf{1.00\pm0.00}$\tabularnewline
 &  &  & 0.01 & $\mathbf{\phantom{00}1.6\pm{0.5}}$ & $\mathbf{0.01\pm{0.00}}$ & $\mathbf{1.00\pm0.00}$\tabularnewline
 &  &  & 0.1 & $\mathbf{\phantom{0}28.8\pm{7.7}}$ & $\mathbf{0.09\pm{0.03}}$ & $\mathbf{0.97\pm{0.01}}$\tabularnewline
 &  &  & 1 & $127.0\pm{13.1}$ & $0.31\pm{0.02}$ & $0.78\pm{0.03}$\tabularnewline
\cmidrule{3-7}
 &  & \multirow{5}{*}{0.0005} & 0 & $\mathbf{\phantom{00}2.2\pm{1.3}}$ & $\mathbf{0.01\pm{0.00}}$ & $\mathbf{1.00\pm0.00}$\tabularnewline
 &  &  & 0.001 & $\mathbf{\phantom{00}3.4\pm{1.9}}$ & $\mathbf{0.01\pm{0.00}}$ & $\mathbf{0.99\pm{0.00}}$\tabularnewline
 &  &  & 0.01 & $\mathbf{\phantom{00}0.2\pm{0.2}}$ & $\mathbf{0.00\pm0.00}$ & $\mathbf{1.00\pm0.00}$\tabularnewline
 &  &  & 0.1 & $\mathbf{\phantom{0}48.2\pm{11.7}}$ & $0.14\pm{0.04}$ & $\mathbf{0.93\pm{0.01}}$\tabularnewline
 &  &  & 1 & $189.2\pm{8.5}$ & $0.40\pm{0.01}$ & $0.51\pm{0.02}$\tabularnewline
\cmidrule{2-7}
 & \multirow{10}{*}{PPO} & \multirow{5}{*}{0.0001} & 0 & $\mathbf{\phantom{00}1.2\pm{0.4}}$ & $\mathbf{0.00\pm0.00}$ & $\mathbf{1.00\pm0.00}$\tabularnewline
 &  &  & 0.001 & $\mathbf{\phantom{00}1.0\pm{0.3}}$ & $\mathbf{0.00\pm0.00}$ & $\mathbf{1.00\pm0.00}$\tabularnewline
 &  &  & 0.01 & $\mathbf{\phantom{00}1.6\pm{0.5}}$ & $\mathbf{0.01\pm{0.00}}$ & $\mathbf{1.00\pm0.00}$\tabularnewline
 &  &  & 0.1 & $\mathbf{\phantom{0}28.6\pm{6.8}}$ & $\mathbf{0.09\pm{0.02}}$ & $\mathbf{0.97\pm{0.00}}$\tabularnewline
 &  &  & 1 & $130.2\pm{12.1}$ & $0.32\pm{0.02}$ & $0.77\pm{0.03}$\tabularnewline
\cmidrule{3-7}
 &  & \multirow{5}{*}{0.0005} & 0 & $\mathbf{\phantom{00}0.4\pm{0.3}}$ & $\mathbf{0.00\pm0.00}$ & $\mathbf{1.00\pm0.00}$\tabularnewline
 &  &  & 0.001 & $\mathbf{\phantom{00}0.2\pm{0.2}}$ & $\mathbf{0.00\pm0.00}$ & $\mathbf{1.00\pm0.00}$\tabularnewline
 &  &  & 0.01 & $\mathbf{\phantom{00}1.6\pm{1.3}}$ & $\mathbf{0.00\pm{0.00}}$ & $\mathbf{1.00\pm{0.00}}$\tabularnewline
 &  &  & 0.1 & $\mathbf{\phantom{0}51.8\pm{11.8}}$ & $0.15\pm{0.04}$ & $\mathbf{0.92\pm{0.01}}$\tabularnewline
 &  &  & 1 & $178.4\pm{9.5}$ & $0.38\pm{0.02}$ & $0.58\pm{0.02}$\tabularnewline
\bottomrule
\end{tabular}}
\end{table}

\subsubsection{Effect of sparsity regularization}

In Table~\ref{tab:sparisty-regularization}, we conduct ablation
studies on the effect of the choices of sparsity regularization strength
of our $\ours$ method with the LS score.

\begin{table}[H]
\begin{centering}
\caption{Performance sensitivity of $\protect\ours$ with LS score subjected
to the variations of Sparsity regularization weight for the LS score.
We employ linear-Gaussian datasets with 30 nodes on ER-8 graphs and
use CORL \citealp{Wang_etal_2021Ordering} and DAGMA \citealp{Bello_etal_22dagma}
as reference. For each row, we compute the means and standard errors
over 5 independent datasets. \textbf{Bold}: better performance than
the baselines. Unless otherwise indicated, the remaining hyper-parameters
are used according to Table~\ref{tab:Hyper-parameters}.\label{tab:sparisty-regularization}}
\par\end{centering}
\centering{}\resizebox{\textwidth}{!}{%
\begin{tabular}{cclccc}
\toprule 
Method & RL method & Sparsity regularizer $\lambda_{0}$ & SHD ($\downarrow$) & FDR ($\downarrow$) & TPR ($\uparrow$)\tabularnewline
\midrule 
CORL & - & - & $\phantom{0}82.4\pm\phantom{0}{22.3}$ & $0.23\pm{0.05}$ & $0.87\pm{0.04}$\tabularnewline
DAGMA & - & - & $\phantom{0}67.6\pm\phantom{0}{8.0}$ & $0.14\pm{0.02}$ & $0.82\pm{0.02}$\tabularnewline
\midrule
\multirow{10}{*}{$\ours$ (Ours)} & \multirow{5}{*}{A2C} & 0 & $\mathbf{\phantom{00}8.4\pm\phantom{0}{4.2}}$ & $\mathbf{0.02\pm{0.01}}$ & $\mathbf{0.98\pm{0.01}}$\tabularnewline
 &  & 0.000001 & $\mathbf{\phantom{00}2.8\pm\phantom{0}{1.0}}$ & $\mathbf{0.01\pm{0.00}}$ & $\mathbf{0.99\pm{0.00}}$\tabularnewline
 &  & 0.0001 & $\mathbf{\phantom{0}50.0\pm{27.7}}$ & $\mathbf{0.07\pm{0.04}}$ & $0.83\pm{0.10}$\tabularnewline
 &  & 0.01 & $189.0\pm{14.2}$ & $0.34\pm{0.04}$ & $0.28\pm{0.07}$\tabularnewline
 &  & 1 & $226.4\pm{4.7}$ & $0.50\pm{0.03}$ & $0.08\pm{0.02}$\tabularnewline
\cmidrule{2-6}
 & \multirow{5}{*}{PPO} & 0 & $\mathbf{\phantom{00}2.0\pm\phantom{0}{0.5}}$ & $\mathbf{0.01\pm{0.00}}$ & $\mathbf{1.00\pm0.00}$\tabularnewline
 &  & 0.000001 & $\mathbf{\phantom{00}1.2\pm\phantom{0}{0.6}}$ & $\mathbf{0.00\pm{0.00}}$ & $\mathbf{1.00\pm0.00}$\tabularnewline
 &  & 0.0001 & $\mathbf{\phantom{0}42.0\pm{26.4}}$ & $\mathbf{0.04\pm{0.02}}$ & $0.84\pm{0.10}$\tabularnewline
 &  & 0.01 & $178.8\pm{19.4}$ & $0.29\pm{0.05}$ & $0.31\pm{0.08}$\tabularnewline
 &  & 1 & $221.6\pm{6.3}$ & $0.42\pm{0.04}$ & $0.09\pm{0.02}$\tabularnewline
\bottomrule
\end{tabular}}
\end{table}

\subsection{Model Misspecification Results\label{subsec:Model-Misspecification-Results}}

In Table~\ref{tab:nonlinear-pnl}, we study causal model misspecification
on nonlinear data. We assume data is generated via additive noise
models with Gaussian process similarly to \citet{Zhu_etal_2020Causal,Wang_etal_2021Ordering,Yang_etal_2023Reinforcement},
but test with datasets generated by the Post-nonlinear Gaussian Process
model $X_{i}:=\sigma\left(f_{i}\left(\mathbf{X}_{\pa_{i}}\right)+\text{Laplace}\left(0,1\right)\right)$,
which is identifiable \citep{Zhang_etal_2009Identifiability} and
produced by \citet{Lachapelle_etal_2020Gradient}.

\begin{table}[H]
\caption{Causal discovery performance on nonlinear data with PNL-GP model.
The data is generated with 10-node ER-4 graphs and post nonlinear
Gaussian processes as causal mechanisms. The performance metrics are
Structural Hamming Distance (SHD), False Detection Rate (FDR), and
True Positive Rate (TPR). Lower SHD and FDR values are preferable,
while higher values are better for TPR. The numbers are \textit{mean
\textpm{} standard errors} over 5 independent runs. Since the graphs
are dense, we also study the effect of pruning the output graphs.\label{tab:nonlinear-pnl}}

\begin{centering}
\par\end{centering}
\centering{}\resizebox{\columnwidth}{!}{%
\begin{tabular}{ccccccc}
\toprule 
 & \multicolumn{3}{c}{No Pruning} & \multicolumn{3}{c}{CAM Pruning}\tabularnewline
\midrule 
Method & SHD ($\downarrow$) & FDR ($\downarrow$) & TPR ($\uparrow$) & SHD ($\downarrow$) & FDR ($\downarrow$) & TPR ($\uparrow$)\tabularnewline
\midrule
NOTEARS \citep{Zheng_etal_20Learning} & $29.4\pm{1.1}$ & $0.47\pm{0.02}$ & $0.34\pm{0.03}$ & $29.4\pm{1.1}$ & $0.46\pm{0.02}$ & $0.34\pm{0.03}$\tabularnewline
DAGMA \citep{Bello_etal_22dagma} & $27.6\pm{3.0}$ & $0.45\pm{0.06}$ & $0.38\pm{0.06}$ & $27.4\pm{3.1}$ & $0.51\pm{0.08}$ & $0.38\pm{0.06}$\tabularnewline
NOCURL \citep{Yu_etal_2021Dags} & $34.8\pm{1.1}$ & $0.51\pm{0.08}$ & $0.15\pm{0.03}$ & $34.8\pm{1.1}$ & $0.45\pm{0.07}$ & $0.15\pm{0.03}$\tabularnewline
COSMO \citep{Massidda_etal_2023Constraint} & $29.4\pm{0.8}$ & $0.41\pm{0.04}$ & $0.33\pm{0.02}$ & $29.4\pm{0.8}$ & $0.41\pm{0.04}$ & $0.33\pm{0.02}$\tabularnewline
CORL \citep{Wang_etal_2021Ordering} & \textbf{$\mathbf{\phantom{0}6.6\pm{0.9}}$} & $0.15\pm{0.02}$ & \textbf{$\mathbf{0.95\pm{0.01}}$} & \textbf{$\mathbf{\phantom{0}4.2\pm{0.7}}$} & $0.08\pm{0.02}$ & \textbf{$\mathbf{0.94\pm{0.02}}$}\tabularnewline
RCL-OG \citep{Yang_etal_2023Reinforcement} & $\phantom{0}8.6\pm{1.4}$ & $0.19\pm{0.03}$ & \uline{\mbox{$0.90\pm{0.02}$}} & $\phantom{0}7.2\pm{1.3}$ & $0.13\pm{0.03}$ & \uline{\mbox{$0.89\pm{0.02}$}}\tabularnewline
\midrule 
$\ours$ (Ours) & \uline{\mbox{$\phantom{0}6.8\pm{1.3}$}} & \textbf{$\mathbf{0.04\pm{0.03}}$} & $0.85\pm{0.03}$ & \uline{\mbox{$\phantom{0}6.8\pm{1.3}$}} & \textbf{$\mathbf{0.03\pm{0.03}}$} & $0.84\pm{0.03}$\tabularnewline
\bottomrule
\end{tabular}}
\end{table}

In addition, we also study the robustness of our method under noisy
data. Specifically, we corrupt the data by adding Gaussian noises
($\sigma^{2}=0.1$) to $p\%$ random entries of the design matrix
$\mathcal{D}\in\mathbb{R}^{n\times d}$, and report the results for
the challenging 30-node ER-8 graphs in Table~\ref{tab:noisy-data},
showing that our method still consistently surpasses the baselines,
even with increasing noise levels.

\begin{table}[H]
\caption{Causal Discovery Performance under Noisy Data. The numbers are \textit{mean
\textpm{} standard error} over 5 random datasets on 30-node ER-8 graphs\label{tab:noisy-data}}

\begin{centering}
\resizebox{0.7\columnwidth}{!}{%
\begin{tabular}{ccccc}
\toprule 
Method \textbackslash{} $p$ & 1\% & 3\% & 5\% & 10\%\tabularnewline
\midrule 
NOTEARS & $181.4\pm{4.0}$ & $189.0\pm{2.5}$ & $190.8\pm{2.8}$ & $194.4\pm{3.9}$\tabularnewline
DAGMA & $67.0\pm{8.5}$ & $75.0\pm{9.7}$ & $79.0\pm{6.5}$ & $94.0\pm{6.1}$\tabularnewline
NOCURL & $142.2\pm{4.2}$ & $147.0\pm{4.8}$ & $146.0\pm{5.1}$ & $153.0\pm{5.3}$\tabularnewline
COSMO & $96.6\pm{6.1}$ & $112.8\pm{7.6}$ & $111.0\pm{8.5}$ & $136.2\pm{14.3}$\tabularnewline
\midrule 
$\ours$ (Ours) & $\mathbf{8.4\pm{0.8}}$ & $\mathbf{27.2\pm{3.0}}$ & $\mathbf{41.4\pm{5.0}}$ & $\mathbf{66.0\pm{4.0}}$\tabularnewline
\bottomrule
\end{tabular}}
\par\end{centering}
\end{table}

Moreover, we investigate our method's performance under the dependence
of noises, which is equivalent to the existence of hidden confounders.
Specifically, we create datasets with hidden variables by generating
datasets with $k$ additional variables then remove them. We present
the results (for 30-node ER-8 graphs) on such datasets in Table~\ref{tab:hidden-confounders},
where our method also outperforms all baselines, even with more hidden
confounders.

\begin{table}[H]

\caption{Causal Discovery Performance under Hidden Confounders. The numbers
are \textit{mean \textpm{} standard error} over 5 random datasets
on 30-node ER-8 graphs.\label{tab:hidden-confounders}}

\begin{centering}
\resizebox{0.7\columnwidth}{!}{%
\begin{tabular}{ccccc}
\toprule 
Method \textbackslash{} $k$ & 1 & 2 & 3 & 4\tabularnewline
\midrule 
COSMO & $93.0\pm{16.2}$ & $133.6\pm{11.2}$ & $132.8\pm{7.1}$ & $136.4\pm{8.5}$\tabularnewline
DAGMA & $53.0\pm{11.9}$ & $77.6\pm{9.3}$ & $96.0\pm{7.6}$ & $124.0\pm{5.3}$\tabularnewline
NOCURL & $149.2\pm{7.7}$ & $159.8\pm{8.4}$ & $153.2\pm{5.4}$ & $158.4\pm{5.3}$\tabularnewline
NOTEARS & $185.2\pm{10.4}$ & $172.6\pm{5.7}$ & $173.2\pm{5.0}$ & $188.8\pm{4.4}$\tabularnewline
\midrule 
$\ours$ (Ours) & $29.4\pm{6.2}$ & $58.2\pm{4.9}$ & $82.2\pm{6.1}$ & $106.6\pm{2.5}$\tabularnewline
\bottomrule
\end{tabular}}
\par\end{centering}
\end{table}

\subsection{Results on nonlinear data with MLPs\label{subsec:nonlinear-mlp}}

In this section, we study the performance of our ALIAS method on nonlinear
causal models with neural networks, which are popular among gradient-based
methods \citep{Zheng_etal_18DAGs,Bello_etal_22dagma,Massidda_etal_2023Constraint}.
Specifically, similar to the GP data, we consider 10-node ER-4 graphs
with multiple-layer perceptron (MLP) causal mechanisms with standard
Gaussian noise, as used in NOTEARS and DAGMA. These MLPs have one
hidden layer with 100 units and sigmoid activation, and we generate
1,000 samples per dataset.

We employ the $\ours$ variant that uses GP regressor with the exact
configuration as in the GP experiments (Section~\ref{subsec:Nonlinear-GP}),
but now using the equal variance variant of the BIC score. This is
compared against gradient-based baselines that support MLPs as is.
These include NOTEARS, DAGMA, and COSMO, all of which model nonlinearity
using MLPs with the same configuration, namely one hidden layer of
10 units with sigmoid activation.

The results are reported in Table~\ref{tab:nonlinear-mlp}, showing
that despite the potential disadvantage of using GP regression for
MLP-based SEMs, our method still achieves the lowest SHD of near zero
compared to all three baselines, which are specifically designed to
model MLP data.

\begin{table}[H]
\caption{Causal discovery performance on nonlinear data with MLP model. The
data is generated with 10-node ER-4 graphs and MLP model with one
hidden layer of 100 units and sigmoid activation as causal mechanisms.
The performance metrics are Structural Hamming Distance (SHD), False
Detection Rate (FDR), and True Positive Rate (TPR). Lower SHD and
FDR values are preferable, while higher values are better for TPR.
The numbers are \textit{mean \textpm{} standard errors} over 5 independent
runs. Since the graphs are dense, we also study the effect of pruning
the output graphs.\label{tab:nonlinear-mlp}}

\begin{centering}
\resizebox{\columnwidth}{!}{%
\begin{tabular}{ccccccc}
\toprule 
 & \multicolumn{3}{c}{No Pruning} & \multicolumn{3}{c}{CAM Pruning}\tabularnewline
\midrule
Method & SHD ($\downarrow$) & FDR ($\downarrow$) & TPR ($\uparrow$) & SHD ($\downarrow$) & FDR ($\downarrow$) & TPR ($\uparrow$)\tabularnewline
\midrule
NOTEARS \citep{Zheng_etal_20Learning} & $11.0\pm1.2$ & $0.21\pm0.01$ & $0.93\pm0.04$ & $10.0\pm1.9$ & $0.11\pm0.03$ & $0.80\pm0.04$\tabularnewline
DAGMA \citep{Bello_etal_22dagma} & $\phantom{0}7.6\pm1.7$ & $0.09\pm0.03$ & $0.84\pm0.05$ & $\phantom{0}8.8\pm1.6$ & $0.07\pm0.03$ & $0.78\pm0.04$\tabularnewline
COSMO \citep{Massidda_etal_2023Constraint} & $\phantom{0}9.4\pm1.0$ & $0.14\pm0.01$ & $0.87\pm0.03$ & $9.8\pm1.2$ & $0.09\pm0.02$ & $0.79\pm0.02$\tabularnewline
\midrule 
$\ours$ (Ours) & $\mathbf{\phantom{0}0.8\pm0.4}$ & $\mathbf{0.01\pm0.01}$ & $\mathbf{0.99\pm0.01}$ & $\mathbf{\phantom{0}3.8\pm0.4}$ & $\mathbf{0.00\pm0.00}$ & $\mathbf{0.89\pm0.01}$\tabularnewline
\bottomrule
\end{tabular}}
\par\end{centering}
\end{table}

\subsection{Results on regular weight range $\mathcal{U}\left(\left[-2,-0.5\right]\cup\left[0.5,2\right]\right)$}

In addition to our experiments on linear-Gaussian data with the large
weight range $\mathcal{U}\left(\left[-5,-2\right]\cup\left[2,5\right]\right)$
in Figure~\ref{fig:linear-gaussian}, here we also consider the regular
range $\mathcal{U}\left(\left[-2,-0.5\right]\cup\left[0.5,2\right]\right)$
for the linear weights. In Figure~\ref{fig:both-ranges} we present
the causal discovery results for both weight ranges under varying
graph sizes. It can be seen that larger weights pose significant challenges
to several methods, including NOTEARS, NOCURL, RL-BIC, RCL-OG, and
CORL. Meanwhile, our method consistently identifies true DAG in all
cases for both weight ranges, signifying its robustness to the data
variance.

\begin{figure}[H]

\begin{centering}
\includegraphics[width=1\columnwidth]{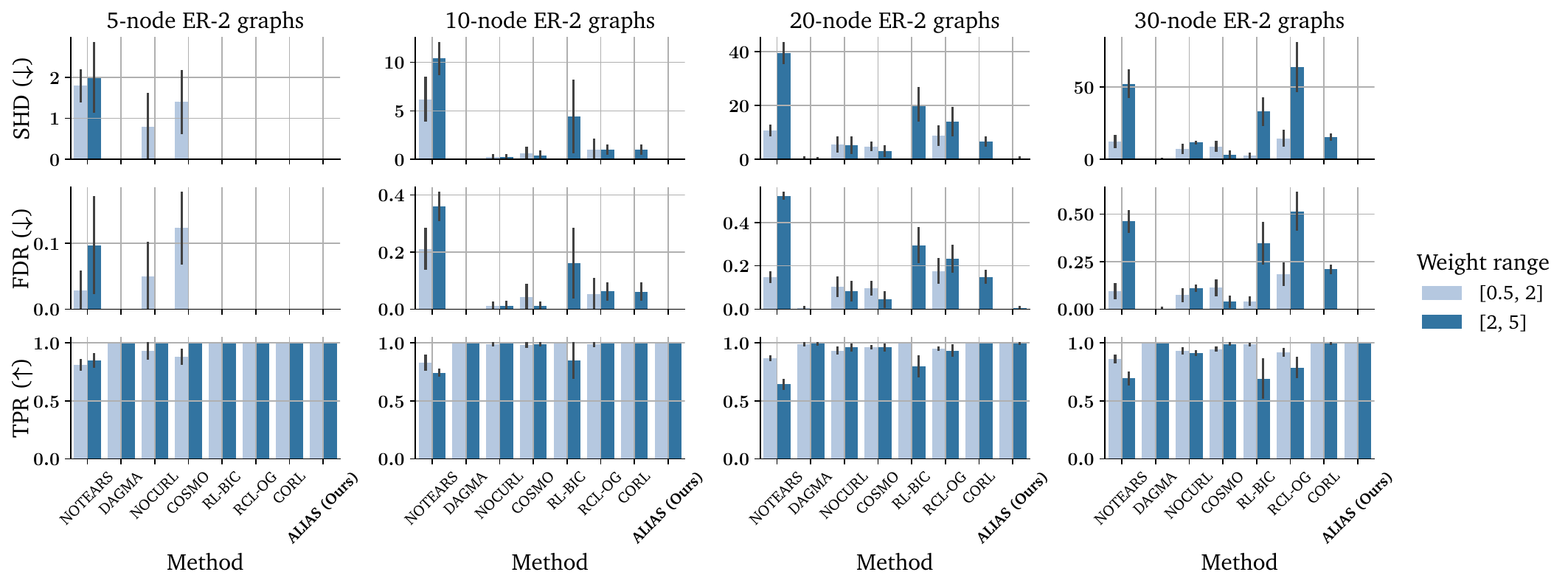}
\par\end{centering}
\caption{Causal Discovery Performance on linear-Gaussian data with small ($\mathcal{U}\left(\left[-2,-0.5\right]\cup\left[0.5,2\right]\right)$)
and large ($\mathcal{U}\left(\left[-5,-2\right]\cup\left[2,5\right]\right)$)
weight ranges. The error bars depict standard errors over 5 independent
runs.\label{fig:both-ranges}}

\end{figure}

\subsection{Comparison with Constraint-based methods}

In addition to the score-based baselines considered so far, in this
section, we also investigate the comparative performance of our $\ours$
method against constraint-based methods, which are also a prominent
approach in causal discovery. To be more specific, we consider the
classical PC method \citep{Spirtes_etal_00Causation}, as well as
more recent advances MARVEL \citep{Mokhtarian_etal_21Recursive} and
RSL \citep{Mokhtarian_etal_22Learning}. In this case, we emply gCastle's
implementation for PC and the official implementations provided at
\url{https://rcdpackage.com/} for RSL and MARVEL.

We evaluate these methods on linear-Gaussian datasets with varying
scales and densities, and the sample size is fixed to 1,000. The baselines
were configured according to their recommended settings, including
the Fisher's z test with significance levels of 0.05 for PC and $\frac{2}{d^{2}}$
for RSL and MARVEL, where $d$ is the number of nodes. Since constraint-based
methods may not orient all edges, for a fair evaluation, we compare
the undirected skeletons of the estimated and true graphs. The evaluation
metrics included SHD, precision, recall, and F1 scores. The results
presented in Table~\ref{tab:constraint-based} reveal that our method
demonstrates significantly higher accuracy than constraint-based approaches
in recovering the skeleton, achieving near-zero skeleton SHD across
all scenarios. In contrast, the baseline methods face considerable
challenges, particularly when applied to large and dense graphs.

\begin{table}[H]

\caption{Causal Discovery Performance in Comparison with Constraint-based Methods.
We compare the proposed method $\protect\ours$ with PC \citep{Spirtes_etal_00Causation},
MARVEL \citep{Mokhtarian_etal_21Recursive}, and RSL \citep{Mokhtarian_etal_22Learning}
on linear-Gaussian datasets with the regular weight range $\mathcal{U}\left(\left[-2,-0.5\right]\cup\left[0.5,2\right]\right)$.
The performance metrics are SHD (lower is better), Precision (higher
is better), Recall (higher is better), and $F_{1}$ score (higher
is better) between the skeletons of the estimated and true graphs.
The numbers are \textit{mean \textpm{} standard errors} over 5 independent
runs.\label{tab:constraint-based}}

\centering{}\resizebox{\columnwidth}{!}{%
\begin{tabular}{cccccc}
\toprule 
Data & Method & Skeleton SHD ($\downarrow$) & Skeleton Precision ($\uparrow$) & Skeleton Recall ($\uparrow$) & Skeleton $F_{1}$ ($\uparrow$)\tabularnewline
\midrule
\multirow{4}{*}{Sparse graphs (30-node ER-2)} & PC \citep{Spirtes_etal_00Causation} & $25.8\pm2.8$ & $0.88\pm0.03$ & $0.64\pm0.03$ & $0.74\pm0.03$\tabularnewline
 & MARVEL \citep{Mokhtarian_etal_21Recursive} & $15.6\pm2.1$ & $0.93\pm0.02$ & $0.79\pm0.03$ & $0.85\pm0.02$\tabularnewline
 & RSL \citep{Mokhtarian_etal_22Learning} & $13.8\pm1.5$ & $0.9\pm0.02$ & $0.86\pm0.01$ & $0.88\pm0.01$\tabularnewline
\cmidrule{2-6}
 & \textbf{ALIAS (Ours)} & $\mathbf{0.0\pm0.0}$ & $\mathbf{1.0\pm0.0}$ & $\mathbf{1.0\pm0.0}$ & \textbf{$\mathbf{1.0\pm0.0}$}\tabularnewline
\midrule
\multirow{4}{*}{Dense graphs (30-node ER-8)} & PC \citep{Spirtes_etal_00Causation} & $216.0\pm3.6$ & $0.65\pm0.02$ & $0.13\pm0.01$ & $0.21\pm0.02$\tabularnewline
 & MARVEL \citep{Mokhtarian_etal_21Recursive} & $221.8\pm3.8$ & $0.71\pm0.03$ & $0.05\pm0.0$ & $0.1\pm0.01$\tabularnewline
 & RSL \citep{Mokhtarian_etal_22Learning} & $185.4\pm5.6$ & $0.66\pm0.02$ & $0.39\pm0.01$ & $0.49\pm0.01$\tabularnewline
\cmidrule{2-6}
 & \textbf{ALIAS (Ours)} & $\mathbf{0.4\pm0.2}$ & $\mathbf{1.0\pm0.0}$ & $\mathbf{1.0\pm0.0}$ & $\mathbf{1.0\pm0.0}$\tabularnewline
\midrule
\multirow{4}{*}{Large graphs (200-node ER-2)} & PC \citep{Spirtes_etal_00Causation} & $216.6\pm10.5$ & $0.87\pm0.01$ & $0.57\pm0.02$ & $0.69\pm0.01$\tabularnewline
 & MARVEL \citep{Mokhtarian_etal_21Recursive} & $167.2\pm8.4$ & $0.91\pm0.01$ & $0.67\pm0.01$ & $0.77\pm0.01$\tabularnewline
 & RSL \citep{Mokhtarian_etal_22Learning} & $167.2\pm8.8$ & $0.87\pm0.01$ & $0.7\pm0.01$ & $0.78\pm0.01$\tabularnewline
\cmidrule{2-6}
 & \textbf{ALIAS (Ours)} & $\mathbf{0.8\pm0.6}$ & $\mathbf{1.0\pm0.0}$ & $\mathbf{1.0\pm0.0}$ & $\mathbf{1.0\pm0.0}$\tabularnewline
\bottomrule
\end{tabular}}
\end{table}

\subsection{On Varsortability}

\global\long\def\varsort{\texttt{Varsortability}}%

\global\long\def\sortnregress{\texttt{sortnregress}}%

Recently, it was suggested by \citet{Reisach_etall_21Beware} that
marginal variances share high dependencies with the topological ordering
of the true DAG for simple additive noise models. As such, sorting
the variables according to the variances may be sufficient to recover
the DAG, voiding the need for more sophisticated methods. More particularly,
the metric $\varsort\in\left[0,1\right]$ is proposed to assess the
correlation between the ordering of marginal data variances and the
causal ordering of the true DAG. Essentially, $\varsort$ calculates
the portion of directed paths in the ground-truth DAG that have increasing
marginal variances, and thus, if it is close to 1 then there is a
high chance that simply sorting the variances would reveal the true
DAG, and vice versa. It was shown that simple additive noise data
commonly employed in existing studies, starting with \citet{Zheng_etal_18DAGs},
exhibit high values of $\varsort$.

However, we show here that this is not entirely the case for our study.
Particularly, while the majority of existing studies employ \textit{sparse}
graphs for evaluation, where the densest graphs have an expected in-degree
of about 4 at most \citep{Zheng_etal_18DAGs,Yu_etal_2021Dags,Bello_etal_22dagma,Massidda_etal_2023Constraint},
in this study, we also consider much \textit{denser} graphs with an
expected in-degree of up to 8. This distinction is of high importance
because we find that $\texttt{Varsortability}$ decreases rapidly
with the graph density and reaches near-zero at ER-8 graphs (Table~\ref{tab:varsortability}),
indicating that our data settings is much more nontrivial, and cannot
be resolved simply by sorting the variances. Furthermore, since $\varsort$
is close to 0 in our data, one may expect that simply sorting the
nodes by \textit{decreasing} variances may work. In Table~\ref{tab:sortnregress}
we show that this is not the case, where the $\sortnregress$ algorithm
proposed in conjunction to $\varsort$ \citep{Reisach_etall_21Beware}---which
first sorts the nodes by marginal variances and then perform linear
feature selection to convert the ordering to a DAG---cannot recover
the true DAG regardless of the sorting direction. Nevertheless, under
this intricate scenarios, our method still robustly recovers the true
DAG with a near-zero SHD.

\begin{table}[h]
\caption{\textbf{$\mathbf{\protect\varsort}$ as function of graph density}.
We compute $\texttt{Varsortability}$ \citep{Reisach_etall_21Beware}
for linear-Gaussian data on 30-node ER graphs (Section.~\ref{subsec:Linear-Data})
with varying expected in-degrees. The numbers are \textit{mean \textpm{}
standard errors} over 100 simulations.\label{tab:varsortability}}

\begin{centering}
\begin{tabular}{cc}
\toprule 
\textbf{Expected in-degree} & \textbf{$\texttt{\textbf{Varsortability}}$}\tabularnewline
\midrule
1 & $0.97\pm0.01$\tabularnewline
2 & $0.87\pm0.03$\tabularnewline
4 & $0.35\pm0.06$\tabularnewline
6 & $0.05\pm0.01$\tabularnewline
8 & $0.01\pm0.00$\tabularnewline
\bottomrule
\end{tabular}
\par\end{centering}
\end{table}

\begin{table}[h]
\caption{\textbf{Comparison with $\protect\sortnregress$ on dense data}. We
consider linear-Gaussian datasets on dense 30-node ER-8 graphs. The
numbers are \textit{mean \textpm{} standard errors} over 5 simulations.
\label{tab:sortnregress} }

\begin{centering}
\begin{tabular}{cccc}
\toprule 
\textbf{Method} & \textbf{SHD ($\downarrow$)} & \textbf{FDR ($\downarrow$)} & \textbf{TPR ($\uparrow$)}\tabularnewline
\midrule
$\sortnregress$ (decreasing variances) & $360.2\pm\phantom{0}2.2$ & 0.97\textpm 0.01 & 0.04\textpm 0.01\tabularnewline
$\sortnregress$ (increasing variances) & $\phantom{0}88.2\pm17.7$ & 0.27\textpm 0.05 & 0.94\textpm 0.01\tabularnewline
\midrule 
\textbf{$\ours$ (ours)} & \textbf{$\mathbf{\phantom{0}\phantom{0}0.2\pm\phantom{0}0.2}$} & \textbf{$\mathbf{0.00\pm0.00}$} & $\mathbf{1.00\pm0.00}$\tabularnewline
\bottomrule
\end{tabular}
\par\end{centering}
\end{table}

\subsection{Effect of Data Standardization}

Given the issues raised by \citet{Reisach_etall_21Beware} as discussed
in the previous section, it is interesting to study the performance
of causal discovery methods under data standardization, where the
variance ordering information is completely removed.

\textbf{Linear-Gaussian data.} For linear-Gaussian data, the causal
graph is identifiable under the equal-noise variance assumption \citep{Peters_etal_2014Causal}.
Hence, standardizing our linear-Gaussian data node-wise would render
the noise variances unequal and reduce the causal model to a general
linear-Gaussian model, where the causal graph is known to be \textit{unrecoverable}
\citep{Spirtes_etal_00Causation}. As such, it is expected that causal
discovery performance on standardized linear data is worse than its
non-standardized counterparts. However, in Table.~\ref{tab:linear-standardized}
we show that our $\ours$ method can still significantly outperform
the baselines under this setting with the lowest SHD, which is only
half of that of the second-best method, highlighting $\ours$'s robustness
against data standardization.

\textbf{Nonlinear data.} On the contrary with linear causal models,
nonlinear additive noise models are generally identifiable regardless
of unequal noise variances \citep{Hoyer_etal_2008Nonlinear}, meaning
standardizing data should have minimal influence on the causal discovery
performance. Indeed, in Table.~\ref{tab:linear-standardized} we
show that our $\ours$ method still performs equally well when the
data is standardized compared to when the data is not standardized
(Table.~\ref{tab:nonlinear}) with a very\textbf{ }low SHD of only
around 1, which is much lower relative to other baselines.

\begin{table}[h]

\caption{\textbf{Causal Discovery performance on standardized data}. We consider
linear-Gaussian data and nonlinear data with Gaussian processes. Observed
data is standardized dimension-wise to remove variance information.
For $\protect\ours$, we use the non-equal variances\textbf{ }BIC
version to reflect the data condition. The numbers are \textit{mean
\textpm{} standard errors} over 5 independent runs. \textbf{Bold}:
best performance, \uline{underline}: second-best performance.\label{tab:linear-standardized}}

\centering{}\resizebox{\columnwidth}{!}{%
\begin{tabular}{ccccccc}
\toprule 
 & \multicolumn{3}{c}{\textbf{Linear data (10-node ER-2 graphs)}} & \multicolumn{3}{c}{\textbf{Nonlinear data (10-node ER-4 graphs)}}\tabularnewline
\midrule 
\textbf{Method} & \textbf{SHD ($\downarrow$)} & \textbf{FDR ($\downarrow$)} & \textbf{TPR ($\uparrow$)} & \textbf{SHD ($\downarrow$)} & \textbf{FDR ($\downarrow$)} & \textbf{TPR ($\uparrow$)}\tabularnewline
\midrule
NOTEARS \citep{Zheng_etal_20Learning} & $\underline{10.2\pm2.0}$ & $\underline{0.39\pm0.04}$ & $\mathbf{0.84\pm0.02}$ & $7.8\pm2.3$ & $0.14\pm0.07$ & $0.91\pm0.01$\tabularnewline
DAGMA \citep{Bello_etal_22dagma} & $17.4\pm2.8$ & $0.70\pm0.08$ & $0.39\pm0.07$ & $29.8\pm5.5$ & $0.4\pm0.16$ & $0.28\pm0.15$\tabularnewline
COSMO \citep{Massidda_etal_2023Constraint} & $16.4\pm1.9$ & $0.74\pm0.06$ & $0.29\pm0.06$ & $30.8\pm4.9$ & $0.5\pm0.16$ & $0.27\pm0.12$\tabularnewline
\midrule 
\textbf{$\ours$ (ours)} & $\mathbf{\phantom{0}6.0\pm1.8}$ & $\mathbf{0.30\pm0.07}$ & $\underline{0.79\pm0.05}$ & $\mathbf{1.4\pm0.9}$ & $\mathbf{0.0\pm0.01}$ & $\mathbf{0.97\pm0.03}$\tabularnewline
\bottomrule
\end{tabular}}
\end{table}

\end{document}